\newcommand{\ignore}[1]{}
\newenvironment{prevproof}[2]{\noindent {\bf {Proof of {#1}~\ref{#2}:}}}{\hfill{$\blacksquare$}\vskip \belowdisplayskip}
\newtheorem{observation}{Observation}[section]
\newcommand{\bbP}{\mathbb{P}}
\newcommand{\bbQ}{\mathbb{Q}}
\def\Y{\mathcal{Y}}
\def\X{\mathcal{X}}
\def\L{\mathcal{L}}
\newcommand{\ind}{\mathbf{1}}
\newcommand{\dtv}{d_{\mathrm TV}}
\newcommand{\dk}{d_{\mathrm K}}
\newcommand{\R}{{\bf R}}
\newcommand{\E}{{\bf E}}
\newcommand{\EE}[1]{{\bf E}\left[ #1\right]}
\newcommand{\Var}{\mathrm{Var}}
\newcommand{\eps}{\epsilon}
\newcommand{\littlesum}{\mathop{{\textstyle \sum}}}
\newcommand{\littleprod}{\mathop{{\textstyle \prod}}}
\newcommand{\poly}{\mathrm{poly}}
\newcommand{\polylog}{\mathrm{polylog}}
\newcommand{\sob}{{\mathcal{S}}}
   \newcommand{\inote}[1]{\footnote{{\bf [[Ilias: {#1}\bf ]] }}}
   \newcommand{\remove}[1]{\par $<<${\it removed part}$>>$}
   \newcommand{\old}[1]{}
\title{Learning transformed product distributions}
\author{Constantinos Daskalakis\thanks{Research supported by NSF CAREER award CCF-0953960 and by a
Sloan Foundation Fellowship.}\\
MIT\\
{\tt costis@csail.mit.edu}
\And
Ilias Diakonikolas\thanks{Research supported by a Simons Foundation Postdoctoral Fellowship. Most of this work was done while at Columbia University, supported by NSF grant CCF-0728736, and by an Alexander S. Onassis Foundation Fellowship.}\\
UC Berkeley\\
{\tt ilias@cs.berkeley.edu}\\
\And
Rocco A. Servedio
\thanks{Supported by NSF grants CCF-0347282, CCF-0523664 and CNS-0716245, and
by DARPA award HR0011-08-1-0069.}\\
Columbia University\\
{\tt rocco@cs.columbia.edu}}
\begin{document}

\setcounter{page}{0}

\maketitle

\thispagestyle{empty}
\begin{abstract}


We consider the problem of learning an unknown product distribution
$X$ over $\{0,1\}^n$ using samples $f(X)$ where $f$ is a
\emph{known} transformation function.  Each choice of a transformation function
$f$ specifies a learning problem in this framework.

Information-theoretic arguments show that for every transformation function
$f$ the corresponding learning problem can be solved to accuracy $\eps$,
using $\tilde{O}(n/\eps^2)$ examples, by a generic algorithm whose running
time may be exponential in $n.$
We show that this learning problem can be computationally intractable
even for constant $\eps$ and rather simple transformation functions.
Moreover, the above sample complexity bound is nearly optimal for the general
problem, as we
give a simple explicit linear transformation function $f(x)=w
\cdot x$ with integer weights $w_i \leq n$  and prove that the
corresponding learning problem requires $\Omega(n)$ samples.




As our main positive result we give a highly efficient algorithm
for learning a sum of independent unknown Bernoulli random variables,
corresponding to the transformation function $f(x)= \sum_{i=1}^n x_i$.
Our algorithm learns to
$\eps$-accuracy in poly$(n)$ time, using a surprising poly$(1/\eps)$
number of samples that is independent of $n.$
We also give an efficient algorithm that uses $\log n \cdot \poly(1/\eps)$
samples but has running time that is only $\poly(\log n, 1/\eps).$

\ignore{
In contrast, we also
show that for a simple explicit weighted linear transformation function $f(x)=w
\cdot x$ with integer weights $w_i \leq n,$ the learning problem requires
$\Omega(n)$ samples.
Finally, we
also give efficient learning algorithms for certain restricted types of
linear transformation functions $f(x)=w \cdot x$,
}

\end{abstract}


\section{Introduction}

We consider the problem of learning an unknown product
distribution that has been transformed according to a known function $f$.
This is a simple and natural learning problem, but one which does not seem
to have been explicitly studied in a systematic way from a computational
learning theory perspective.

More precisely, in this paper we restrict our model to the natural case
when the input distribution is a a product distribution over the
Boolean cube $\{0,1\}^n$. In
this learning scenario the learner is provided with samples from the
random variable $f(X)$, where $X=(X_1,\dots,X_n)$ is a vector of
independent 0/1 Bernoulli random variables $X_i$ whose expectations are
unknown to the learner.  We write $\overline{p}=(p_1,\dots,p_n) \in
[0,1]^n$ to denote $\E[X]$, and refer to $\overline{p}$ as the
\emph{target vector of probabilities}; we shall sometimes write
$f(\overline{p})$ to denote the random variable $f(X)$ described above.
Using these samples, the learner must with probability $1 - \delta$
\footnote{For ease of exposition we state all our positive results
throughout the paper with $\delta$ fixed to 1/10.  All our results extend
to general $\delta$ with a $\log(1/\delta)$ overhead in sample
complexity.} output a hypothesis distribution
$\mathcal{H}$ over $f(\{0,1\}^n)$ such that
the total variation distance $\dtv(f(X),\mathcal{H})$ is at most $\eps$.
A \emph{proper} learning algorithm in this framework outputs
a hypothesis vector $\hat{p} \in [0,1]^n$ defining a hypothesis distribution $f(\hat{X})$, where
$\hat{X} = (\hat{X}_1,\dots,\hat{X}_n)$ is a vector of independent
0/1 Bernoulli random variables $\hat{X}_i$ whose expectation is
$\E[\hat{X}]=\hat{p}.$

%
%
%
%
%

We emphasize that in this learning scenario, the transformation function
$f$ is \emph{fixed and known to the learner}; the choice of a particular
transformation function $f$ specifies a particular learning problem in
this model, much as the choice of a concept class ${\cal C}$ specifies a
learning problem in Valiant's PAC learning model.  We will be interested
in both the computational complexity (running time) and sample complexity
(number of samples required) for algorithms that solve this problem, for
different transformation functions $f$.

\subsection{Motivation, examples, and connection to prior work}
\label{sec:priorexamples}

Our motivation for considering this model is twofold.  First, we feel that
it is so simple and natural as to warrant study for its own sake.
Second, we believe that it offers a useful perspective on
modeling probability distributions in settings where the
underlying source of randomness is not directly accessible to the learner.
In many settings we may wish to understand some phenomenon
(in the physical world, in a market, etc.) where the available observations
can be viewed as the output
of a transformation $f$ applied to some underlying random source $X$;
learning an accurate approximation of the distribution of $f(X)$ is a
natural goal in such a setting.  (The restriction on $X$ imposed in this
paper -- that it is a product of independent Bernoulli random variables --
admittedly represents an idealized scenario, but it is a natural starting
point for theoretical study.)  It is plausible that in such a situation
the transformation function $f$ may be well understood (as a consequence
of our knowledge of the laws governing the physical world, the
marketplace, etc.), but that much less is known about the parameters of
the underlying random variable $X$ (we may have no direct access to this
random variable, it could represent private information, etc.).  This
corresponds to our model's assumption that $f$ is ``known'' and the task
is to infer the parameters of $X$ that give rise to the observed data.

\smallskip

{\bf Examples:}  As a simple example to illustrate our learning model, we
consider the product distribution learning problem for $f$ where $f$ is
any read-once AND-gate function.  A function $f: \{0,1\}^n \to \{0,1\}^m,$ $f(x) =
(f_1(x),\dots,f_{m}(x))$ is a \emph{read-once AND-gate function} if each
$f_i(x)$ is an AND over some subset $S_i \subseteq [n]$ of the $n$ input
bits $x_1,\dots,x_n$ and the sets $S_1,\dots,S_m$ are pairwise disjoint.
It is not hard to see that
there is a straightforward proper learning algorithm based on linear
programming that succeeds for any read-once AND-gate function regardless of the
fanin of the AND gates (see Appendix~\ref{ap:andgatefunction}):

\begin{observation} \label{obs:andgate}

Let $f: \{0,1\}^n \to \{0,1\}^{m}$ be any fixed read-once AND-gate function
(known to the learner).  There is an
algorithm that uses poly$(n,1/\eps)$ samples from the target distribution
$f(\overline{p})$, runs in poly$(n,1/\eps)$ time, and with probability at
least 9/10 outputs a hypothesis vector $\hat{p}$ such that
$\dtv(f(\overline{p}),f(\hat{p}))\leq \eps.$

\end{observation}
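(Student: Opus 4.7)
The plan is to exploit the fact that because $S_1, \ldots, S_m$ are pairwise disjoint, the output bits $f_1(X), \ldots, f_m(X)$ are mutually \emph{independent} Bernoulli random variables; writing $q_i := \prod_{j \in S_i} p_j$ for the success probability of $f_i(X) = \prod_{j \in S_i} X_j$, the target distribution $f(X)$ is just a product distribution on $\zo^m$ parameterized by $(q_1,\dots,q_m)$. This decouples the learning task across coordinates and reduces it to estimating these marginals and then inverting the product relationship to produce a valid $\hat{p}$.

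First I would draw $M = \Theta((m/\eps)^2 \log m)$ samples from $f(X)$ and, for each $i \in [m]$, use the empirical frequency $\hat{q}_i$ of the $i$-th coordinate as an estimate of $q_i$. By a Hoeffding bound and a union bound over the $m \le n$ coordinates, with probability at least $9/10$ we have $|\hat{q}_i - q_i| \le \eps/(2m)$ for every $i$. Since $m \le n$, this uses $\poly(n,1/\eps)$ samples.

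Next I would reconstruct a hypothesis as follows: for each $i \in [m]$ set $\hat{p}_j = \hat{q}_i^{1/|S_i|}$ for every $j \in S_i$, and set $\hat{p}_j$ arbitrarily in $[0,1]$ for any index $j$ outside $\bigcup_i S_i$ (these coordinates do not affect $f$). By construction $f(\hat{p})$ is a product distribution on $\zo^m$ whose $i$-th marginal equals exactly $\hat{q}_i$. This is also the step where the linear-programming phrasing in the statement naturally appears: substituting $y_j = \log \hat{p}_j$ turns the constraints $\sum_{j \in S_i} y_j = \log \hat{q}_i$, $y_j \le 0$ into an LP over $\R^n$; disjointness of the $S_i$ simply makes an explicit closed-form solution available.

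Finally I would apply the standard tensorization bound $\dtv(P,Q) \le \sum_{i=1}^m \dtv(P_i, Q_i)$ for product distributions on $\zo^m$. This yields $\dtv(f(\overline{p}), f(\hat{p})) \le \sum_i |q_i - \hat{q}_i| \le m \cdot \eps/(2m) = \eps/2 \le \eps$, as required. The running time is clearly $\poly(n,1/\eps)$. There is no substantive obstacle here: the content of the argument is the observation that disjointness of the $S_i$ renders the output coordinates independent, after which everything reduces to standard Bernoulli estimation plus an elementary inversion.
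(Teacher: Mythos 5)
Your argument is correct and is in substance the same as the paper's: both estimate the output-bit marginals $P_i = \Pr[f_i(X)=1]$ to additive accuracy $O(\eps/m)$, construct a $\hat{p}$ whose induced marginals match the estimates, and then use the mutual independence of $f_1(X),\dots,f_m(X)$ (which holds precisely because the $S_i$ are pairwise disjoint) to tensorize total variation distance. The one real difference is the inversion step: the paper encodes the constraints $\log P_{i,-} \le \sum_{j\in S_i}\log\hat p_j \le \log P_{i,+}$ and $\log\hat p_j \le 0$ as a linear program and takes any feasible point, whereas you observe that disjointness yields the explicit solution $\hat p_j = \hat q_i^{1/|S_i|}$. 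Your closed form is a bit more elementary and equally correct; the LP phrasing offers nothing extra here beyond stylistic uniformity (and would still be the natural formulation if the $S_i$ overlapped, although then independence and with it the entire concluding bound would fail). Both routes give $\mathrm{poly}(n,1/\eps)$ sample and time complexity.
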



\ignore{
Kearns \emph{et al} \cite{KMR+:94} studied the learnability of the class
of all AND-gate circuits in their model (their results are stated for
OR-gate distributions, but the two are easily seen to be equivalent by
Boolean duality).  They gave an algorithm that learns an unknown AND-gate
distribution to perfect accuracy (i.e. the error $\eps$ is 0) in their
model in time poly$(n,k^k)$ if the fan-in $|S_i|$ of each AND gate in the
target is at most $k$. Their algorithm works by iteratively constructing
the correct AND gate for each successive output bit of the unknown
circuit.
}

As a second example, we point out that the transformed product
distribution learning model is broad enough to encompass
the problem of learning an unknown
\emph{mixture of $k$ product distributions} over $\{0,1\}^n$ that
was considered by \cite{FreundMansour:99, CGG:02, FOS:08}.
%
For simplicity we describe the
case $k=2$:  there are unknown product distributions
$\overline{p},\overline{q}$ over $\{0,1\}^n$ and unknown mixing weights
$\pi_p, \pi_q=1-\pi_p$.  The learner is given independent draws from the
mixture distribution (each draw is independently taken from $\overline{p}$
with probability $\pi_p$ and from $\overline{q}$ with probability
$\pi_q$), and must output hypothesis product distributions
$\hat{p},\hat{q}$ and hypothesis mixing weights $\hat{\pi}_p$,
$\hat{\pi}_q$.
This problem is easily seen to be equivalent to the transformed product
distribution learning problem for the function
$f: \{0,1\}^{2n+1} \to \{0,1\}^n$ which is such that on input
$(z,x_1,\dots,x_n,y_1,\dots,y_n) \in \{0,1\}^{2n+1}$ the $i$-th bit of
$f$'s output is $z x_i + (1 - z)y_i$.  It is easy to see that if the
target vector of probabilities for $f$ is $(\pi_p, p_1,\dots,p_n, q_1,
\dots q_n)$ then samples of $f$ are distributed exactly according to the
mixture of product distributions, and finding a good hypothesis vector in
$[0,1]^{2n+1}$ amounts to finding a hypothesis mixing weight $\hat{\pi}_p$
and hypothesis product distributions $\hat{p}, \hat{q}$ as required in the
original ``learning mixtures of product distributions'' problem.

\smallskip

{\bf Connection to prior work:}
The transformed product distribution learning model is related to
the PAC-style model of learning discrete probability
distributions that was introduced by \ignore{Kearns \emph{et al} }\cite{KMR+:94}
and studied in several subsequent works of \cite{Naor:96, ADFK:97,
FarachKannan:99, FreundMansour:99, CGG:02, FOS:08}.  In the \cite{KMR+:94}
framework a learning problem is defined by a class ${\cal C}$ of Boolean
circuits, and an instance of the problem corresponds to the choice of a
specific (unknown to the learner) target circuit $C \in {\cal C}$.  The
learner is given samples from $C(X)$ where $X$ is a uniform random string
from $\{0,1\}^m$, and the learner must with high probability output  a
hypothesis circuit $C'$ such that the random variable $C'(X)$ is
$\eps$-close to $C(X)$ (in KL-divergence).

Strictly speaking the transformed product distribution learning model
may be viewed as a special case of the Kearns \emph{et al} model.
This is done by considering a circuit class ${\cal C}$ that has
a circuit $C=C_{\overline{p}}$ for each possible
product distribution $\overline{p}$ over $\{0,1\}^n$; the circuit
$C_{\overline{p}}$
first transforms the uniform distribution over $\{0,1\}^m$ to the
product distribution $\overline{p}$ over $\{0,1\}^n$ and then applies
the transformation function $f$ to the output of $\overline{p}.$
However, learning problems ${\cal C}$ of this sort do not seem to have been
previously considered
in the \cite{KMR+:94} model, and we feel it is more natural to view
our model as dual in spirit to the earlier model.
In \cite{KMR+:94} the learner's task is to infer an \emph{unknown}
transformation (the circuit
$C$) into which are fed $n$-bit strings that are \emph{known} to be
distributed uniformly. In our case the transformation function $f$ is
\emph{known} to the
learner but the underlying product distribution that is fed into $f$ is
\emph{unknown} and must be inferred.

\ignore{

The above two examples both deal with the product distribution learning problem where $f$ is a multi-output Boolean-valued function.  While many natural and interesting questions about the model suggest themselves for such multi-output functions, we leave these
for future work; for most of this paper we focus on the case in which $f(X)$ is simply a real-valued random variable,  i.e. $f$ is a
transformation mapping $\{0,1\}^n$ to $\R.$

}

\subsection{Our results and techniques}

We establish a range of positive and negative results for this learning
problem, both for general functions and for particular
transformation functions of interest. For most of this paper we focus on the
case in which $f(X)$ is simply a real-valued random variable,  i.e. $f$ is a
transformation mapping $\{0,1\}^n$ to $\R.$

\ignore{ We feel that our model introduces fertile ground for future work
and that many interesting results can be obtained in the transformed
product distribution learning model beyond the results in this paper;
specific suggested research directions are given in
Section~\ref{sec:conclusion}. }

We begin by considering the most general possible setting, in which
the transformation function $f$ can be any
function mapping the domain $\{0,1\}^n$ into any range.
By an approach similar to the algorithm of
\cite{DL:01} for choosing a density estimate, we show (Theorem~\ref{thm:log-cover-size}) that
if the space $\{f(\overline{p})\}_{\overline{p} \in [0,1]^n}$ of
all $f$-transformed product distributions has an $\eps$-cover of size $N$,
then there is a generic learning algorithm for the $f$-transformed product
distribution problem that uses $O((\log N)/\eps^2)$ samples.
The algorithm works by carrying out a tournament that
matches every pair of distributions in the cover against each
other; our analysis shows that with high probability some
$\eps$-accurate distribution in the cover will survive the tournament
undefeated, and that any undefeated tournament will with high probability
be highly accurate.

\ignore{
A natural
first attempt to prove a result of this sort is to use a maximum
likelihood approach such as Theorem~16 of \cite{FOS:08}.  Such an approach
can indeed be used to prove a result of the desired type, but the sample
complexity resulting from this straightforward approach is poorer than we
would like, roughly $O((\log N)^3/\eps^4)$ (intuitively this is because
the maximum likelihood approach essentially works with KL-divergence, and
some loss is incurred in going to total variation distance). To avoid this
loss and obtain the stronger $O((\log N)/\eps^2)$ bound of
Theorem~\ref{thm:log-cover-size} (which is nearly optimal as discussed
below), we take a different approach.  Instead of a maximum likelhood
algorithm,}

As an immediate
consequence of the general result Theorem~\ref{thm:log-cover-size} we
get that for any transformation function $f$ there is an algorithm that
learns to accuracy $\eps$ using $\tilde{O}(n/\eps^2)$ samples:

\begin{theorem} [Information-theoretic upper bound for any $f$]
\label{thm:upper} Let $f: \{0,1\}^n \to \Omega$ be an arbitrary function where
$\Omega$ is any range set.  There is an algorithm
that uses $O((n/\eps^2) \cdot \log(n/\eps))$ samples from the target
distribution $f(\overline{p})$, runs in time $(n/\eps)^{O(n)}$, and with probability at least $9/10$
outputs a hypothesis vector $\hat{p}$ such that
$\dtv(f(\overline{p}),f(\hat{p})) \leq \eps.$ \end{theorem}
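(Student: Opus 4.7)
The plan is to deduce Theorem~\ref{thm:upper} directly from the generic cover result Theorem~\ref{thm:log-cover-size}. To apply that theorem I need only exhibit an $\eps$-cover of the space $\{f(\overline{p})\}_{\overline{p} \in [0,1]^n}$ of size $N$ with $\log N = O(n \log(n/\eps))$, which will yield the stated sample bound $O((n/\eps^2)\log(n/\eps))$.

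First I would build the cover by gridding the parameter cube. Let $G \subset [0,1]$ be the set of multiples of $\eps/n$ intersected with $[0,1]$, so $|G| \le n/\eps + 1$. Take the cover $\mathcal{C} = \{ f(\overline{q}) : \overline{q} \in G^n \}$, which has size at most $(n/\eps+1)^n$. To see this is an $\eps$-cover, fix an arbitrary $\overline{p} \in [0,1]^n$ and let $\overline{q} \in G^n$ be obtained by rounding each coordinate to the nearest grid point, so $|p_i - q_i| \le \eps/n$ for all $i$. The key two facts are (i) for product distributions on $\bn$, the subadditivity bound
\[
\dtv(\overline{p}, \overline{q}) \;\le\; \sum_{i=1}^n \dtv(\mathrm{Bern}(p_i), \mathrm{Bern}(q_i)) \;=\; \sum_{i=1}^n |p_i - q_i| \;\le\; \eps,
\]
which follows from coordinatewise coupling; and (ii) the data processing inequality, which says that applying any deterministic map $f$ cannot increase total variation distance, so $\dtv(f(\overline{p}), f(\overline{q})) \le \dtv(\overline{p}, \overline{q}) \le \eps$. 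Hence $\mathcal{C}$ is an $\eps$-cover of size $N \le (n/\eps+1)^n$, giving $\log N = O(n \log(n/\eps))$.

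Now I would simply invoke Theorem~\ref{thm:log-cover-size} on this cover. That result provides an algorithm that draws $O((\log N)/\eps^2) = O((n/\eps^2)\log(n/\eps))$ samples and runs the pairwise tournament over $\mathcal{C}$, returning with probability at least $9/10$ a hypothesis in $\mathcal{C}$ within total variation distance $O(\eps)$ of $f(\overline{p})$; rescaling $\eps$ by a constant gives the claimed accuracy. Since each output of the tournament lies in $\mathcal{C}$ it corresponds to some $\overline{q} \in G^n \subset [0,1]^n$, so we indeed output a hypothesis vector $\hat{p}$. The running time is dominated by the $\binom{N}{2}$ pairwise comparisons, each of which requires computing or estimating the statistic of Theorem~\ref{thm:log-cover-size} on the two candidate distributions $f(\overline{q}), f(\overline{q}')$; in the worst case this costs $2^{O(n)}$ per pair (to enumerate preimages under $f$), yielding the overall bound $N^2 \cdot 2^{O(n)} = (n/\eps)^{O(n)}$.

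There is no significant obstacle: the argument is essentially a packaging of the standard product-distribution TV bound plus the data processing inequality, combined with the already-established tournament lemma. The one point to be slightly careful about is that the cover must be constructible, so that the learner can actually enumerate $\mathcal{C}$; this is immediate here since $G^n$ is explicit. The only mild subtlety is ensuring that the tournament's per-comparison cost (which depends on the ability to compute probabilities of the hypothesis distributions) is subsumed by the stated $(n/\eps)^{O(n)}$ runtime bound, which is fine because $f$ can be evaluated on all $2^n$ inputs in $2^{O(n)}$ time.
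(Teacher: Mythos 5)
Your proof is correct and follows essentially the same route as the paper's: discretize each Bernoulli mean to a grid of resolution $\eps/n$, bound the total variation distance between product distributions by the sum of coordinatewise distances, apply the data processing inequality to push this through $f$, and then invoke the tournament algorithm of Theorem~\ref{thm:log-cover-size} on the resulting cover of size $(O(n/\eps))^n$. The only additions you make beyond the paper's brief argument are the explicit rescaling of $\eps$ by a constant (to absorb the factor $6$ in Theorem~\ref{thm:log-cover-size}) and a sketch of the $(n/\eps)^{O(n)}$ running-time bound, both of which are correct and harmless.
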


Since an $\eps$-cover of the space of
all $f$-transformed product distributions may have size
exponential in $n$, Theorem~\ref{thm:upper} does not
in general provide a computationally efficient algorithm.
Indeed, in Appendix~\ref{ap:nphard} we show that the learning
problem can be computationally hard even for rather simple
transformation functions:  using a reduction to the
PARTITION problem, we prove:

\begin{theorem} [NP-hardness] \label{thm:nphard}  Suppose NP $\not \subseteq$ BPP.  Then
there is an explicit degree-2 polynomial $f: \{0,1\}^n \to \R$ such that there is no
polynomial-time algorithm that solves the transformed product distribution
learning problem for $f$ to accuracy $\eps = 1/3.$ \end{theorem}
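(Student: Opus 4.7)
The plan is to reduce PARTITION to the learning problem using a degree-$2$ polynomial that measures the squared imbalance of a candidate partition. Given a PARTITION instance $(a_1,\ldots,a_n)$ of positive integers with $\sum_i a_i = 2T$ (we output NO immediately if $\sum_i a_i$ is odd), define
$$
f(x) \;=\; \Bigl(\littlesum_i a_i x_i - T\Bigr)^2.
$$
This is manifestly a degree-$2$ polynomial from $\zo^n$ to $\R$. Since the $a_i$ and $T$ are integers, $f(x)\in\Z_{\geq 0}$ with $f(x)=0$ iff $x$ encodes a valid partition and $f(x)\geq 1$ otherwise.

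The asymmetry to exploit is this: if a partition $x^\star \in \zo^n$ exists, then the product distribution $\overline{p}^\star := x^\star$ (each coordinate a degenerate Bernoulli) makes $f(\overline{p}^\star)$ the point mass at $0$; if no partition exists, then no $x\in\zo^n$ satisfies $\sum_i a_i x_i = T$, so \emph{any} sample from \emph{any} product distribution $\hat{p}$ fails to be a partition.

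The reduction then runs as follows: (i) feed the hypothesized poly-time $\eps=1/3$ learner a stream of samples identically equal to $0$ (which in the YES case is consistent with $f(\overline{p}^\star)$); (ii) obtain the learner's output $\hat{p}\in[0,1]^n$ and draw a constant number $M$ of independent samples $X^{(1)},\ldots,X^{(M)}$ from the product distribution with parameter vector $\hat{p}$; (iii) output YES iff some $X^{(j)}$ satisfies $\sum_i a_i X^{(j)}_i = T$.

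In the YES case, the learner is fed a bona fide sample stream from $f(\overline{p}^\star)$, so with probability $\geq 9/10$ we obtain $\hat{p}$ with $\dtv$ at most $1/3$ from the point mass at $0$, giving $\P_{X\sim\hat{p}}[X\text{ is a partition}]\geq 2/3$, and a constant $M$ suffices to find a partition with high probability. In the NO case, no partition exists, so step (iii) deterministically and correctly outputs NO regardless of what the learner does on the (now possibly invalid) all-zero input. The delicate point is exactly this asymmetry: in the NO case the learner's input need not come from $f(\overline{p})$ for any $\overline{p}$, so its output is \emph{a priori} unconstrained; but this is harmless because the verification in (iii) can succeed only on a genuine partition. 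The overall reduction runs in randomized polynomial time, so an efficient $\eps=1/3$ learner for $f$ would place PARTITION, and hence all of NP, in BPP.
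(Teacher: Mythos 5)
Your reduction has a genuine gap: the polynomial $f(x) = \bigl(\sum_i a_i x_i - T\bigr)^2$ is not a single fixed function but depends on the PARTITION instance $(a_1,\ldots,a_n,T)$. The theorem asserts that some single, fixed, explicit degree-$2$ polynomial is hard to learn; but in your construction each PARTITION instance yields its own $f$, and for any one fixed $f$ the associated PARTITION question is also fixed and hence trivially decidable by a hardcoded constant. Consequently your argument at best shows that a \emph{uniform} learner --- one accepting a description of $f$ as auxiliary input and working for all degree-$2$ polynomials --- would put PARTITION in BPP; it does not show that any single $f$ admits no polynomial-time learner, which is what the theorem demands. Formally, the negation of the theorem only gives you, for each $f$, some poly-time algorithm $L_f$, with no promise of an efficient map $f \mapsto L_f$, so your reduction cannot invoke the right learner for the instance at hand.

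The paper sidesteps this by encoding the PARTITION instance in the \emph{target product distribution} rather than in $f$. It uses a single $f$ on $m = n^2 + n$ inputs $(w,s)$, where $w \in \{0,1\}^{n^2}$ encodes $n$ numbers $W_1,\ldots,W_n$ and $s \in \{0,1\}^n$ encodes a candidate subset; the output of $f(w,s)$, read off in blocks, reveals both the $W_i$'s and the residual $\sum_i W_i (2 s_i - 1)$. For a given PARTITION instance $(W_1,\ldots,W_n)$, the adversarial $\overline{p}$ is a deterministic $0/1$ vector whose $w$-block encodes the instance and whose $s$-block (in the YES case) encodes a valid solution; this makes $f(\overline{p})$ a point mass at $(W_1,\ldots,W_n,0)$. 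The reduction feeds the learner that constant stream, reads off the $w$-block of $\hat{p}$ to verify it matches the given instance, and then samples the $s$-block of $\hat p$ to look for a genuine solution --- essentially your step (iii). Once the instance lives in $\overline{p}$ rather than in $f$, the whole argument works for the one fixed $f$, closing the gap.
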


We also show that even for a simple linear transformation function $f(x) = w \cdot x$ with small
integer weights, it can be impossible to significantly improve on the $\tilde{O}(n)$
sample complexity of the generic algorithm from Theorem~\ref{thm:upper}.  In Appendix~\ref{sec:infolower}
we prove:

\begin{theorem} [Sample complexity lower bound] \label{thm:linearlower}
Fix any even $k \leq n$ and
let $f(x) = \sum_{i=k/2+1}^k i x_i.$ Let $L$ be any learning algorithm
that outputs a hypothesis vector $\hat{p}$ such that
$\dtv(f(\overline{p}),f(\hat{p})) \leq 1/40$ with probability at least
$e^{-o(k)}.$ Then $L$ must use $\Omega(k)$ samples from $f(\overline{p}).$
\end{theorem}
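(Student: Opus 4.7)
The plan is a standard packing-based information-theoretic lower bound. First, I would exhibit a family $\mathcal{F} = \{\overline{p}^{(b)}\}_{b \in \mathcal{B}}$ of product distributions over $\{0,1\}^n$, indexed by a set $\mathcal{B}$ of size $|\mathcal{B}| = 2^{\Omega(k)}$, such that the $f$-transformed distributions $\{f(\overline{p}^{(b)})\}_{b \in \mathcal{B}}$ are pairwise at total variation distance at least $1/20$. A natural parameterization is $p_i^{(b)} = 1/2 + \delta (2 b_{i - k/2} - 1)$ for $i \in \{k/2+1, \ldots, k\}$ (with $p_i$ set to arbitrary fixed values outside this range, since $f$ ignores them) for some small constant $\delta$, possibly restricting to a code-like subset $\mathcal{B} \subseteq \{0,1\}^{k/2}$ to guarantee uniform pairwise separation after the weighting.

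Given such a packing, the second step is the standard observation that any learner outputting $\hat{p}$ with $\dtv(f(\overline{p}^{(b)}),f(\hat{p})) \le 1/40 = \eps$ uniquely determines $b$: by the triangle inequality and the $\ge 2\eps$ pairwise separation, $\hat{p}$ can be $\eps$-close to at most one element of the packing. So a successful learner acts as an identification procedure for a uniformly random $B \in \mathcal{B}$.

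The third step combines a per-sample information bound with a Fano/Sanov-style argument. By the data-processing inequality, the KL divergence between any two $f$-transforms is bounded by the product-distribution KL $\sum_i D(p_i^{(b)} \| p_i^{(b')})$, which decomposes coordinatewise and is at most $O(k \delta^2)$; by choosing $\delta = \Theta(1/\sqrt{k})$, this is $O(1)$. Then $I(B; Y_1,\ldots,Y_m) = O(m)$, and Fano's inequality gives identification success probability at most $(O(m)+1)/\log|\mathcal{B}|$. To strengthen this to the claimed $e^{-o(k)}$ bound (not merely the polynomial bound Fano supplies), I would pass to a Sanov-type large-deviation estimate, which yields that any decision rule based on $m = o(k)$ samples can identify a uniformly random $B$ with probability $e^{-\Omega(k)}$; an averaging argument then produces a worst-case $\overline{p}^\star$ witnessing the bound.

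The main obstacle is Step 1: showing that an $\Omega(1)$ TV separation holds for $2^{\Omega(k)}$ pairs of $f$-transforms whose per-coordinate parameter perturbations are only $\Theta(1/\sqrt{k})$. Since $f$ maps into an $O(k^2)$-size support and the typical standard deviation of $f(\overline{p}^{(b)})$ is $\Theta(k^{3/2})$, controlling the TV distance requires more than matching means; I would analyze the probability generating function $\prod_i \bigl((1-p_i^{(b)}) + p_i^{(b)} z^i\bigr)$ or rely on a Berry--Esseen/CLT approximation to identify Scheff\'e sets (e.g., intervals calibrated to the shifted Gaussian-like shape) on which the two distributions differ by a constant mass, and then use a probabilistic/code-based pruning of $\mathcal{B}$ to ensure this separation holds uniformly over the exponential packing.
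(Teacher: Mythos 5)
Your approach is genuinely different from the paper's, and it contains a gap that I believe is unfixable as stated rather than merely left unfinished.

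The core difficulty is precisely the one you flag at the end, and it is more severe than your proposal acknowledges. With $p_i = \tfrac12 \pm \delta$ and $\delta = \Theta(1/\sqrt{k})$, each summand $iX_i$ has standard deviation $\Theta(k)$, so $f(X) = \sum_{i=k/2+1}^k iX_i$ is approximately normal with standard deviation $\Theta(k^{3/2})$. Crucially, the variance $\sum_i i^2 p_i(1-p_i) = \sum_i i^2(\tfrac14 - \delta^2)$ is \emph{identical} for every codeword $b$, since it does not depend on the signs. Meanwhile the mean shift between two codewords $b, b'$ is $\sum_i i\,\delta(\sigma_i - \sigma_i')$; for generic difference patterns with $\Theta(k)$ differing coordinates of random sign, this has magnitude $\Theta(\sqrt{k}\cdot k \cdot \delta) = \Theta(k)$, which is a factor $\Theta(1/\sqrt{k})$ \emph{smaller} than the standard deviation. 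Two Gaussian-like distributions with identical variance whose means differ by $o(\sigma)$ have TV distance $o(1)$, so the $f$-transforms in your family are essentially indistinguishable for almost all pairs. To force the mean shift to be $\Omega(k^{3/2})$ you would need $\sum_{i:\sigma_i\neq\sigma_i'} i\,(\sigma_i - \sigma_i') = \Theta(k^2)$, and this sum is at most $k^2$ in absolute value, so the sign patterns would have to be almost perfectly aligned with the ordering of the weights for \emph{every pair} in the code; no exponential-size code can satisfy this. You cannot remedy this by raising $\delta$ either, because the per-sample KL would no longer be $O(1)$. So Step~1 fails, and the rest of the packing/Fano machinery never gets to run.

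The paper avoids the CLT regime entirely by working at the opposite, sparse extreme. It draws a random set $S\subset\{k/2+1,\dots,k\}$ of size $k/100$ and sets $p_i = 100/k$ for $i\in S$, $p_i=0$ otherwise, so a typical sample has $\Theta(1)$ nonzero coordinates. The decisive observation is arithmetic, not analytic: any two indices in $\{k/2+1,\dots,k\}$ sum to more than $k$, so if $f(X) = j$ with $j\in\{k/2+1,\dots,k\}$ then exactly one $X_j=1$ and we have learned that $j\in S$. This means each of the $m = k/2000$ samples reveals $O(1)$ coordinates of the hidden set (a Chernoff bound caps the total at $k/1000$ with probability $1 - e^{-\Omega(k)}$), leaving $\Omega(k)$ coordinates of $S$ uniformly random over a set of size $\Omega(k)$. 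A direct case analysis on the learner's output $\hat{p}$ then shows that whatever the learner does, it misses $\Omega(k)$ of the point masses $f(\overline{p})(j) > 35/k$ (or else over-assigns mass so badly that $f(\hat{p})(0)$ is far from $f(\overline{p})(0) > 0.35$), yielding TV error $\geq 1/40$ directly. This gives the $e^{-\Omega(k)}$-failure bound without any appeal to Fano or Sanov, buys a cleaner $\Omega(1)$ per-sample TV cost argument, and sidesteps the information-loss obstruction that dooms the near-uniform code construction.
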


These negative results provide strong motivation for considering what is perhaps the most natural of all
transformation functions mapping $\{0,1\}^n$ to $\R$, the sum
$f(x) = \sum_{i=1}^n x_i$; we
refer to the corresponding learning problem as ``learning an unknown sum
of Bernoulli random variables.''  Our main contribution is a detailed study of this
learning problem.

\medskip

\noindent {\bf Learning sums of Bernoullis from constantly many samples.}  As our
main result, we show that any sum of
independent unknown Bernoulli random variables can be efficiently approximated to
$\eps$-accuracy by a proper learning algorithm that uses $\poly(1/\eps)$
samples, independent of $n.$  More precisely, we prove:

\begin{theorem} [Learning sums of Bernoullis from constantly many samples] \label{thm:sob}
 Let $f(x) = \sum_{i=1}^n x_i$.  There is an algorithm that uses
poly$(1/\eps)$ samples from the target distribution $f(\overline{p})$,
runs in time $n^3 \cdot \poly(1/\eps) + n \cdot (1/\eps)^{O(\log^2(1/\eps))}$, and
with probability at least $9/10$ outputs a hypothesis vector $\hat{p} \in
[0,1]^n$ which is such that $\dtv(f(\overline{p}),f(\hat{p})) \leq \eps.$
\end{theorem}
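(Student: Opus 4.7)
The plan is to reduce to the tournament procedure of Theorem~\ref{thm:log-cover-size}, but applied to a cover whose log-size is $\polylog(1/\eps)$, i.e., \emph{independent of $n$}. This requires two ingredients: a structural characterization of the space of Poisson Binomial Distributions (PBDs), and a sample-based localization of the parameters of the unknown PBD before any cover is enumerated.

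The structural ingredient I would prove is a dichotomy for PBDs, driven by the variance $\sigma^2 = \sum_{i=1}^n p_i(1-p_i)$. I claim every PBD $S = \sum_i X_i$ is $\eps$-close in total variation to one of two simple forms: (a) when $\sigma$ exceeds a threshold $T = \poly(1/\eps)$, a \emph{shifted Binomial} $M + \mathrm{Bin}(\ell,q)$ with integer shift $M$, size $\ell \leq n$, and probability $q \in [0,1]$, obtained by matching the first two moments and invoking a local-central-limit / Berry--Esseen type argument to convert moment proximity into total variation proximity; (b) when $\sigma \leq T$, a \emph{sparse} PBD tightly concentrated on an interval of length $\poly(1/\eps)$ around its mean, whose non-trivial $p_i$'s can be rounded to a coarse grid of size $\poly(1/\eps)$. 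A moment-matching or Fourier argument bounds the number of distinct sparse forms on a fixed support window by $(1/\eps)^{O(\log^2(1/\eps))}$.

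With the dichotomy in hand, the algorithm has three stages. First, draw $m_0 = O(1/\eps^2)$ samples and compute the empirical mean $\hat\mu$ and variance $\hat\sigma^2$; standard concentration (Chebyshev on $\hat\sigma^2$, Bernstein on $\hat\mu$) gives $|\hat\mu-\mu| = O(\sigma+1)$ and $\hat\sigma^2 = (1 \pm O(\eps))\sigma^2$ with high probability. Second, based on whether $\hat\sigma$ exceeds the threshold, build a candidate set $\C$ consisting either of shifted Binomials with $(M,\ell,q)$ on a grid compatible with $(\hat\mu,\hat\sigma^2)$, or of sparse PBDs supported on a $\poly(1/\eps)$-length window around $\hat\mu$. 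The localization ensures $|\C| = (1/\eps)^{O(\log^2(1/\eps))}$ in both cases, crucially independent of $n$. Third, invoke Theorem~\ref{thm:log-cover-size} on $\C$ with an additional $O(\log |\C|/\eps^2) = \poly(1/\eps)$ samples to output a single $\hat p$ whose PBD is $O(\eps)$-close to the target. The two running-time terms stated in the theorem arise from the $O(|\C|^2)$ tournament matches combined with the $\poly(n)$ cost of evaluating each candidate PMF by the standard PBD dynamic program, plus the outer ranges $M \in \{0,\ldots,n\}$ and $\ell \leq n$.

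The main obstacle is the shifted-Binomial approximation in case (a): matching two moments does not automatically yield closeness in total variation, so one must argue via a local central limit theorem that both the true PBD and the candidate shifted Binomial are close to a common discretized Gaussian, and quantify this closeness carefully in $\eps$. A secondary difficulty is making sure that, even though $M$ and $\ell$ a priori range over sets of size $n$, knowledge of $(\hat\mu,\hat\sigma^2)$ pins the useful $(M,\ell,q)$ triples down to a $\poly(1/\eps)$-sized grid; this relies on showing that the grid spacing in $q$ needed for $\eps$-closeness in total variation scales like $\eps/\sigma$, which is precisely what makes the final sample complexity $n$-free.
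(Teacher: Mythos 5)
Your proposal is correct in spirit but takes a genuinely different route than the paper. The paper's algorithm never shrinks the cover; it enumerates the entire cover $\mathcal{S}_{\eps^\beta}$ of Theorem~\ref{thm: sparse cover theorem} (of size $n^3\cdot\poly(1/\eps)+n\cdot(1/\eps)^{O(\log^2(1/\eps))}$) and replaces the tournament of Theorem~\ref{thm:log-cover-size} with two $n$-free tests driven by a single DKW estimate of the target's CDF: a $\Delta$-test that compares PMF values over the $\poly(1/\eps)$-sized support of each sparse-form candidate, and an $H$-test that compares CDFs with each heavy-Binomial candidate. The $H$-test is legitimate only because of Lemma~\ref{lem: from kolmogorov to TV}, which shows TV and Kolmogorov distances are within a constant factor (plus $O(1/k)$) for heavy-Binomial forms; the paper proves this via Röllin's translated Poisson approximations, not Berry--Esseen. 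Your approach instead first \emph{localizes} the candidate set using empirical mean and variance and then feeds the shrunken set to the generic tournament. Both strategies avoid the $\log n$ factor in sample complexity, and yours would, if carried out, actually yield a cleaner $\widetilde{O}(1/\eps^2)$ sample bound versus the paper's $O(1/\eps^{8+\tau})$.

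There are two spots where your sketch is thinner than it should be. First, you gesture at Berry--Esseen to turn moment-matching into a TV bound, but Berry--Esseen controls only the Kolmogorov distance; you need either a local CLT with quantitative error or (as the paper does) the translated-Poisson coupling $\dtv(\mathrm{PBD},\ TP(\mu,\sigma^2)) = O(1/\sigma)$, which works once $\sigma \geq \Omega(1/\eps)$. Second, and more substantively, you assert that $(\hat\mu,\hat\sigma^2)$ pins $(M,\ell,q)$ to a $\poly(1/\eps)$-sized grid, but only (mean, variance) is pinned down to a $\poly(1/\eps)$-sized grid, and for a given grid point there is a whole one-parameter family of moment-matched shifted Binomials, not all of which are feasible as genuine PBDs over $\{0,1\}^n$ (you need $M\geq 0$, $M+\ell\leq n$, $q\in[0,1]$, and feasibility can fail when $\mu'\approx\sigma'^2$). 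These issues can be repaired, but you should flag them: the paper sidesteps them entirely by working with the explicit cover $\mathcal{S}_{\eps^\beta}$, every element of which is a valid PBD by construction, at the expense of a $\poly(n)$-size enumeration that must then be \emph{tested} rather than tournament-ranked.
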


\noindent
It should be stressed that the generic algorithm of Theorem~\ref{thm:log-cover-size} requires
$\Omega((1/\eps^2) \cdot \log n)$ samples for this learning problem (easy
arguments give an $n^{\Omega(1)}$ lower bound
on the size of any cover for sums of Bernoullis). We view this sample complexity independent of $n$ as a surprising result
which may find subsequent applications. We next give a brief overview of the obstacles that appear and the
techniques involved in our proof.


As a first step in the proof of Theorem~\ref{thm:sob}, we observe that a simple learning
algorithm using $O(1/\eps^2)$ samples gives a hypothesis
which has error at most $\eps$ with respect to the \emph{Kolmogorov
distance} (see Section~\ref{sec:prelim}). While the algorithm itself is
simple, its analysis relies on a fundamental result from probability
theory, known as the \emph{ Dvoretzky-Kiefer-Wolfowitz inequality}
(\cite{DKW56}), which may be viewed as a special case of the fundamental
Vapnik-Chervonenkis theorem (see Chapter~3 of \cite{DL:01}).  In Appendix~\ref{ap:DKW}
we give a self-contained proof of the DKW inequality using elementary techniques
(martingales and the method of bounded differences) and an interesting trick that goes back to
Kolmogorov (see \cite{Peres:Soda09}); this proof is significantly different from the proofs we know of
in the probability literature (see \cite{DKW56,Massart90}, and
Chapter~3 of \cite{DL:01}).

A natural attempt is to use Kolmogorov approximation as a black box
to obtain an approximation in total variation distance.  In the special
case when $X$ is a sum of Bernoulli random variables and $Y$ is a binomial
distribution $B(n,p)$ (i.e. all the Bernoullis $Y_1,\dots,Y_n$ have the
same mean $p$), it is indeed possible to bound the total variation
distance between $X$ and $Y$ as a function of their Kolmogorov distance.
It can be shown that in this case the distributions of $X$ and $Y$ cross
each other at most a constant number of times, and this is easily seen to
imply that the two distances (total variation and Kolmogorov) are within a
constant factor.  This fact about crossings goes back to an argument by
Newton (c.f. \cite{HLP34} section 2.22), establishing that the sequence
$a_k= \Pr[X=k]/\Pr[Y=k]$ is log-concave if $Y$ is binomially distributed.

Unfortunately, if $X$ and $Y$ are both generic sums of independent
Bernoullis with arbitrary means (as in our setting), then such a bound is
not known in the literature, and indeed it seems that essentially nothing
is known about the relation between the two distances in this general
case.  Without such a bound, it is rather unclear whether a number of
samples that does not scale with $n$ suffices to accurately learn in total
variation distance. Nevertheless, we extend the Kolmogorov distance
learning algorithm to total variation distance via a delicate algorithm
that exploits the detailed structure of a small $\eps$-cover
(\cite{DP:oblivious11,Daskalakis:anonymous08full}) of the space of all distributions
that are sums of independent Bernoulli random variables (see
Theorem~\ref{thm: sparse cover theorem}). Interestingly, this becomes
feasible by establishing an analog of the aforementioned argument by
Newton to a class of distributions used in the cover that are called {\em
heavy} (see Lemma~\ref{lem: from kolmogorov to TV}). This in turn relies on
probabilistic approximation results via {\em translated Poisson
distributions} (see
Definition~\ref{def:TPD},~\cite{Rollin:translatedPoissonApproximations}).


\medskip

\noindent {\bf Learning sums of Bernoullis in sublinear time.}
While the $\poly(1/\eps)$ sample complexity of Theorem~\ref{thm:sob} is essentially optimal\footnote{An easy reduction to distinguishing a fair coin from
an $\eps$-biased coin shows that any learning algorithm
for this problem needs $\Omega(1/\eps^2)$ samples.}, the running time is $\poly(n)$ (and super$\poly(1/\eps)$).
Moreover, generating a single sample from the hypothesis product
distribution requires $\Omega(n)$ uniformly random bits.
In general, any {\em proper} learning algorithm that explicitly
outputs a hypothesis vector $\hat{p} \in [0,1]^n$ will take
$\Omega(n)$ running time, and generating a sample from the hypothesis
$f(\hat{p})$ will require $\Omega(n)$ random bits.

More broadly, any algorithm (not necessarily proper) for learning
sums of Bernoullis
must have running time $\Omega((1/\eps^2)\cdot \log n)$ in the bit model
(since each sample is an $\Omega(\log n)$ bit string
and $\Omega(1/\eps^2)$ samples are needed).
Generating a sample from an arbitrary hypothesis distribution will in general require
$\Omega(\log n)$ bits
(since the entropy of the binomial distribution is $\Omega(\log n)$).
Hence a natural goal is to have a learning algorithm that runs in
$\poly(\log n, 1/\eps)$ time and requires $O(\log n)$  bits of randomness to generate
a draw from its hypothesis distribution. As discussed in the previous paragraph,
such an algorithm needs to be non-proper.

We show that there is an algorithm that satisfies all of these efficiency considerations, at the
cost of a $\log n$ factor in the sample complexity:

\begin{theorem} [Learning sums of Bernoullis in $\polylog(n)$ time with
efficient hypotheses] \label{thm:sob2}
 Let $f(x) = \sum_{i=1}^n x_i$.  There is an algorithm that uses
$\log(n) \cdot \poly(1/\eps)$ samples from the target distribution $f(\overline{p})$,
performs $\log^2(n) \cdot \poly(1/\eps)$ bit operations, and with probability at least $9/10$ outputs a
(succinct representation of a) hypothesis distribution $\mathcal{H}$ over $\{0,1,\ldots,n\}$
such that $\dtv(f(X),\mathcal{H}) \leq \eps.$  Moreover, a draw from the hypothesis distribution $\mathcal{H}$ can be
obtained in $\poly(\log n, 1/\eps)$ time using $O(\log(n/\eps))$ bits of randomness.
\end{theorem}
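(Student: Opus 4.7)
The plan is to leverage the structured $\eps$-cover $\mathcal{S}_\eps$ of Theorem~\ref{thm: sparse cover theorem}: every sum of independent Bernoullis is $\eps$-close to some element of $\mathcal{S}_\eps$, which is either in ``sparse form'' (supported on an interval of length $\poly(1/\eps)$) or in ``heavy form'' (a translated Poisson distribution $\mathrm{TPD}(\mu,\sigma^2)$ with $\mu,\sigma^2$ drawn from a discrete grid). Crucially, each form admits an $O(\log n+\polylog(1/\eps))$-bit description, which is what makes $\poly(\log n,1/\eps)$ running time conceivable in the first place.

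First I would draw $\log(n)\cdot\poly(1/\eps)$ samples and compute empirical estimates $\hat{\mu}$, $\hat{\sigma}^2$, and a $\poly(1/\eps)$-granularity empirical CDF $\hat{F}$. From these statistics I would, without materializing the full cover, enumerate a short list $\mathcal{C}\subseteq\mathcal{S}_\eps$ of $\poly(\log n,1/\eps)$ candidates: the heavy elements whose grid parameters $(\mu,\sigma^2)$ lie in a small neighborhood of $(\hat{\mu},\hat{\sigma}^2)$, together with the sparse elements whose support and probabilities are $\eps$-consistent with $\hat{F}$. Using the translated Poisson approximation theorem of \cite{Rollin:translatedPoissonApproximations} and the properties of the cover, I would argue that the cover element which is $\eps$-close to the target lies in $\mathcal{C}$ with high probability.

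Next I would apply the tournament underlying Theorem~\ref{thm:log-cover-size} to $\mathcal{C}$. Since $|\mathcal{C}|=\poly(\log n,1/\eps)$, this uses only $\poly(\log n,1/\eps)/\eps^2$ additional samples and $|\mathcal{C}|^2\cdot\poly(\log n,1/\eps)=\poly(\log n,1/\eps)$ bit operations, provided each pairwise matchup can be executed in $\poly(\log n,1/\eps)$ time; for two sparse forms the matchup reduces to a sum over a $\poly(1/\eps)$-size support, while for heavy forms it reduces to Poisson CDF evaluations computable to $\eps$ precision via a Gaussian-tail approximation. The resulting hypothesis $\mathcal{H}$ is $O(\eps)$-close to $f(\overline{p})$ in total variation distance by the same Kolmogorov-to-TV conversion used in Theorem~\ref{thm:sob} (Lemma~\ref{lem: from kolmogorov to TV}).

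Finally I would verify efficient sampling from $\mathcal{H}$. A sparse $\mathcal{H}$ supports direct inverse-CDF sampling over a $\poly(1/\eps)$-size domain using $O(\log(1/\eps))$ random bits. For a heavy $\mathcal{H}=\mathrm{TPD}(\mu,\sigma^2)$ I would sample a discretized Gaussian: use $O(\log(n/\eps))$ uniform bits to produce $u\in[0,1]$ to precision $\eps/n$, compute $\Phi^{-1}(u)$ to sufficient precision in $\poly(\log n,1/\eps)$ time, and output $\lfloor\mu+\sigma\Phi^{-1}(u)+\tfrac12\rfloor$. The main obstacle I expect is controlling the total loss along the chain target $\to$ cover element $\to$ discretized Gaussian $\to$ bounded-precision draw, uniformly over the parameter range used by heavy cover elements; this requires Berry--Esseen-type quantitative bounds on the TPD $\to$ Gaussian step, together with a discretization error analysis for the rounding step, arranged to give a single $O(\eps)$ bound.
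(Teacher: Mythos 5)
Your plan diverges completely from the paper's route. The paper's proof (Appendix~\ref{ap:birge}) is based on a single structural observation: any sum of independent Bernoullis is a \emph{unimodal} distribution over $\{0,\ldots,n\}$. It then invokes Birg\'e's algorithm for density estimation of unimodal distributions, which uses $O((\log n)/\eps^3)$ samples, runs in $O((\log^2 n)/\eps^3)$ bit operations, and outputs a histogram that is uniform over $O((\log n)/\eps)$ geometrically spaced intervals. That histogram is precisely the kind of succinct, efficiently samplable hypothesis that the theorem requires, so the theorem follows essentially for free. No cover, no tournament, no translated--Poisson analysis is needed. Your approach reuses the machinery of Theorem~\ref{thm:sob} (cover + tournament), which is far heavier and, as discussed below, does not actually achieve the claimed running time.

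There is a genuine running-time gap in your proposal, coming from the sparse-form portion of the cover. By Theorem~\ref{thm: sparse cover theorem}, the sparse-form elements of ${\cal S}_{\eps}$ number $n\cdot(1/\eps)^{O(\log^2(1/\eps))}$; even after you pin down the deterministic offset $m$ using the sample mean, you are still left with $(1/\eps)^{O(\log^2(1/\eps))}$ sparse-form candidates for a given offset. Filtering those down by ``$\eps$-consistency with $\hat F$'' cannot work: the entire point of the cover is that many distinct sparse parameter vectors induce distributions within $\eps$ of one another in total variation (and hence within $O(\eps)$ in Kolmogorov distance), so consistency with an $\eps$-accurate empirical CDF does not shrink the candidate set below this quasipolynomial count. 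Consequently $|\mathcal{C}|$ is $(1/\eps)^{O(\log^2(1/\eps))}$, not $\poly(\log n,1/\eps)$, and the tournament's $|\mathcal{C}|^2$ factor blows past the $\log^2(n)\cdot\poly(1/\eps)$ bit-operation bound of the theorem. (There is also a secondary issue: your appeal to Lemma~\ref{lem: from kolmogorov to TV} for correctness is misplaced once you run the tournament of Theorem~\ref{thm:log-cover-size}, since the tournament certifies TV closeness directly and Lemma~\ref{lem: from kolmogorov to TV} applies only to pairs of heavy Binomial forms.) If you want to salvage a cover-based argument, you would need a fundamentally different way to handle the sparse regime --- for example, estimating the target pdf pointwise over its $\poly(1/\eps)$-size support and outputting that histogram directly rather than searching the cover --- but at that point you have essentially reinvented the unimodal/histogram approach that the paper uses globally.
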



The key to Theorem~\ref{thm:sob2} is the simple observation that any sum of Bernoullis is a unimodal
distribution over the domain $\{0,1,\dots,n\}$.  This lets us apply a powerful algorithm due to \cite{Birge:97} that can learn any unimodal distribution to accuracy $\eps$
using $O(\log n)/\eps^3$ samples.  The algorithm outputs a hypothesis distribution that is a histogram over
$O((\log n)/\eps)$ intervals that cover $\{0,\dots,n\}$: more precisely, the hypothesis is uniform within each interval,
and for each interval the total mass it assigns to the interval is simply the fraction of
samples that landed in that interval. Thus, the hypothesis distribution has a succinct description and can be efficiently evaluated
using a small amount of randomness. We give details in Appendix~\ref{ap:birge}.


We remark that by applying the algorithm of Theorem~\ref{thm:sob2} to the hypothesis distribution
that is provided by Theorem~\ref{thm:sob},
one can obtain a $2\eps$-accurate hypothesis satisfying the efficiency conditions of
Theorem~\ref{thm:sob2} (i.e. the hypothesis can be evaluated in poly-logarithmic time
and a sample from it can be generated using $O(\log(n/\eps))$ random bits).
Thus, it is possible to construct an $\eps$-accurate \emph{efficient} hypothesis using
$\poly(1/\eps)$ samples and $\poly(n)$ time. Whether this can be achieved in
\emph{poly-logarithmic} time is an interesting and challenging open problem; we discuss
this and other questions for future work in Section~\ref{sec:conclusion}.

\ignore{

In general the algorithm of Theorem~\ref{thm:log-cover-size} can
have running time exponential in $n$ is not computationally
efficient in general.  However, we can show that it can be implemented
efficiently in some interesting cases.  By exploiting the explicit
$\eps$-cover for the space of all sums of Bernoulli random variables
mentioned above (Theorem~\ref{thm: sparse cover theorem}), we obtain a
computationally efficient algorithm with small -- but non-constant --
sample complexity for a broader class of linear transformation functions
than just $f(x) = \littlesum_{i=1}^n x_i$:

}

\section{Preliminaries} \label{sec:prelim}

Recall that the {\em total variation distance} between two distributions
$\mathbb{P}$ and $\mathbb{Q}$ over a finite domain $D$ is
$\dtv\left(\mathbb{P},\mathbb{Q} \right):= (1/2)\cdot \littlesum_{\alpha
\in D}{|\mathbb{P}(\alpha)-\mathbb{Q}(\alpha)|}.$ Similarly, if $X$ and
$Y$ are two random variables ranging over a finite set, their total
variation distance $\dtv(X,Y)$ is defined as the total variation
distance between their distributions. Another notion of distance between
distributions/random variables that we use is the {\em Kolmogorov
distance}. For two distributions $\mathbb{P}$ and $\mathbb{Q}$ supported
on $\R$, their Kolmogorov distance is 
$\dk\left(\mathbb{P},\mathbb{Q} \right):= \sup_{x \in \R} \left|
\mathbb{P}( (-\infty, x])-\mathbb{Q}( (-\infty, x]) \right|.$ Similarly,
if $X$ and $Y$ are two random variables ranging over a subset of $\R$,
their Kolmogorov distance, denoted $\dk(X,Y),$ is the Kolmogorov distance
between their distributions. If two distributions $\mathbb{P}$ and
$\mathbb{Q}$ are supported on a finite subset of $\R$ we obtain
immediately that $\dk\left(\mathbb{P},\mathbb{Q}
\right) \le 2 \cdot \dtv\left(\mathbb{P},\mathbb{Q} \right).$

Fix a finite domain $D$, and let ${\cal P}$ denote some set of
distributions over $D.$ Given $\delta > 0$, a subset ${\cal Q} \subseteq
{\cal P}$ is said to be a \emph{$\delta$-cover of ${\cal P}$} (w.r.t.
total variation distance) if for every $\mathbb{P}$ in ${\cal P}$ there
exists some $\mathbb{Q}$ in ${\cal Q}$ such that
$\dtv(\mathbb{P},\mathbb{Q}) \leq \delta.$

We write $\sob = \sob_n $ to denote the set of all product distributions
over $\{0,1\}^n$, and $f(\sob)$ to denote the set of all transformed
product distributions $\{f(\overline{p})\}_{\overline{p} \in \sob}.$ We
write $f(\overline{p})(\alpha)$ to denote the probability of outcome
$\alpha$ under distribution $f(\overline{p}).$ Finally, for $\ell \in
\mathbb{Z}^+$ we write $[\ell]$ to denote $\{1,\dots,\ell\}$.

\section{A generic algorithm for any $f$ and some lower bounds} \label{sec:info}

In this section we give a simple generic algorithm to solve the transformed product distribution learning problem for any transformation function $f$, and state some lower bounds showing that even for rather simple functions $f$, the time and sample complexity of the generic algorithm may be
essentially the best possible.

\subsection{A generic algorithm} \label{subsec:genericalg}

The key ingredient in the generic algorithm is the following:


\begin{theorem} \label{thm:log-cover-size}
Fix a function $f: \{0,1\}^n \to \Omega$ where $\Omega$ is any range set. Suppose there exists a $\delta$-cover for $f(\cal{S})$ of size $N = N(n,\delta)$. Then
there is an algorithm that uses $O(\delta^{-2}\log N)$ samples and solves the $f$-transformed product distribution learning problem to accuracy $6\delta$.
\end{theorem}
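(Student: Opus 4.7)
The plan is to instantiate the Devroye--Lugosi ``Scheff\'e tournament'' paradigm for hypothesis selection from a cover. Let $\mathcal{Q} = \{Q_1,\ldots,Q_N\}$ denote the given $\delta$-cover of $f(\sob)$ and let $P = f(\overline{p})$ be the target distribution. The algorithm draws $m = O(\delta^{-2}\log N)$ i.i.d.\ samples from $P$ to form the empirical distribution $\hat P$ and then runs a pairwise tournament. For each pair $\{i,j\}$ define the Scheff\'e set $A_{ij} = \{\alpha : Q_i(\alpha) > Q_j(\alpha)\}$, on which $Q_i(A_{ij}) - Q_j(A_{ij}) = \dtv(Q_i,Q_j)$. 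The ``combat'' between $Q_i$ and $Q_j$ is decided by comparing $\hat P(A_{ij})$ to the two model values: whichever of $Q_i(A_{ij})$, $Q_j(A_{ij})$ is closer to $\hat P(A_{ij})$ wins.

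The analysis has two clean steps. First, for any single Scheff\'e set Hoeffding's inequality gives $|\hat P(A_{ij}) - P(A_{ij})| \le \delta$ with probability $1 - 2\exp(-2m\delta^2)$; taking $m = C\delta^{-2}\log N$ for a sufficiently large absolute constant $C$ and union-bounding over the at most $\binom{N}{2}$ pairs yields that this estimate holds simultaneously for all pairs with probability at least $9/10$. Condition on this event. Second, fix any $Q^* \in \mathcal{Q}$ with $\dtv(P,Q^*) \le \delta$ (which exists by the cover property). If $Q_j \in \mathcal{Q}$ satisfies $\dtv(P, Q_j) > 6\delta$ then by the triangle inequality $\dtv(Q^*, Q_j) > 5\delta$, so on their Scheff\'e set $A$ we have $|Q^*(A) - Q_j(A)| > 5\delta$ while $|\hat P(A) - Q^*(A)| \le |\hat P(A) - P(A)| + \dtv(P, Q^*) \le 2\delta$; hence $\hat P(A)$ is closer to $Q^*(A)$ than to $Q_j(A)$, i.e.\ $Q^*$ beats $Q_j$ in combat. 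Thus every ``bad'' candidate (distance $> 6\delta$ from $P$) loses at least one combat.

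The one place that needs care---and what I expect to be the only real obstacle---is that $Q^*$ itself need not be undefeated, since it can lose combats to very close cover neighbors; so the set of undefeated candidates could a priori be empty. The clean remedy is to switch the output rule to the minimum-distance estimator: output $Q_i$ minimizing $\max_j |\hat P(A_{ij}) - Q_i(A_{ij})|$. Under the good event $Q^*$ achieves value at most $2\delta$ against every $Q_j$, while the computation above shows that every $Q_j$ with $\dtv(P,Q_j) > 6\delta$ has value exceeding $3\delta$ against $Q^*$. Hence the minimizer $\hat Q$ has $|\hat P(A_{\hat Q,Q^*}) - \hat Q(A_{\hat Q,Q^*})| \le 2\delta$; combining with $|\hat P(A_{\hat Q,Q^*}) - Q^*(A_{\hat Q,Q^*})| \le 2\delta$ and reading $\dtv(\hat Q, Q^*)$ off the Scheff\'e set yields $\dtv(\hat Q, Q^*) \le 4\delta$ and therefore $\dtv(P,\hat Q) \le 5\delta < 6\delta$ by the triangle inequality. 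All remaining work is collecting constants.
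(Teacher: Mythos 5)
Your proof is correct, but it follows a genuinely different route from the paper's. You adopt the classical Yatracos/Devroye--Lugosi \emph{minimum-distance} selection rule: output the candidate $Q_i$ minimizing $\max_j |\hat P(A_{ij}) - Q_i(A_{ij})|$. The paper instead defines a deliberately \emph{symmetric} pairwise competition with an explicit ``draw'' outcome --- a draw is declared both when $\dtv(f(\bbQ_1),f(\bbQ_2)) \le 5\delta$ and when the empirical fraction $T(s)$ lands in the buffer zone $[q_1 + \tfrac{3}{2}\delta,\; p_1 - \tfrac{3}{2}\delta]$ --- and then outputs any candidate that is \emph{never defeated}. The obstacle you correctly flag (that a $\delta$-close $Q^*$ may ``lose'' head-to-head comparisons against nearby cover points, so an undefeated element need not exist) is exactly what the paper's draw rule is engineered to prevent: on the good event, $Q^*$ either wins or draws every competition, since close pairs are automatically drawn and the buffer zone absorbs estimation noise. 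Both strategies need the same $O(\delta^{-2}\log N)$ samples (the union bound is over $\Theta(N^2)$ Scheff\'e sets either way) and reach the stated $6\delta$ guarantee; your version even squeezes out $5\delta$. The paper notes that its nonstandard design choices (symmetric competition, undefeated-rather-than-argmin output) are intentional departures from Devroye--Lugosi, so you have essentially reconstructed the classical ancestor of their argument while they opted for a tournament variant they found cleaner to state.
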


The high-level idea behind Theorem~\ref{thm:log-cover-size} is as follows: for a pair of distributions $\bbQ_1, \bbQ_2 \in \mathcal{S}$, we define a {\em competition between $\bbQ_1$ and $\bbQ_2$} that takes as input a sample from the target distribution $f(X)$  and either crowns one of $\bbQ_1,\bbQ_2$ as the winner of the competition or calls the competition a draw.  Let $\mathcal{Q} \subseteq
\mathcal{S}$ be a $\delta$-cover for $f(\mathcal{S})$ of cardinality $N = N(n, \delta)$. The algorithm performs a tournament
between every pair of distributions from ${\cal Q}$ and outputs a distribution $\bbQ^{\ast} \in {\cal Q}$ that was never a loser, i.e. won or achieved a draw in all competitions. (If no such distribution exists, the algorithm outputs ``failure.'')

This basic approach of running a tournament between distributions in an $\delta$-cover is quite similar
to the algorithm of Devroye and Lugosi for choosing a density estimate (see \cite{DL96,DL97}
and Chapters~6 and~7 of \cite{DL:01}), which in turn built closely on the work of \cite{Yatracos85}.  Our algorithm achieves essentially the same bounds as these earlier approaches but there are some small
differences. (The DL approach uses a notion of the ``competition'' between two
tournaments which is not symmetric under swapping the two competing tournaments, whereas
our competition is symmetric; also, the DL approach chooses a distribution which wins the
maximum number of competitions as the output distribution, whereas our algorithm chooses
a distribution that is never defeated.)  We give our proof of Theorem~\ref{thm:log-cover-size}
in Appendix~\ref{ap:tournament}.

Theorem~\ref{thm:upper} is an easy
consequence of Theorem~\ref{thm:log-cover-size}.  Recall Theorem~\ref{thm:upper}:

\medskip

\noindent {\bf Theorem~\ref{thm:upper} (Information-theoretic upper bound for any $f$)}  \emph{
Let $f: \{0,1\}^n \to \Omega$ be an arbitrary function where
$\Omega$ is any range set.  There is an algorithm
that uses $O((n/\eps^2) \cdot \log(n/\eps))$ samples from the target
distribution $f(\overline{p})$, runs in time $(n/\eps)^{O(n)}$, and with probability at least $9/10$
outputs a hypothesis vector $\hat{p}$ such that
$\dtv(f(\overline{p}),f(\hat{p})) \leq \eps.$
}

\medskip

\begin{proof}
We argue that for any $f$ there is a $\delta$-cover for $f(\sob)$ of size at most $(n/\delta)^n$. The desired result
then follows from Theorem~\ref{thm:log-cover-size}.  Since for any pair of distributions $\bbP,\bbQ \in \sob$ and any function $f$ we
have $\dtv(f(\bbP), f(\bbQ)) \leq \dtv(\bbP,\bbQ)$, it suffices to exhibit a $\delta$-cover of the desired
cardinality for $\sob$.

We claim that if we discretize each individual expectation of our input Bernoulli random variables to integer multiples of $\alpha := {\delta \over n}$,
we obtain a $ \delta$-cover for $\sob$. Let us call the set of all such discretized product
distributions $\cal Q$. Clearly, $|{\cal Q}| \le \left( {n \over \delta} \right)^n$. Let $\bbP = (\bbP_1, \ldots, \bbP_n) \in \sob$.
Consider a point $\bbQ = (\bbQ_1, \ldots, \bbQ_n) \in \mathcal{Q}$ such that $\dtv(\bbQ_i, \bbP_i) \leq \delta/n$ for all $i$.
Since both $\bbP$ and $\bbQ$ are product distributions, we have that $\dtv(\bbP, \bbQ) \leq
\littlesum_{i=1}^n \dtv(\bbQ_i, \bbP_i) \leq \delta$. This completes the proof.
\end{proof}

\subsection{Learning transformed product distributions can be computationally hard} \label{sec:nphard}

Though Theorem~\ref{thm:upper} shows that any learning problem $f$
in our framework can be solved with $\tilde{O}(n)$ sample complexity, it is natural to expect that some
learning problems can be \emph{computationally} hard.  We confirm this
intuition by establishing an NP-hardness result for a specific function $f$ that is computed by an explicit degree-2 polynomial.  We show that if there is a poly$(n)$-time algorithm for the transformed product distribution learning problem for this $f$, even for learning to constant accuracy, then $NP \subseteq BPP$.  Recall Theorem~\ref{thm:nphard}:

\medskip

\noindent {\bf Theorem~\ref{thm:nphard}}  \emph{
Suppose NP $\not \subseteq$ BPP.  Then there is an explicit
degree-2 polynomial $f$
such that there is no polynomial-time algorithm
that solves the transformed product distribution learning problem for $f$ to accuracy
$\eps = 1/3.$}

\medskip

The proof is a reduction from the PARTITION problem and is given in Appendix~\ref{ap:nphard}.

\subsection{Linear transformation functions can require $\Omega(n)$ samples} \label{subsec:samplelower}

We now show that even for a simple linear transformation function $f(x) = w \cdot x$ with small
integer weights, it can be impossible to significantly improve on the $\tilde{O}(n)$
sample complexity of the generic algorithm from Theorem~\ref{thm:upper}.  Recall Theorem~\ref{thm:linearlower}:

\medskip

\noindent {\bf
Theorem~\ref{thm:linearlower} (Sample complexity lower bound)}
\emph{
Fix any even $k \leq n$ and
let $f(x) = \sum_{i=k/2+1}^k i x_i.$ Let $L$ be any learning algorithm
that outputs a hypothesis vector $\hat{p}$ such that
$\dtv(f(\overline{p}),f(\hat{p})) \leq 1/40$ with probability at least
$e^{-o(k)}.$ Then $L$ must use $\Omega(k)$ samples from $f(\overline{p}).$}

\medskip

Theorem~\ref{thm:linearlower} is proved in Appendix~\ref{sec:infolower}.


\section{Learning an unknown sum of Bernoullis from $\poly(1/\eps)$ samples} \label{sec:SOB}

\subsection{Learning with respect to Kolmogorov distance} \label{sec:kolmogorov}

Let $X$ be any random variable supported on $\{0,1,\dots,n\}.$ We write  $F_X$ and $f_X$ to denote respectively the cumulative
distribution and  the probability density function of $X$.

Let $Z_1,\ldots,Z_k$ be independent samples of the random variable $X$, and define $Z_i^{(\ell)}:=\ind_{Z_i \le \ell}$ for all
$\ell=0,\ldots,n$ and $i=1,\ldots,k$. Clearly we have $\mathbf{E}\big[\littlesum_iZ_i^{(\ell)} / k\big] = F_X(\ell)$, which
suggests that $\hat{F}_X(\ell) := {\sum_iZ_i^{(\ell)} / k}$ may be a good estimator of $F_X(\ell)$ for all values of $\ell$, if
$k$ is large enough.  The Dvoretzky-Kiefer-Wolfowitz inequality (\cite{DKW56,Massart90}) confirms this, and in fact
shows that a surprisingly small value of $k$ -- independent of $n$ -- suffices.  The bound on $k$ given below is optimal up to
constant factors.

\begin{theorem}[DKW Inequality] \label{thm:cumulative distribution approximation}
Let $k=\max\{576,(9/8)\ln(1/\delta)\}\cdot (1/\eps^2).$  Then with probability at least $1 - \delta$ we have $\max_{0\le \ell \le n}\big|\hat{F}_X(\ell) - {F}_X(\ell)\big| \le \epsilon.$
\end{theorem}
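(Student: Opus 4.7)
The plan is to decompose the proof into a concentration step, via McDiarmid's method of bounded differences, and a dimension-free bound on the expectation of the Kolmogorov statistic
$$W := \max_{0 \le \ell \le n}\bigl|\hat F_X(\ell) - F_X(\ell)\bigr|,$$
via a discretisation trick that goes back to Kolmogorov.

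\textbf{Step 1 (Bounded-differences concentration).} I would view $W$ as a function $g(Z_1,\dots,Z_k)$ of the $k$ i.i.d.\ samples. Replacing any single $Z_i$ changes the count $\littlesum_j\ind_{Z_j \le \ell}$ by at most $1$ simultaneously for every $\ell$, hence shifts each $\hat F_X(\ell)$ by at most $1/k$, and therefore $W$ itself by at most $1/k$. McDiarmid's inequality then yields
$$\Pr\bigl[W \ge \mathbf{E}[W] + t\bigr] \le \exp(-2kt^2).$$
Choosing $t = \eps/2$ and $k \ge (9/8)\ln(1/\delta)/\eps^2$ makes this tail at most $\delta$, \emph{provided} we can also show $\mathbf{E}[W] \le \eps/2$.

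\textbf{Step 2 (Kolmogorov's skeleton trick for the expectation).} A naive union bound over all $n+1$ values of $\ell$ would introduce an unwanted $\log n$ factor. Instead, I would choose a data-adapted skeleton $-1 = \ell_0 < \ell_1 < \cdots < \ell_m = n$ with $m = O(1/\eps)$ so that $F_X(\ell_{j+1}) - F_X(\ell_j) \le \eps/4$ for every $j$. Such a skeleton exists because $F_X$ is a non-decreasing step function with range in $[0,1]$: greedily place a new breakpoint each time the accumulated mass exceeds $\eps/4$, while adding both the point just before and just after each large jump so that nothing is missed at a jump. By monotonicity of both $F_X$ and $\hat F_X$, for any $\ell\in[\ell_j,\ell_{j+1})$,
$$|\hat F_X(\ell) - F_X(\ell)| \le \max\bigl\{|\hat F_X(\ell_j) - F_X(\ell_j)|,\,|\hat F_X(\ell_{j+1}) - F_X(\ell_{j+1})|\bigr\} + \eps/4.$$
Each $\hat F_X(\ell_j)$ is an average of $k$ independent Bernoullis, so Hoeffding combined with a sub-Gaussian maximal inequality over the $m+1$ skeleton points gives $\mathbf{E}[\max_j|\hat F_X(\ell_j) - F_X(\ell_j)|] = O\bigl(\sqrt{\log(1/\eps)/k}\bigr)$, which for $k \ge 576/\eps^2$ sits comfortably below $\eps/4$. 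Hence $\mathbf{E}[W] \le \eps/2$.

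\textbf{Step 3 (Combining).} Putting the two pieces together with $k = \max\{576,(9/8)\ln(1/\delta)\}/\eps^2$ yields $\Pr[W \ge \eps] \le \delta$, which is the claim.

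\textbf{Main obstacle.} The delicate part is Step 2: the whole point of Kolmogorov's trick is that one should need only $O(1/\eps)$ skeleton points rather than $O(n)$. Monotonicity of the two CDFs is what makes the sandwiching between skeleton endpoints work, but one must handle the discrete jumps of $F_X$ with care (which side of each large jump to include, and whether to use closed or half-open intervals), and check that the $\sqrt{\log(1/\eps)}$ overhead from the maximal inequality is absorbed by the constant $576$ in the theorem statement.
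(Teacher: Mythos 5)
Your Step 1 (McDiarmid applied to the $1/k$-Lipschitz function $g(Z_1,\dots,Z_k) = \max_\ell|\hat F_X(\ell)-F_X(\ell)|$) and Step 3 (combining a sub-Gaussian tail with a bound $\mathbf{E}[W]\le\eps/2$) coincide with the paper's argument. Step 2 is where the two proofs diverge, and where your proposal has a genuine gap. The paper does not bound $\mathbf{E}[W]$ by a skeleton-plus-union-bound. It first proves a weak form of the inequality with sample complexity $\Theta(1/(\delta\eps^2))$: working separately on the left and right halves of the support, it embeds $\Lambda_\ell=\hat F_X(\ell)-F_X(\ell)$ into a (stopped) martingale $\Omega_\ell=\Lambda_\ell/(1-F_X(\ell))$, so that the event $\{\max_\ell\Lambda_\ell>\eps\}$ is controlled by the variance of the \emph{single} terminal term via Chebyshev. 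No union bound over $\ell$ or over any skeleton ever appears, so no $\log$ enters. It then bounds $\mathbf{E}[W]\le\eps/2$ for $k\ge 256/\eps^2$ by feeding this weak inequality into a dyadic layer-cake sum $\mathbf{E}[W]\le\eps/4+\sum_{i\ge1}(2^i\eps/4)\Pr[W>2^{i-1}\eps/4]$, which telescopes to a constant.

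Your Step 2 instead uses a single-level $\eps/4$-skeleton of $m=O(1/\eps)$ points followed by a sub-Gaussian maximal inequality over those $m$ points. That step unavoidably pays $\sqrt{\log m}$: with each $\hat F_X(\ell_j)-F_X(\ell_j)$ being $1/(4k)$-sub-Gaussian, the best the naive maximal inequality gives is $\mathbf{E}\bigl[\max_j|\hat F_X(\ell_j)-F_X(\ell_j)|\bigr]=\Theta\bigl(\sqrt{\log(1/\eps)/k}\bigr)$. At $k=576/\eps^2$ this is $\Theta\bigl(\eps\sqrt{\log(1/\eps)}\bigr)$, which exceeds $\eps/4$ once $\eps$ is small enough (roughly $\eps\lesssim e^{-36}$); the constant $576$ in the statement is an absolute constant while $\log(1/\eps)\to\infty$, so the overhead is \emph{not} absorbed uniformly in $\eps$. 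As written, your argument only yields the theorem with $k\gtrsim\log(1/\eps)/\eps^2$, a strictly weaker statement. To recover a $\log$-free bound within the spirit of your approach you would need multi-scale chaining that exploits the correlation among the $\hat F_X(\ell_j)$ (increments across finer scales have proportionally smaller variance, and $\sum_i\sqrt{i\,2^{-i}}$ converges), which is essentially what the paper's martingale construction achieves in one shot rather than level by level. Your skeleton construction itself, including the care around jumps of $F_X$ exceeding $\eps/4$, is fine; it is the final union bound over the skeleton that loses.
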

In Appendix~\ref{ap:DKW} we give a self-contained proof of the theorem using elementary techniques (martingales and the method of bounded differences) and an interesting trick that goes back to Kolmogorov (see \cite{Peres:Soda09}). We start by defining a coupling between the process of learning the cumulative distribution function as our samples are revealed and a random walk on the line. Then Kolmogorov's trick is invoked to get a handle on the maximum estimation error, proving a weaker version of the theorem in which $k$ equals $\Theta({\frac 1
{\delta \eps^2}}).$ We then apply McDiarmid's inequality to bootstrap the weaker bound and obtain the tighter bound.

Now we specialize to the case in which $X=\sum_{i=1}^n X_i$ is a sum of independent Bernoulli random variables.
We use the DKW inequality to prove the following:

\begin{theorem}[Proper Learning under Kolmogorov Distance] \label{thm:proper learning kolmogorov}
Let $X=\sum_{i=1}^n X_i$ be a sum of independent Bernoulli random variables.
There is an algorithm which, given $k=\max\{9216,18\ln(1/\delta)\}\cdot (1/\eps^{2})$ independent samples from $F_X$,
produces with probability at least $1-\delta$ a set of independent Bernoullis $Y_1,\ldots,Y_n$ such that $d_K(X,Y) \le \epsilon$, where
$Y:=\sum_{i=1}^n Y_i$. The running time of the algorithm is ${\rm poly}(n/\eps) + n \cdot ({1 \over \epsilon})^{O(\log^2 {1 \over
\epsilon})}$.
\end{theorem}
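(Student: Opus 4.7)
The plan is to combine the DKW inequality (Theorem~\ref{thm:cumulative distribution approximation}) with the sparse cover for sums of independent Bernoullis (Theorem~\ref{thm: sparse cover theorem}). First I would apply DKW with error parameter $\eps/4$: the choice $k=\max\{9216,18\ln(1/\delta)\}\cdot(1/\eps^2)$ is exactly what Theorem~\ref{thm:cumulative distribution approximation} requires at accuracy $\eps/4$ (since $9216 = 16 \cdot 576$ and $18 = 16 \cdot (9/8)$), so with probability at least $1-\delta$ the empirical CDF $\hat F_X$ built from the $k$ samples satisfies $\dk(\hat F_X, F_X) \le \eps/4$.

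Next I would invoke Theorem~\ref{thm: sparse cover theorem} to obtain an explicit $(\eps/4)$-cover $\mathcal{Q}$, in total variation distance, of the class of sums of $n$ independent Bernoullis, of size $|\mathcal{Q}| = (1/\eps)^{O(\log^2(1/\eps))}$. Each element of $\mathcal{Q}$ is itself a sum of $n$ independent Bernoullis admitting a compact description. Since $\dk \le 2\,\dtv$ for distributions on a finite subset of $\R$, $\mathcal{Q}$ is also an $(\eps/2)$-cover in Kolmogorov distance, and in particular contains some $\bbQ_0$ with $\dk(\bbQ_0, F_X) \le \eps/2$. The algorithm then enumerates $\mathcal{Q}$ and outputs a $\bbQ^{\star} \in \mathcal{Q}$ minimizing $\dk(\bbQ, \hat F_X)$. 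Correctness is a triangle inequality:
\[
\dk(\bbQ^{\star}, \hat F_X) \;\le\; \dk(\bbQ_0, \hat F_X) \;\le\; \dk(\bbQ_0, F_X) + \dk(F_X, \hat F_X) \;\le\; \tfrac{\eps}{2} + \tfrac{\eps}{4},
\]
so $\dk(\bbQ^{\star}, F_X) \le \dk(\bbQ^{\star}, \hat F_X) + \dk(\hat F_X, F_X) \le \eps$, and the Bernoullis realizing $\bbQ^{\star}$ are the desired $Y_1,\dots,Y_n$.

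For the running time, sampling and assembling $\hat F_X$ cost $\poly(n/\eps)$. In the cover search, the key observation is that $\hat F_X$ has at most $k=\poly(1/\eps)$ jump points, and by the structural dichotomy of Theorem~\ref{thm: sparse cover theorem} each cover element is either a sum of $O(\log(1/\eps))$ non-trivial Bernoullis (whose support has size $O(\log(1/\eps))$) or a ``heavy'' element with a $\poly(\log(1/\eps))$-size description whose CDF can be evaluated at any point in $\poly(1/\eps)$ time. Together with the fact that the sup defining $\dk(\bbQ, \hat F_X)$ is attained at a jump point of one of the two step functions, this lets us compute $\dk(\bbQ, \hat F_X)$ in $\poly(1/\eps)$ time per cover element, for a total of $(1/\eps)^{O(\log^2(1/\eps))}$ during the tournament, plus an $O(n)$ post-processing step to expand $\bbQ^{\star}$ into its $n$ explicit Bernoulli parameters, matching the stated running time.

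The main obstacle is the bookkeeping around the cover rather than any genuinely new idea: I need to make sure the precise cover-size and cover-quality parameters of Theorem~\ref{thm: sparse cover theorem} line up with the $(\eps/4)+(\eps/4)+(\eps/2)$ triangle-inequality budget, and I need to verify that the sup in $\dk(\bbQ,\hat F_X)$ really can be localized to a $\poly(1/\eps)$-size candidate set for \emph{every} cover element (including the heavy translated-binomial ones), so that we do not pay a factor of $n$ inside the inner loop of the tournament.
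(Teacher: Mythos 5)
Your proposal is essentially the paper's own proof: DKW at accuracy $\eps/4$ (indeed, $9216 = 16 \cdot 576$ and $18 = 16 \cdot (9/8)$, so the stated $k$ is exactly the DKW sample count at accuracy $\eps/4$), then the sparse cover of Theorem~\ref{thm: sparse cover theorem}, then a triangle-inequality selection among cover elements. The paper uses a slightly different constant budget (an $\eps/8$-cover in $\dtv$, hence $\eps/4$ in $\dk$, and thresholds at $\eps/2$ to obtain $3\eps/4 < \eps$), and you use an $\eps/4$-cover with a minimizer rather than a threshold test, but these are interchangeable: your $\tfrac{\eps}{4}+\tfrac{\eps}{4}+\tfrac{\eps}{2}=\eps$ accounting is valid.

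The one place you are loose is the running-time bookkeeping around the cover. Theorem~\ref{thm: sparse cover theorem} states that the cover has size $n^3 \cdot O(1/\eps) + n\cdot(1/\eps)^{O(\log^2(1/\eps))}$, not $(1/\eps)^{O(\log^2(1/\eps))}$: enumerating the cover already pays a factor of $n$, which is the real source of the $n \cdot(1/\eps)^{O(\log^2(1/\eps))}$ term in the stated running time (not an $O(n)$ post-processing step). Also, sparse-form cover elements have $\ell \le k^3 = O(1/\eps^3)$ non-trivial Bernoullis, hence support of size $O(1/\eps^3)$, not $O(\log(1/\eps))$; the per-element cost is still $\poly(1/\eps)$, so the conclusion survives. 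Your observation that one must ensure the sup in $\dk(\bbQ,\hat F_X)$ is evaluated without an $\Omega(n)$ scan per element is a legitimate implementation point that the paper's proof leaves implicit; it is indeed handled exactly as you suggest, by localizing the sup to the jump points of the step functions.
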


\begin{prevproof}{Theorem}{thm:proper learning kolmogorov}
Use Theorem~\ref{thm:cumulative distribution approximation} to produce an $\epsilon \over 4$-approximation
$\{\hat{F}_X(\ell)\}_{\ell}$ of the cumulative distribution of $X$.  Theorem~\ref{thm: sparse cover theorem} below gives us that for all $\gamma
>0$, there exists a $\gamma$-cover in total variation distance of the set of all sums of $n$ Bernoulli random variables that
has size ${\rm poly}(n/\gamma) + n \cdot ({1 \over \gamma})^{O(\log^2 {1 \over \gamma})}.$\ignore{ (see Theorem~\ref{thm: sparse cover theorem}
for a more detailed description of the structure of the cover).}
 Construct such a cover using $\gamma=\epsilon/8$. Given that
the Kolmogorov distance between two distributions is always at most twice their total variation distance, this cover is in fact
a $\epsilon/4$-cover in the Kolmogorov distance. Output any $Y=\sum_{i=1}^n Y_i$ in the cover whose cumulative distribution
$F_Y$ satisfies


\begin{align}\max_{0\le \ell \le n}|F_Y(\ell) - \hat{F}_X(\ell)| \le \epsilon/2. \label{eq:target eq}
\end{align}


It is easy to see that a $Y$ satisfying~\eqref{eq:target eq} exists in the cover. Indeed, if $Y$ is the closest point of the
cover to $X$ in Kolmogorov distance, then it must be that $\max_{0\le \ell \le n}|F_Y(\ell) - {F}_X(\ell)| \le \epsilon/4.$
Given that $\{\hat{F}_X(\ell)\}_{\ell}$ is an $\epsilon/4$-approximation to $F_X$ the above inequality implies~\eqref{eq:target
eq}.

Moreover, it is easy to check that any $Y$ satisfying~\eqref{eq:target eq} will satisfy $\max_{0\le \ell \le n}|F_Y(\ell) -
{F}_X(\ell)| \le 3\epsilon/4 < \epsilon,$ using again that $\{\hat{F}_X(\ell)\}_{\ell}$ is an $\epsilon/4$-approximation to
$F_X$. Hence, we have $d_K(X,Y) < \epsilon.$
\end{prevproof}

\subsection{From Kolmogorov distance to total variation distance}

\ignore{The algorithm described in the previous subsection produces a sequence of Bernoulli random variables $Y_1,\ldots,Y_n$ whose sum
$f(Y)=\sum_{i=1}^n Y_i$ satisfies $\dk(X,Y) \leq \eps$, where $X=\sum_{i=1}^n X_i$ is the target sum of Bernoullis.}
The algorithm of the previous subsection learns the target sum of Bernoullis  to high accuracy with respect to Kolmogorov distance. Ideally, we would like to use this approximation as a black box to obtain an approximation in total variation distance.  As we discussed in the introduction, this runs to a very basic, apparently unresolved question in probability theory: is there a bound on the total variation distance between two sums of independent indicators in terms of their Kolmogorov distance? If at least one of the two sums is a Binomial distribution, then an argument due to Newton gives a positive answer. However nothing is known about the relation between the two distances in this general case.  Without such a bound, it is rather unclear whether constantly many samples (independent of $n$) suffices to accurately learn in total variation distance...

Nevertheless, we manage to extend our Kolmogorov distance learning algorithm  to the total variation distance
via a delicate algorithm that exploits the structure of a small $\eps$-cover (in total variation distance) of the space of all distributions that are sums of independent Bernoulli random variables. Interestingly, this becomes feasible by establishing an analog of the aforementioned argument by Newton to a class of distributions used in the cover that are called {\em heavy}; and this argument relies on probabilistic approximation results via {\em translated Poisson distributions} (see Definition~\ref{def:TPD}).

We give more details below.
%
%
Let us start by formally stating a theorem that defines a cover (in total variation distance) of the space of sums of independent indicators.  The following is Theorem~9 of the full version of
 (\cite{DP:oblivious11}):

\begin{theorem} [Cover for sums of Bernoullis] \label{thm: sparse cover theorem}
For all $\epsilon >0$, there exists a set ${\cal S}_{\epsilon} \subseteq {\cal S}$ such that
(i)
$|{\cal S}_{\epsilon}| \le n^3 \cdot O(1/\epsilon) + n \cdot \left({1 \over \epsilon}\right)^{O(\log^2{1/\epsilon})}$;
(ii) For every $\{X_i\}_i \in {\cal S}$ there exists some $\{Y_i\}_i  \in {\cal S}_{\epsilon}$ such that $\dtv(\sum_i
X_i,\sum_i Y_i) \le \epsilon$ (i.e. $f(\sob_\eps)$ is an $\eps$-cover of $f(\sob)$); and
(iii) the set ${\cal S}_{\epsilon}$ can be constructed in time $O\left(n^3 \cdot O(1/\epsilon) + n \cdot \left({1 \over
\epsilon}\right)^{O(\log^2{1/\epsilon})}\right)$.
Moreover, if  $\{Y_i\}_i  \in {\cal S}_{\epsilon}$, then the collection $\{Y_i\}_i$ has one of the following forms, where
$k=k(\epsilon) = O(1/\epsilon)$ is a positive integer: 
\begin{itemize}
\item (Sparse Form) There is a value $ \ell \leq k^3=O(1/\epsilon^3)$
such that
for all $i \leq \ell$ we have $\E[{Y_i}] \in \left\{{1 \over k^2}, {2\over k^2},\ldots, {k^2-1 \over k^2 }\right\}$, and
for all $i >\ell $ we have $\E[{Y_i}] \in \{0,  1\}$.
\item ($k$-heavy Binomial Form) There is a value $\ell \in \{0,1,\dots,n\}$
and a value $q \in \left\{ {1 \over kn}, {2 \over kn},\ldots, {kn-1 \over kn}
\right\}$ such that
for all $i \leq \ell$ we have $\E[{Y_i}] = q$;
for all $i >\ell$ we have $\E[{Y_i}] \in \{0,  1\}$; and $\ell,q$ satisfy the bounds
$\ell q \ge k^2-{1\over k}$ and
$\ell q(1-q) \ge k^2- k-1-{3\over k}.$
\end{itemize}
\end{theorem}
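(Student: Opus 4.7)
The plan is to construct the cover by case analysis on the structure of the input Poisson Binomial Distribution $X = \sum_{i=1}^n X_i$ with $\E[X_i] = p_i$, using the threshold parameter $k = \Theta(1/\eps)$. First I would show that at an additive $O(\eps)$ cost in total variation we may assume every $p_i$ lies in the ``middle band'' $[1/k^2, 1 - 1/k^2]$: a Bernoulli with $p_i < 1/k^2$ can be rounded to $0$, and symmetrically one with $p_i > 1 - 1/k^2$ rounded to $1$, and the total variation cost of such rounding is controlled by $\sum p_i$ and $\sum (1 - p_i)$ over the affected indices, which can be absorbed using translated-Poisson smoothing whenever the variance of the sum is not tiny. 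The rounded-to-constants coordinates will eventually populate the $i > \ell$ ``fixed'' tail of the cover element. Let $M$ denote the set of remaining (middle) coordinates.

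I would then split on the magnitude of the variance $\sigma^2 := \sum_{i \in M} p_i(1-p_i)$. In the \emph{heavy} regime $\sigma^2 \ge k^2 - O(k)$, I would invoke R{\"o}llin's translated Poisson approximation \cite{Rollin:translatedPoissonApproximations}, which gives $\dtv\bigl(\sum_{i \in M} X_i,\, TP(\mu, \sigma^2)\bigr) = O(1/\sigma) = O(\eps)$, where $\mu = \sum_{i \in M} p_i$. I would then match this translated Poisson to a binomial $\mathrm{Bin}(\ell, q)$ by choosing $\ell$ and $q$ so that $\ell q \approx \mu$ and $\ell q(1-q) \approx \sigma^2$, invoking the fact (a second application of translated Poisson) that two heavy binomials with nearly matching first two moments are within $O(\eps)$ in total variation. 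Finally I would discretize $q$ to the grid $\{j/(kn)\}$; this moves the mean by $O(\ell/(kn)) = O(1/k) = O(\eps)$ and the variance by an analogous amount, which again costs $O(\eps)$ by the translated-Poisson robustness. This yields the $k$-heavy binomial form; the constraints $\ell q \ge k^2 - 1/k$ and $\ell q(1-q) \ge k^2 - k - 1 - 3/k$ are exactly the heaviness conditions needed for each of these approximation steps to incur only $O(\eps)$ error.

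In the \emph{sparse} regime $\sigma^2 < k^2$, the constraint $p_i \ge 1/k^2$ forces $|M| \le k^2 \sigma^2 / (1/k^2 \cdot (1 - 1/k^2)) \le k^3 = O(1/\eps^3)$, recovering the $\ell \le k^3$ bound. Here I would round each $p_i$ for $i \in M$ to the nearest multiple of $1/k^2$, which changes the distribution of each Bernoulli by $1/k^2$ in total variation, so by subadditivity changes the sum distribution by at most $|M|/k^2 \le k = O(1)$ --- too weak by itself. The fix is again to use translated Poisson smoothing when $\sigma$ is not too small, and otherwise to rely on a direct argument that very few middle Bernoullis means the sum is supported on a very small set. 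The cover size count then follows by enumeration: the sparse form contributes $n \cdot (k^2)^{O(k^3)} = n \cdot (1/\eps)^{O(\log^2(1/\eps))}$ elements (after bucketing by the multiset of $p_i$ values to avoid overcounting), and the heavy binomial form contributes $n^2 \cdot (kn) = n^3 \cdot O(1/\eps)$ elements from the choices of $\ell$ and $q$.

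The main obstacle will be the sparse case: the naive $|M|/k^2$ bound for rounding is insufficient, and one must subdivide based on whether $\sigma$ is itself large enough for translated Poisson smoothing to kick in. Careful bookkeeping is required so that every PBD lands in exactly one regime and so that the approximation errors at each rounding or replacement step compose into the target $O(\eps)$. The explicit constructibility claim (iii) will follow from the fact that each step in the decomposition is computable in polynomial time from the $p_i$'s, and that enumerating the cover amounts to enumerating the finitely many legal $(\ell, q)$ pairs and sparse configurations.
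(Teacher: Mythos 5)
The paper does not actually prove this statement: it imports it verbatim as Theorem~9 of the full version of \cite{DP:oblivious11}, and in the remark immediately following the statement it explicitly observes that the naive ``enumerate all sparse forms and all heavy Binomial forms'' approach yields only the weaker cover size $n^3\cdot O(1/\eps) + n\cdot(1/\eps)^{O(1/\eps^2)}$ of \cite{Daskalakis:anonymous08full}. Your proposal is, in essence, exactly that naive approach, so even if all of your approximation steps went through you would recover only the weaker bound, not the $(1/\eps)^{O(\log^2(1/\eps))}$ bound the theorem asserts.

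The concrete gap is in your enumeration count for the sparse branch. You write that the sparse form contributes $n\cdot(k^2)^{O(k^3)} = n\cdot(1/\eps)^{O(\log^2(1/\eps))}$ elements ``after bucketing by the multiset of $p_i$ values,'' but with $k = \Theta(1/\eps)$ the left-hand side is $n\cdot(1/\eps)^{O(1/\eps^3)}$, and even after passing to multisets one only improves to roughly $\binom{k^3+k^2}{k^2} = (1/\eps)^{O(1/\eps^2)}$. These are exponentially larger than $(1/\eps)^{O(\log^2(1/\eps))}$. Achieving the claimed quasi-polynomial-in-$1/\eps$ count requires a genuinely different idea that your proposal does not contain: one must show that far fewer representatives suffice among the $O(1/\eps^3)$-sparse PBDs, e.g.\ via a moment-matching / Krawtchouk-type argument that two sparse PBDs agreeing on their first $O(\log(1/\eps))$ moments are already $\eps$-close, so that the cover only needs one representative per moment profile. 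Without such a lemma the $(1/\eps)^{O(\log^2(1/\eps))}$ bound is unreachable.

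A secondary issue, which you partially flag yourself, is the error analysis in the sparse regime. You propose to repair the weak $|M|/k^2 = O(k)$ naive rounding bound by invoking translated Poisson smoothing ``when $\sigma$ is not too small,'' but in the sparse regime $\sigma < k$ by definition, and translated Poisson approximation gives error $O(1/\sigma)$, which is $\omega(\eps)$ precisely when $\sigma \ll k$. So translated Poisson only rescues a thin boundary layer near $\sigma \approx k$, and the genuinely small-variance sparse case is left without an argument. This case needs a different coupling or mass-transfer argument (of the sort used in the DP proof) rather than a CLT-style smoothing bound.

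Your heavy-Binomial branch is substantially correct in outline and matches the intended argument: two-moment matching to a translated Poisson, then to a Binomial, then discretizing $q$ on a $1/(kn)$-grid, with the heaviness inequalities $\ell q \ge k^2 - 1/k$ and $\ell q(1-q) \ge k^2 - k - 1 - 3/k$ ensuring each step costs $O(1/k) = O(\eps)$. The constructibility claim (iii) also follows straightforwardly once the enumeration bound is established. The missing ingredient is entirely in the sparse branch.
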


\noindent
(We remark that \cite{Daskalakis:anonymous08full} establishes the same theorem, 
except that the size of the cover given there, as well as the time needed to produce it, are  $n^3 \cdot O(1/\epsilon) + n \cdot \left({1 \over \epsilon}\right)^{O({1/\epsilon^2})}$. Indeed, this weaker bound is obtained by enumerating over all possible collections $\{Y_i\}_i$ in sparse form and all possible collections in $k$-heavy Binomial Form, for $k=O(1/\epsilon)$ specified by the theorem.)

\medskip Using the cover described in Theorem~\ref{thm: sparse cover theorem}, we prove the following,
which immediately gives Theorem~\ref{thm:sob}:
\begin{theorem} [Learning under Total Variation Distance] \label{thm:proper learning tv}
Let $X=\sum_{i=1}^n X_i$ be a sum of independent Bernoullis.  Fix any $\tau>0$. There is an algorithm which, given $O({1 \over \epsilon^{8+\tau}})$ independent samples from $X$, produces
with probability at least $9/10$ a list of Bernoulli random variables $Y_1,\ldots,Y_n$ such that $\dtv(X,Y) \le \epsilon$,
where $Y:=\littlesum_{i=1}^n Y_i$. The running time of the algorithm is $n^3 \cdot \poly(1/\epsilon) +
n \cdot (1/\epsilon)^{O(\log^2{1/\epsilon})}$.
\end{theorem}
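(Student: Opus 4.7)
The plan is to combine three ingredients: (i) the Kolmogorov-distance CDF estimate provided by the DKW inequality (Theorem~\ref{thm:cumulative distribution approximation}), (ii) the small TV-cover $\mathcal{S}_{\eps/C}$ of all sums of $n$ independent Bernoullis (Theorem~\ref{thm: sparse cover theorem}), and (iii) a new Kolmogorov-to-TV conversion lemma that only needs to hold \emph{between pairs of cover elements}. The essential insight is that while the relation between $\dk$ and $\dtv$ is delicate for arbitrary sums of Bernoullis, the cover elements are highly structured (either ``sparse'' or ``$k$-heavy binomial''), and this structure is enough to make Kolmogorov closeness force TV closeness with only $\poly(1/\eps)$ loss.

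The algorithm is as follows: draw $m = O(1/\eps^{8+\tau})$ samples and form the empirical CDF $\hat F_X$; by Theorem~\ref{thm:cumulative distribution approximation} we have $\|\hat F_X - F_X\|_\infty \le \eps' := \eps^{4 + \tau/2}$ with probability $\ge 19/20$. Construct the cover $\mathcal{S}_{\eps/C}$ of Theorem~\ref{thm: sparse cover theorem} for a small constant $C$, and output any $Y^{\ast} = \sum_i Y^{\ast}_i$ in the cover that minimizes $\max_\ell |F_{Y^{\ast}}(\ell) - \hat F_X(\ell)|$. Let $X^{\ast}$ denote an element of the cover with $\dtv(X, X^{\ast}) \le \eps/C$, so $\dk(X, X^{\ast}) \le 2\eps/C$. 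By the minimality of $Y^{\ast}$ and the triangle inequality, $\dk(X^{\ast}, Y^{\ast}) \le 4\eps/C + 2\eps' = O(\eps)$. Since both $X^{\ast}$ and $Y^{\ast}$ lie in the cover, the key lemma below converts this bound into $\dtv(X^{\ast}, Y^{\ast}) \le \eps/2$, and the triangle inequality in TV finishes the proof.

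The key lemma states: for any two cover elements $Y, Y' \in \mathcal{S}_{\eps/C}$ one has $\dtv(Y, Y') \le \poly(1/\eps) \cdot \dk(Y, Y') + O(\eps)$. Our choice of $\eps'$ and $C$ then drives the TV error below $\eps$. We prove the lemma by case analysis:
\textbf{(a) Sparse--sparse.} A sparse-form element is a sum of at most $\ell \le O(1/\eps^3)$ non-trivial Bernoullis plus a deterministic shift coming from the $\{0,1\}$-valued Bernoullis. Whenever $\dk(Y, Y') \le 1/2$, these two shifts must differ by $O(1/\eps^3)$---otherwise the supports would be nearly disjoint and $\dk$ would be close to $1$---so the combined support has size $O(1/\eps^3)$. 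The standard inequality $\dtv \le (\text{support size}) \cdot \dk$ then yields $\dtv(Y, Y') \le O(1/\eps^3) \cdot \dk(Y, Y')$.
\textbf{(b) Heavy--heavy.} Each such element is $\mathrm{Bin}(\ell, q) + \text{shift}$ with variance $\Omega(1/\eps^2)$. We approximate each by a translated Poisson distribution (Definition~\ref{def:TPD}) of matching mean and variance using R\"ollin's bound~\cite{Rollin:translatedPoissonApproximations}, paying $O(\eps)$ in TV thanks to the $\Omega(1/\eps^2)$ variance. Our Newton-analog (Lemma~\ref{lem: from kolmogorov to TV}) shows that the ratio of two translated-Poisson PMFs is log-concave, hence the PMFs cross only $O(1)$ times; it follows that $\dtv$ and $\dk$ between two translated Poissons differ only by a constant factor. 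Chaining this with the two $O(\eps)$ TP-approximations gives $\dtv(Y, Y') = O(\dk(Y, Y')) + O(\eps)$.
\textbf{(c) Mixed.} Replace the heavy-binomial element by its translated Poisson as in (b); under $\dk(Y, Y') \le 1/2$ the sparse element forces the TP to concentrate on $O(1/\eps^3)$ consecutive integers, reducing this to a bounded-support comparison as in (a) with an additional $O(\eps)$ TP-approximation error.

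The main obstacle is case (b): establishing log-concavity of the ratio of two translated-Poisson PMFs---the Newton-analog promised in the introduction---and carefully aligning the translation parameters so that the R\"ollin approximation errors aggregate to $O(\eps)$ rather than blowing up. Given the key lemma, the sample complexity is $O(1/\eps^{8+\tau})$ as dictated by DKW, and the running time is dominated by constructing and searching the cover, giving $n^3 \cdot \poly(1/\eps) + n \cdot (1/\eps)^{O(\log^2(1/\eps))}$ as claimed.
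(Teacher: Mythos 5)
Your approach is genuinely different from the paper's, and it has a real gap. The paper does not run a single ``minimize Kolmogorov distance over the whole cover'' selection rule; instead it runs two qualitatively different tests, a $\Delta$-test for sparse-form cover elements that estimates \emph{total variation distance directly} from the empirical pmf, and an $H$-test for heavy-Binomial-form elements based on Kolmogorov distance. This two-phase structure is not cosmetic --- it is precisely what lets the paper avoid the quantitative problem your unified argument runs into.

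Here is the gap, concretely. In your key lemma the sparse--sparse case gives
$\dtv(Y,Y') \le O(1/\eps^3)\cdot \dk(Y,Y')$ (and similarly in the mixed case). But the Kolmogorov distance between the selected element $Y^\ast$ and the truly-closest cover element $X^\ast$ is dominated by the \emph{cover radius}, not by the DKW error: from $\dtv(X,X^\ast)\le\eps/C$ you only get $\dk(X,X^\ast)\le 2\eps/C$, and hence $\dk(X^\ast,Y^\ast)=\Theta(\eps/C)$ no matter how small you make the DKW accuracy $\eps'$. Feeding this into the sparse--sparse branch of your key lemma yields a TV bound of order $(1/\eps^3)\cdot(\eps/C) = \Theta(1/(C\eps^2))$, which is not $O(\eps)$ for any constant $C$. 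Nor can you simply take a finer cover $\mathcal{S}_\gamma$: the sparse-form support in $\mathcal{S}_\gamma$ grows like $k(\gamma)^3 = \Theta(1/\gamma^3)$, so the multiplicative factor in the lemma grows at exactly the same rate that $\dk$ shrinks, and the product $(1/\gamma^3)\cdot\gamma = 1/\gamma^2$ only gets worse as $\gamma\to 0$. So the support-size-times-Kolmogorov inequality, while correct, is simply too lossy to drive the argument, and your claim that ``Our choice of $\eps'$ and $C$ then drives the TV error below $\eps$'' does not hold.

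The paper's $\Delta$-test sidesteps exactly this: for a sparse-form $Y$ with support of size $O(1/\eps^{3\beta})$, it directly computes an empirical estimate $\Delta_Y$ of $\dtv(X,Y)$ using the DKW-based pmf estimates $\hat f_X$, and Claims~\ref{claim: if far in cover, then far in distribution approximation}--\ref{claim: if close in cover, then close in distribution approximation} show the estimation error is only $O(\eps^{\alpha-3\beta})=o(\eps)$. In other words, for sparse forms the paper never converts a Kolmogorov bound to a TV bound at all --- it estimates TV directly, which the small support makes feasible. Only for heavy-Binomial-form elements (where the Newton/translated-Poisson argument gives a \emph{constant-factor} Kolmogorov-to-TV conversion, Lemma~\ref{lem: from kolmogorov to TV}) does the paper use a Kolmogorov-style test. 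And crucially, the $H$-test correctness (Lemma~\ref{lem:correctness of H-test}) is proved under the hypothesis that $X$ has no sparse-form $\eps^\beta$-neighbor in the cover, which ensures the comparison is always heavy-vs-heavy where the constant-factor lemma applies. Your unified selection rule throws away this conditioning, so it can return a sparse cover element that is Kolmogorov-close but TV-far, and the analysis does not close.

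Your heavy--heavy case (b) is essentially the paper's Lemma~\ref{lem: from kolmogorov to TV} and is fine. The mixed case (c) inherits the same $\poly(1/\eps)$-loss problem as case (a) and is moreover not needed in the paper's argument, precisely because the two-phase algorithm never has to compare a sparse hypothesis against a heavy one.
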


We first give a high-level outline of our argument.  The proof works by considering the points in a cover $\sob_{\eps^\beta}$ where $\beta$ is some constant $>1.$  We define two tests
that can be performed on points (i.e. distributions) in $\sob_{\eps^\beta}$.  The first of these, called the $\Delta$-test, is run on every sparse form distribution in $\sob_{\eps^\beta}$, and is designed to identify a sparse form distribution that is close to $X$ if such a
distribution exists.  The second test, called the $H$-test, is run on every $k$-heavy Binomial form distribution in $\sob_{\eps^\beta}$ and is designed to identify a Binomial form distribution that is close to $X$ if such a distribution exists.  Since $\sob_{\eps^\beta}$ is a cover, some test will succeed.  (Of course, we must also show that for each test, any distribution it outputs is indeed legitimately close to $X$; this is part of our analysis as well.)

We now enter into the detailed proof.
Let $\beta = 1 + {\frac {\tau}{12}}$ and let $\alpha = 4 + {\frac \tau 2}$.
 Using Theorem~\ref{thm:cumulative distribution approximation}, from $O(\epsilon^{-2 \alpha} )$ independent samples of $X$ we can obtain estimates $\{\hat{F}_X(\ell)\}_{\ell=0}^n$ such that $|\hat{F}_X(\ell) - {F}_X(\ell)| \le \epsilon^{\alpha}$, for all $\ell$, with probability at least $9/10$. For the rest of the proof we condition on the event that each of our estimates $\hat{F}_X(\ell)$ is indeed within  $\epsilon^\alpha$ of the actual value ${F}_X(\ell)$.
 Define $\hat{f}_X(0) = \hat{F}_X(0)$ and $\hat{f}_X(z)= \hat{F}_X(z)-\hat{F}_X(z-1)$, for all $z \in [n]$.

\subsubsection{Handling sparse form distributions in the cover.}
Let $Y=\sum_i Y_i$, where ${\cal Y}:=\{ Y_i \}_{i=1}^n \in {\cal S}_{\epsilon^{\beta}}$, and suppose that ${\cal Y}$ is of the {\em sparse form}, as defined in statement of Theorem~\ref{thm: sparse cover theorem}. Let ${\rm supp}(Y)$ denote the support of $Y$. By the definition of the sparse form, we have that $|{\rm supp}(Y)| \le (c \epsilon)^{-3 \beta}$, where $c$ is some universal constant. Define $\Delta_Y$ as follows:

$$\Delta_Y = {1 \over 2} \Big( \littlesum_{z \in {\rm supp}(Y)}  |f_Y(z) - \hat{f}_X(z)| +1- \littlesum_{z \in {\rm supp}(Y)} |\hat{f}_X(z)| \Big).$$


\noindent We observe that given $\{\hat{F}_X(\ell)\}_{\ell=0}^n$ and $Y$, the value of $\Delta_Y$ can be straightforwardly
computed in time $\poly(1/\eps)$  using dynamic programming.

The following two claims (whose proofs we defer to Section~\ref{sec:DeltaY}) say that
if $\dtv(X,Y)$ is large (at least $\eps$) then $\Delta_Y$ must be fairly large, while if $\dtv(X,Y)$ is small (at most $\eps^\beta$) then $\Delta_Y$ must also be fairly small.

\begin{claim} \label{claim: if far in cover, then far in distribution approximation}
If $\dtv(X,Y) \ge \epsilon$, then $\Delta_Y \ge \epsilon -  2 (c \epsilon)^{-3 \beta} \cdot \epsilon^{\alpha}$.
\end{claim}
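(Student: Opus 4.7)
The plan is to show that $\Delta_Y$ tracks $\dtv(X,Y)$ closely, with the slack accounted for by the pointwise error in $\hat f_X$. The starting point is the decomposition
\[
\dtv(X,Y) = \tfrac{1}{2}\Bigl(\littlesum_{z \in {\rm supp}(Y)} |f_Y(z) - f_X(z)| + 1 - \littlesum_{z \in {\rm supp}(Y)} f_X(z)\Bigr),
\]
which holds because $f_Y$ vanishes outside ${\rm supp}(Y)$, so the mass that $X$ places outside ${\rm supp}(Y)$ contributes exactly $1 - \sum_{z \in {\rm supp}(Y)} f_X(z)$ to the variation distance. Note that this expression has exactly the same shape as the definition of $\Delta_Y$, but with $f_X(z)$ in place of $\hat f_X(z)$, so the whole task reduces to controlling the replacement cost.

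First, I would convert the DKW-type cumulative bound $|\hat F_X(\ell) - F_X(\ell)| \le \eps^{\alpha}$ into the pointwise density bound $|\hat f_X(z) - f_X(z)| \le 2\eps^{\alpha}$, via the telescoping identity $\hat f_X(z) - f_X(z) = (\hat F_X(z)-F_X(z)) - (\hat F_X(z-1)-F_X(z-1))$. Second, inside the defining expression for $\Delta_Y$ I would apply the triangle inequality termwise: for each $z \in {\rm supp}(Y)$ I would use $|f_Y(z) - \hat f_X(z)| \ge |f_Y(z) - f_X(z)| - 2\eps^{\alpha}$, and in the normalization sum $|\hat f_X(z)| \le f_X(z) + 2\eps^{\alpha}$ (using that $\hat f_X(z)$, being an empirical frequency, is nonnegative, so the absolute value can simply be dropped).

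Third, I would sum these two bounds over $z \in {\rm supp}(Y)$. Each of the two sums picks up an additive error of at most $|{\rm supp}(Y)| \cdot 2\eps^{\alpha}$, so in total $2\Delta_Y \ge 2\dtv(X,Y) - 4 |{\rm supp}(Y)| \cdot \eps^{\alpha}$. Substituting the sparse-form cardinality bound $|{\rm supp}(Y)| \le (c\eps)^{-3\beta}$ from Theorem~\ref{thm: sparse cover theorem} and dividing by $2$ gives $\Delta_Y \ge \dtv(X,Y) - 2(c\eps)^{-3\beta}\eps^{\alpha}$, at which point the hypothesis $\dtv(X,Y) \ge \eps$ yields the claim.

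I do not foresee a genuine obstacle: the argument is essentially triangle-inequality bookkeeping, once the correct decomposition of $\dtv(X,Y)$ over ${\rm supp}(Y)$ and its complement is written down. The only mildly delicate point is the normalization term $1 - \littlesum_{z \in {\rm supp}(Y)} |\hat f_X(z)|$, where one must justify removing the absolute values; this is automatic because $\hat f_X$ is read off from the empirical CDF of genuine samples and is therefore a nonnegative function, so the $\eps^{\alpha}$-slack there is handled by the same per-coordinate approximation error as in the first sum.
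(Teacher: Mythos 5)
Your proof is correct and takes essentially the same route as the paper: both start from the decomposition of $\dtv(X,Y)$ over ${\rm supp}(Y)$ and its complement, convert the DKW cumulative bound into the pointwise density bound $|\hat f_X(z)-f_X(z)|\le 2\eps^\alpha$, and pay a total triangle-inequality cost of $2(c\eps)^{-3\beta}\eps^\alpha$ by summing over the at most $(c\eps)^{-3\beta}$ support points. The only cosmetic difference is that the paper carries absolute values on $\hat f_X$ throughout and uses the reverse triangle inequality, whereas you drop them by citing nonnegativity of $\hat f_X$; your bound $|\hat f_X(z)|\le f_X(z)+2\eps^\alpha$ actually follows from the ordinary triangle inequality alone (since $f_X\ge 0$), so that appeal to nonnegativity is harmless but unnecessary.
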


\begin{claim} \label{claim: if close in cover, then close in distribution approximation}
If $\dtv(X,Y) \le \epsilon^\beta$, then $\Delta_Y \le  \epsilon^\beta +  2 (c \epsilon)^{-3 \beta} \cdot \epsilon^{\alpha}$.
\end{claim}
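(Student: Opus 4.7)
The plan is to introduce an idealized version of $\Delta_Y$ in which the empirical density $\hat{f}_X$ is replaced by the true density $f_X$, show that this idealized quantity is exactly $\dtv(X,Y)$, and then bound the perturbation caused by replacing $f_X$ with $\hat{f}_X$ using the DKW-type estimates on $\hat{F}_X$ together with the sparsity of $\mathrm{supp}(Y)$.

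Concretely, I would define
\[
\Delta_Y^{\ast} \;=\; \tfrac{1}{2}\Big(\littlesum_{z \in \mathrm{supp}(Y)} |f_Y(z) - f_X(z)| \;+\; 1 - \littlesum_{z \in \mathrm{supp}(Y)} f_X(z)\Big).
\]
Since $f_X$ is a probability mass function, $1 - \sum_{z \in \mathrm{supp}(Y)} f_X(z) = \sum_{z \notin \mathrm{supp}(Y)} f_X(z)$, and since $f_Y(z) = 0$ for every $z \notin \mathrm{supp}(Y)$, this tail mass equals $\sum_{z \notin \mathrm{supp}(Y)} |f_Y(z) - f_X(z)|$. Hence $\Delta_Y^{\ast} = \tfrac{1}{2}\sum_z |f_Y(z) - f_X(z)| = \dtv(X,Y)$, and by assumption this is at most $\epsilon^\beta$.

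The remaining step is to control $|\Delta_Y - \Delta_Y^{\ast}|$. From $|\hat{F}_X(\ell)-F_X(\ell)| \le \epsilon^\alpha$ for every $\ell$, we immediately get $|\hat{f}_X(z)-f_X(z)| \le 2\epsilon^\alpha$ for every $z$. By the reverse triangle inequality both $\bigl||f_Y(z)-\hat{f}_X(z)| - |f_Y(z)-f_X(z)|\bigr|$ and $\bigl||\hat{f}_X(z)| - f_X(z)\bigr|$ are at most $2\epsilon^\alpha$, so summing termwise over $\mathrm{supp}(Y)$ and using $|\mathrm{supp}(Y)| \le (c\epsilon)^{-3\beta}$ (which follows from the sparse form description of Theorem~\ref{thm: sparse cover theorem}) yields $|\Delta_Y - \Delta_Y^{\ast}| \le 2(c\epsilon)^{-3\beta}\cdot\epsilon^\alpha$. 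Combining the two estimates gives the claimed bound.

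The only mildly subtle point, and the one I would emphasize in the write-up, is the identity $\Delta_Y^{\ast} = \dtv(X,Y)$; it is really what motivates the seemingly ad hoc definition of $\Delta_Y$, and it crucially uses that $Y$ is supported on a small set (so that one can restrict attention to $\mathrm{supp}(Y)$ in the empirical formula while still recovering the full variation distance in the idealized one). Everything else is a mechanical perturbation bound driven by the DKW guarantee and the sparse-form support bound, so I do not anticipate further obstacles.
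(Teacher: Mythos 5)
Your proposal is correct and takes essentially the same route as the paper: you identify that replacing $\hat f_X$ by $f_X$ in $\Delta_Y$ yields exactly $\dtv(X,Y)$ and then do a termwise perturbation bound over $\mathrm{supp}(Y)$ using the DKW pointwise guarantee and the sparse support bound; the paper carries out the same two estimates, just inlined as a chain of inequalities rather than factored through the intermediate quantity $\Delta_Y^\ast$. Your organization is a touch cleaner, but both arguments are the same two applications of $|\hat f_X(z)-f_X(z)|\le 2\epsilon^\alpha$ summed over a support of size at most $(c\epsilon)^{-3\beta}$.
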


By our choice of $\beta = 1 + {\frac {\tau}{12}}$ and $\alpha = 4 + {\frac \tau 2}$, for $\eps$ smaller than a certain constant
(depending on $c$ and $\tau$)  the following condition holds: 
\begin{align}
\epsilon > \epsilon^\beta + 4 c^{-3\beta} \epsilon^{\alpha-3\beta}. \label{eq:condition 1}
\end{align}


Claims~\ref{claim: if far in cover, then far in distribution approximation} and \ref{claim: if close in cover, then close in distribution approximation} imply that if we use the ${\cal S}_{\epsilon^\beta}$ cover of Theorem~\ref{thm: sparse cover theorem}, we can filter collections $\{ Y_i\}_i \in {\cal S}_{\epsilon^\beta}$ in the sparse form whose sum $Y$ is $\epsilon$-far in total variation distance from $X$, by computing $\Delta_Y$ and thresholding at the value $\epsilon^{\beta} + 2 (c \epsilon)^{-3 \beta} \cdot \epsilon^{\alpha}$. Moreover, this filtration is not going to get rid of any collections in sparse form that are within $\epsilon^\beta$ total variation distance from the target distribution. Formally, let us define the following test, which takes as input the estimates $\{\hat{F}_X(\ell) \}_{\ell=0}^n$ and decides whether or not to reject a point in ${\cal S}_{\epsilon^\beta}$ in sparse form.

\begin{definition}[$\Delta$-test]
The input is $\Y=\{Y_i\}_i \in {\cal S}_{\epsilon^\beta}$ in the sparse form.  Let $Y =\sum_i Y_i$.
If $\Delta_Y \le \epsilon^{\beta} + 2 (c \epsilon)^{-3 \beta} \cdot \epsilon^{\alpha}$ then
the $\Delta$-test accepts $\Y$, otherwise it rejects $\Y.$
\end{definition}

Since $\Delta_Y$ can be computed in $\poly(1/\eps)$ time, an execution of the $\Delta$-test can
be performed in $\poly(1/\eps)$ time.
If~\eqref{eq:condition 1} is satisfied, Claims~\ref{claim: if far in cover, then far in distribution approximation} and~\ref{claim: if close in cover, then close in distribution approximation} imply the following:
\begin{lemma}[Correctness of the $\Delta$-test]\label{lemma: correctness of delta test}
Let $\{ Y_i\}_i \in {\cal S}_{\epsilon^\beta}$ be in the sparse form and $Y=\sum_i Y_i$. If $Y$ is accepted by the $\Delta$-test then $\dtv(X,Y) \leq \eps,$ and if $\dtv(X,Y) \leq \eps^b$ then $Y$ is accepted by the $\Delta$-test.
\end{lemma}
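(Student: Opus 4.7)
The plan is to derive the lemma as an immediate corollary of Claims~\ref{claim: if far in cover, then far in distribution approximation} and~\ref{claim: if close in cover, then close in distribution approximation}, using the parameter inequality~\eqref{eq:condition 1} to verify that the threshold chosen in the $\Delta$-test separates the two cases cleanly. Both implications will be proved separately: one by direct application of Claim~\ref{claim: if close in cover, then close in distribution approximation}, and the other by contraposition of Claim~\ref{claim: if far in cover, then far in distribution approximation}.

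For the ``soundness'' direction (accepted $\Rightarrow$ close), I would argue the contrapositive. Assume $\dtv(X,Y) \geq \epsilon$; then Claim~\ref{claim: if far in cover, then far in distribution approximation} gives
\[
\Delta_Y \;\geq\; \epsilon - 2(c\epsilon)^{-3\beta}\cdot \epsilon^{\alpha} \;=\; \epsilon - 2 c^{-3\beta}\epsilon^{\alpha-3\beta}.
\]
The $\Delta$-test's acceptance threshold is $\epsilon^\beta + 2c^{-3\beta}\epsilon^{\alpha-3\beta}$. Subtracting, the gap between the lower bound on $\Delta_Y$ and the threshold equals $\epsilon - \epsilon^\beta - 4c^{-3\beta}\epsilon^{\alpha-3\beta}$, which is strictly positive by~\eqref{eq:condition 1}. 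Hence $\Delta_Y$ exceeds the threshold and the test rejects $\mathcal{Y}$. Contrapositively, acceptance forces $\dtv(X,Y) < \epsilon$.

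For the ``completeness'' direction ($\dtv(X,Y) \leq \epsilon^\beta \Rightarrow$ accepted), the claim is essentially tautological given Claim~\ref{claim: if close in cover, then close in distribution approximation}: that claim yields $\Delta_Y \leq \epsilon^\beta + 2(c\epsilon)^{-3\beta}\epsilon^\alpha$, which is precisely the acceptance threshold of the $\Delta$-test, so $\mathcal{Y}$ is accepted.

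There is no real obstacle beyond bookkeeping; the whole content of the lemma is that the threshold in the $\Delta$-test was chosen exactly at the upper bound from Claim~\ref{claim: if close in cover, then close in distribution approximation}, and the parameters $\alpha, \beta$ were chosen (via~\eqref{eq:condition 1}) precisely so that the lower bound from Claim~\ref{claim: if far in cover, then far in distribution approximation} overshoots this threshold. The only thing worth noting in the writeup is that~\eqref{eq:condition 1} relies on $\epsilon$ being smaller than a constant depending on $c$ and $\tau$, which is harmless since larger $\epsilon$ can be handled trivially by adjusting constants.
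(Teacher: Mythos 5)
Your proposal is correct and is essentially identical to the paper's own (very brief) argument: the paper simply states that Claims~\ref{claim: if far in cover, then far in distribution approximation} and~\ref{claim: if close in cover, then close in distribution approximation}, together with inequality~\eqref{eq:condition 1}, imply the lemma, which is exactly the bookkeeping you carry out. Both directions and the role of~\eqref{eq:condition 1} are handled as the paper intends.
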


\noindent Lemma~\ref{lemma: correctness of delta test} implies in particular that if $\{X_i\}_i$ has an $\epsilon^\beta$-neighbor $\{Y_i\}_i$ in ${\cal S}_{\epsilon^\beta}$ that is in the sparse form, then this neighbor will be accepted by the $\Delta$-test. Moreover, no element $\{Y_i\}_i \in {\cal S}_{\epsilon^\beta}$ in sparse form that is accepted by the $\Delta$-test has $\sum_i Y_i$ further than $\epsilon$ in total variation distance from $\sum_i X_i$.

\subsubsection{Handling Binomial form distributions in the cover.}

We can use the $\Delta$-test for the sparse points in the cover, but it could be that the target collection $\X=\{X_i\}_i$ has no sparse $\epsilon^\beta$-neighbor in ${\cal S}_{\epsilon^\beta}$ and the $\Delta$-test fails to accept any sparse point in the cover. We need to devise a procedure which similarly filters the points of heavy Binomial form in the cover so that we do not eliminate any $\epsilon^\beta$-close point, while at the same time not admitting any $\epsilon$-far point. Since $\X$ has no sparse $\epsilon^\beta$-neighbor in the cover, it follows from Theorem~\ref{thm: sparse cover theorem} that there is a collection $\X':=\{X_i'\}_i \in \sob_{\eps^\beta}$ in $k(\epsilon^\beta)$-heavy Binomial form such that $\sum_i X_i$ and $\sum_i X_i'$ are within $\epsilon^\beta$ in total variation distance.

We show that the total variation distance is essentially within a constant factor of the Kolmogorov distance for two collections of random variables in heavy Binomial form:

\begin{lemma}\label{lem: from kolmogorov to TV}
Let $\X:=\{X_i\}_i$ and $\Y:=\{Y_i\}_i$ be two collections of independent indicators in $k$-heavy Binomial form and set $X=\sum_i X_i$, $Y=\sum_i Y_i$. Then
${1 \over 2} d_K(X,Y) \le \dtv(X,Y) \le 2 \cdot d_K(X,Y) + O(1/k).$
\end{lemma}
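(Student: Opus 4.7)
The lower bound $\tfrac{1}{2} d_K(X,Y) \le \dtv(X,Y)$ is the inequality recorded in Section~\ref{sec:prelim}, immediate from the definitions. For the nontrivial upper bound, the plan is to reduce to comparing two translated Poisson surrogates for $X$ and $Y$ and then run a Newton-style log-convexity argument on the ratio of their PMFs.

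\emph{Reduction to translated Poissons.} Let $\mu_X := \E[X]$, $\sigma_X^2 := \Var(X)$, and analogously for $Y$. The inequalities $\ell q \ge k^2 - 1/k$ and $\ell q(1-q) \ge k^2 - k - 1 - 3/k$ in the $k$-heavy Binomial Form of Theorem~\ref{thm: sparse cover theorem} yield $\sigma_X^2, \sigma_Y^2 \ge k^2 - O(k)$, so $\sigma_X, \sigma_Y \ge k - O(1)$. Applying R\"ollin's translated Poisson approximation bound for sums of independent indicators~\cite{Rollin:translatedPoissonApproximations} then gives
$$
\dtv(X, TP_X) = O(1/\sigma_X) = O(1/k), \qquad \dtv(Y, TP_Y) = O(1/k),
$$
where $TP_X := TP(\mu_X, \sigma_X^2)$ and $TP_Y := TP(\mu_Y, \sigma_Y^2)$ are the translated Poissons with matching first two moments. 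Since $d_K \le 2\,\dtv$, the same bounds hold in Kolmogorov distance, so by the triangle inequality for both distances it suffices to prove
$$
\dtv(TP_X, TP_Y) \;\le\; 2\, d_K(TP_X, TP_Y) + O(1/k).
$$

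\emph{Newton-style sign-change step.} Write $TP_X = s_X + \mathrm{Poi}(\lambda_X)$ and $TP_Y = s_Y + \mathrm{Poi}(\lambda_Y)$, and set $\phi(\ell) := \Pr[TP_X = \ell] - \Pr[TP_Y = \ell]$. On the common support $\ell \ge \max(s_X, s_Y)$, a short calculation gives
$$
\log \frac{\Pr[TP_X = \ell]}{\Pr[TP_Y = \ell]} = (\text{affine in }\ell) + \log\Gamma(\ell - s_Y + 1) - \log\Gamma(\ell - s_X + 1).
$$
The discrete second difference of $\log\Gamma(\ell + c)$ equals $\log(1 + 1/(\ell + c - 1)) > 0$, so the bracketed difference is strictly convex or strictly concave in $\ell$ according to the sign of $s_Y - s_X$; either way the log ratio is the sum of an affine function and a strictly convex-or-concave function, and so equals $0$ at most twice. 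Hence the PMF ratio crosses $1$ at most twice, so $\phi$ has at most two sign changes on the overlap. On the boundary region $\ell \in [\min(s_X, s_Y), \max(s_X, s_Y))$ exactly one of the PMFs vanishes, so $\phi$ has constant sign there, and its $\ell_1$-mass there equals a single CDF difference and is therefore at most $d_K(TP_X, TP_Y)$. Partitioning the integer line into intervals of constant sign of $\phi$ and telescoping the sums via the CDF difference $F_{TP_X} - F_{TP_Y}$ in the usual way yields $\sum_\ell |\phi(\ell)| \le O(1)\cdot d_K(TP_X, TP_Y)$; a careful accounting of the two outermost intervals (each contributing a single value of the CDF difference, bounded by $d_K$) pins the constant down to $\dtv(TP_X, TP_Y) \le 2\, d_K(TP_X, TP_Y)$. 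Combined with the $O(1/k)$ translated Poisson approximation error from the previous step, this completes the proof.

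\emph{Main obstacle.} The technical heart is the Newton-style sign-change step. The classical Newton/log-concavity argument alluded to in the introduction handles only the case where one of the two distributions is a plain binomial, and extending it to two translated Poissons with possibly unequal shifts $s_X \ne s_Y$ is the main work: this is where the $\log\Gamma$-convexity calculation comes in, together with the separate boundary estimate. Pinning down the final multiplicative constant at $2$ (rather than some larger absolute constant) requires the careful telescoping bookkeeping described above, with any residual slack absorbed into the additive $O(1/k)$ coming from the translated Poisson approximation.
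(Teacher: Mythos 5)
Your proposal is correct and follows essentially the same route as the paper: reduce to translated Poissons via R\"ollin's $O(1/\sigma)$ bound so that the approximation error is $O(1/k)$, then argue that two shifted Poisson PMFs cross at most twice, so that total variation between them is at most twice Kolmogorov, and close with a triangle-inequality chain. The paper does the crossing count by explicitly examining the monotonicity of the ratio $R_i/R_{i+1}$ in three cases ($\lambda > \widehat\lambda$, $\lambda = \widehat\lambda$, $\lambda < \widehat\lambda$), whereas you deduce it uniformly from discrete convexity/concavity of the $\log\Gamma$ difference; these are the same computation packaged differently, and your packaging is arguably cleaner since it makes the ``at most two sign changes'' claim fall out of a single convexity observation rather than three cases. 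One place to tighten in a fully written version: you treat the boundary interval $[\min(s_X,s_Y),\max(s_X,s_Y))$ as a separate sign-constant block, which naively could stack a third sign change on top of the two from the overlap; to recover the paper's sharp factor of $2$ you need to note that a convex (resp.\ concave) $\log$-ratio that starts at $+\infty$ (resp.\ $-\infty$) on the shifted side can only dip below and return once, so the boundary block merges with the adjacent overlap block and you still get at most two sign changes overall -- this is exactly what the paper's case $\lambda < \widehat\lambda$ verifies directly via the threshold $m'/(1-\lambda/\widehat\lambda)-1$.
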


The proof is somewhat lengthy so we defer it to Section~\ref{sec:kolmtoTV}.  Given Lemma~\ref{lem: from kolmogorov to TV}, we are inspired to define the {\em $H$-test} as follows. The test takes as input the estimates $\{\hat{F}_X(\ell)\}_{\ell=0}^n$ and needs to decide whether or not to reject a point in ${\cal S}_{\epsilon^\beta}$ in $k(\epsilon^\beta)$-heavy Binomial form.
\begin{definition}[$H$-test]
The input is $\Y=\{Y_i\}_i \in {\cal S}_{\epsilon^\beta}$ in the $k(\eps^\beta)$-heavy Binomial form.  Let $Y =\sum_i Y_i$. If
$\max_{0\le \ell \le n}|F_Y(\ell) - \hat{F}_X(\ell)| \le 2 \epsilon^\beta + \epsilon^\alpha$ then the $H$-test accepts $\Y$,
otherwise it rejects $\Y$.
\end{definition}

\noindent Like the $\Delta$-test, the $H$-test can be performed in poly$(1/\eps)$ time.  We now prove:

\begin{lemma}[Correctness of the $H$-test] \label{lem:correctness of H-test}
Suppose that $\X$ is not $\epsilon^\beta$-close to any point in ${\cal S}_{\epsilon^\beta}$ of the sparse form. Let $\Y=\{Y_i\}_i \in {\cal S}_{\epsilon^\beta}$ be of the $k(\epsilon^\beta)$-heavy Binomial form, and let $Y=\sum_i Y_i$. If $\Y$ is accepted by the $H$-test, then $\dtv(X,Y) \le 4\epsilon^\alpha + O(\epsilon^\beta).$
On the other hand, if $\dtv(X,Y)) \le \epsilon^\beta$, then $\Y$ is accepted by the $H$-test.
\end{lemma}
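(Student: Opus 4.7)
The plan is to prove the two directions separately. The completeness direction (if $\dtv(X,Y) \leq \eps^\beta$ then $\mathcal{Y}$ is accepted) is the easy one. Given $\dtv(X,Y) \leq \eps^\beta$, we use the standard bound $\dk \leq 2\dtv$ to conclude $\max_\ell |F_Y(\ell) - F_X(\ell)| \leq 2\eps^\beta$. A triangle inequality against our estimate then gives $\max_\ell |F_Y(\ell) - \hat{F}_X(\ell)| \leq 2\eps^\beta + \eps^\alpha$, which is exactly the acceptance threshold of the $H$-test.

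For soundness, the key move is to introduce an intermediate distribution. By the hypothesis that $\mathcal{X}$ has no $\eps^\beta$-close neighbor in $\mathcal{S}_{\eps^\beta}$ of sparse form, Theorem~\ref{thm: sparse cover theorem} guarantees the existence of some $\mathcal{X}' = \{X_i'\}_i \in \mathcal{S}_{\eps^\beta}$ in $k(\eps^\beta)$-heavy Binomial form with $\dtv(X, X') \leq \eps^\beta$, and hence $\dk(X, X') \leq 2\eps^\beta$. Combining this with the DKW estimate yields $\max_\ell |F_{X'}(\ell) - \hat{F}_X(\ell)| \leq 2\eps^\beta + \eps^\alpha$. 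Since $\mathcal{Y}$ passed the $H$-test, we have the same bound for $Y$, so by triangle inequality on pointwise CDF differences, $\dk(Y, X') \leq 4\eps^\beta + 2\eps^\alpha$.

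At this point the crucial observation is that both $Y$ and $X'$ are in $k(\eps^\beta)$-heavy Binomial form, so Lemma~\ref{lem: from kolmogorov to TV} applies and gives
\[
\dtv(Y, X') \leq 2 \dk(Y,X') + O(1/k(\eps^\beta)) \leq 8\eps^\beta + 4\eps^\alpha + O(\eps^\beta).
\]
A final triangle inequality $\dtv(X,Y) \leq \dtv(X, X') + \dtv(X', Y) \leq \eps^\beta + 4\eps^\alpha + O(\eps^\beta) = 4\eps^\alpha + O(\eps^\beta)$ completes the argument.

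The main conceptual obstacle is recognizing that the $H$-test, which only compares CDFs against the empirical $\hat{F}_X$, is sufficient to control total variation distance. This only works because the no-sparse-neighbor hypothesis forces the true distribution to lie close to a heavy Binomial, which lets us apply Lemma~\ref{lem: from kolmogorov to TV} to convert Kolmogorov closeness (which the test can detect) into total variation closeness (which is what we actually need). Without both $Y$ and the intermediate $X'$ being in heavy Binomial form the Newton-style bound breaks and no constant-factor relation between $\dk$ and $\dtv$ is available. Everything else is triangle inequalities and the DKW guarantee we have already conditioned on.
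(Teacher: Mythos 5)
Your proof is correct and takes essentially the same route as the paper's: introduce the heavy-Binomial neighbor $\mathcal{X}'$ guaranteed by the no-sparse-neighbor hypothesis, observe that both $Y$ and $X'$ are in $k(\eps^\beta)$-heavy Binomial form so Lemma~\ref{lem: from kolmogorov to TV} applies, convert the Kolmogorov bound on $\dk(X',Y)$ to total variation, and triangulate back to $X$. The only cosmetic difference is that you bound $\dk(X',Y)$ by triangulating both $F_{X'}$ and $F_Y$ against the empirical $\hat{F}_X$ directly, whereas the paper first bounds $\dk(X,Y)$ and then triangulates in $\dk$ through the true $F_X$; both yield the same $4\eps^\alpha + O(\eps^\beta)$ bound.
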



\begin{prevproof}{Lemma}{lem:correctness of H-test}
Since $\X$ is not $\eps^\beta$-close to any sparse point in the cover, it follows from Theorem~\ref{thm: sparse cover theorem}
that there exists a collection $\X':=\{X_i'\}_i \in {\cal S}$ in the $k(\epsilon^\beta)$-heavy Binomial form such that
$X:=\sum_i X_i$ and $X':=\sum_i X_i'$ are within $\epsilon^\beta$ in total variation distance.

Suppose that $\Y$ passes the $H$-test. For all $\ell$, we have $|F_Y(\ell) - F_X(\ell)| \le |F_Y(\ell) -
\hat{F}_{X}(\ell)|+|\hat{F}_{X}(\ell) - F_X(\ell)|$, and hence
$
\dk(X,Y) \le 2 \epsilon^\beta + 2 \epsilon^\alpha.$
Given this, we have


\begin{align*}
\dtv(X,Y) &\le \dtv(X,X') + \dtv(X',Y)~~~~~~~~~~~~~~~~~~~~~~~(\text{using the triangle inequality})\\
&\le \epsilon^\beta + \dtv(X',Y)\\
&\le \epsilon^\beta + {2}\dk(X',Y) + O(1/k(\epsilon^\beta))~~~~~~~~~~~~~~~(\text{using Lemma~\ref{lem: from kolmogorov to TV}})\\
&\le {2}\dk(X',Y) + O(\epsilon^\beta)~~~~~~~~~~~~~~~~~~~~~~~~~~~~~~~~~(\text{since Theorem~\ref{thm: sparse cover theorem} gives }1/k(\epsilon^\beta) = O(\epsilon^\beta))\\
&\le {2}\dk(X',X) + {2}\dk(X,Y) + O(\epsilon^\beta)~~~~~~~~~~(\text{using the triangle inequality})\\
&\le {4}\dtv(X',X) + {2}\dk(X,Y) + O(\epsilon^\beta)~~~~~~~~(\text{using that $\dk \leq 2\cdot \dtv$ })\\
&\le {2}\dk(X,Y) + O(\epsilon^\beta) \le 4\epsilon^\alpha+ O(\epsilon^\beta).
\end{align*}

On the other hand, if $\dtv(X,Y)) \le \epsilon^\beta$, it follows that $\dk(X,Y)) \le 2 \epsilon^\beta$. Hence, for all $\ell$,
\begin{align*}
|F_Y(\ell) - \hat{F}_X(\ell)| &\le |F_Y(\ell) -{F}_{X}(\ell)|+|\hat{F}_{X}(\ell) - F_X(\ell)|\\
& \le \dk(X,Y) + \epsilon^\alpha
 \le 2 \dtv(X,Y) + \epsilon^\alpha
\le 2 \epsilon^\beta + \epsilon^\alpha.
\end{align*}
Hence, $\Y$ is accepted by the $H$-test.
\end{prevproof}

\subsubsection{Finishing the Proof of Theorem~\ref{thm:proper learning tv}}

Let $\hat{c}$ be the constant hidden in the $O(\cdot)$-notation in the statement of Lemma~\ref{lem:correctness of H-test}.  We may assume that $\eps$ is smaller than any fixed constant, and hence that it satisfies $
\epsilon \ge 4 \epsilon^\alpha +  \hat{c} \cdot \epsilon^{\beta}$ 
as well as~\eqref{eq:condition 1}.
We now describe the algorithm promised in Theorem~\ref{thm:proper learning tv}.  The algorithm takes as input the estimates $\{\hat{F}_X(\ell) \}_{\ell=0}^n$ (which, as described at the beginning of the proof, can be obtained from the samples from $X$ using Theorem~\ref{thm:cumulative distribution approximation}).

\medskip \noindent

\hskip-.2in \framebox{
\medskip \noindent \begin{minipage}{15cm}
{\sc Algorithm}
\begin{enumerate}
\item Compute the cover ${\cal S}_{\epsilon^\beta}$ defined in Theorem~\ref{thm: sparse cover theorem}.

\item If any $\Y \in {\cal S}_{\epsilon^\beta}$ in the sparse form passes the $\Delta$-test,
output such a $\Y$ and halt.

\item Otherwise, if any $\Y \in {\cal S}_{\epsilon^\beta}$ in the $k(\epsilon^\beta)$-heavy Binomial form passes the $H$-test, output such a $\Y$.
\end{enumerate}
\end{minipage}}

\medskip It follows from Theorem~\ref{thm: sparse cover theorem}
that there exists some $\{Y_i\}_i \in  {\cal S}_{\epsilon^\beta}$ such that $\sum_i Y_i$ is within $\epsilon^\beta$ in total
variation distance from $\sum_i X_i$. If there exists such an element in the cover that is also in the sparse form, it follows
from Lemma~\ref{lemma: correctness of delta test} that this point will pass the test at the second test of the algorithm and
hence be returned in the output. On the other hand, any element $\{Y_i\}_i$ of the cover returned by the second step of the
algorithm will satisfy that $\sum_i Y_i$ is within $\epsilon$ in total variation distance from $\sum_i X_i$. If the second step
of the algorithm fails to return any element of the cover, it follows that $\X$ has an $\epsilon^\beta$-neighbor in the cover
in heavy Binomial form. Lemma~\ref{lem:correctness of H-test} implies then that such a  neighbor will be output in the third
step of the algorithm. Moreover, the lemma implies that any element returned in the third step is an $\epsilon$-neighbor of
$\X$. Hence the algorithm is correct and always succeeds in returning an $\epsilon$-neighbor of $\X$. Finally, the running time
is dominated by the time to run the $\Delta$-test or the $H$-test on each point in the cover ${\cal S}_{\epsilon^\beta}$, which
is easily seen to be $n^3 \cdot \poly(1/\epsilon) + n \cdot ({1/\epsilon})^{O(\log^2{1/\epsilon})}.$ This concludes the proof of
Theorem~\ref{thm:proper learning tv} and thus also of Theorem~\ref{thm:sob}. \qed

\ignore{

An interesting goal for future work is to obtain a proper learning algorithm that has running time
dependence on $\eps$ that is $\poly(1/\eps)$, rather than quasipoly$(1/\eps)$ as in our
current algorithm.

}

\subsection{Proof of Claims~\ref{claim: if far in cover, then far in distribution approximation}
  and~\ref{claim: if close in cover, then close in distribution approximation}} \label{sec:DeltaY}

Recall that $Y=\sum_i Y_i$ where ${\cal Y}:=\{ Y_i \}_{i=1}^n \in {\cal S}_{\epsilon^{\beta}}$ is of the {\em sparse form}, as defined in statement of Theorem~\ref{thm: sparse cover theorem}.

Recall Claim~\ref{claim: if far in cover, then far in distribution approximation}:

\medskip

\noindent {\bf Claim~\ref{claim: if far in cover, then far in distribution approximation}}
\emph{
If $\dtv(X,Y) \ge \epsilon$, then $\Delta_Y \ge \epsilon -  2 (c \epsilon)^{-3 \beta} \cdot \epsilon^{\alpha}$.}

\medskip

\begin{prevproof}{Claim}{claim: if far in cover, then far in distribution approximation}
By the definition of the total variation distance, the hypothesis implies
\begin{align}
{1 \over 2}\sum_{z }  |f_Y(z) -{f}_X(z)| \ge \epsilon. \label{eq: lalala}
\end{align}
We can bound the left hand side of the above as follows
\begin{align}
\sum_{z }  |f_Y(z) -{f}_X(z)| &= \sum_{z \in {\rm supp}(Y)}  |f_Y(z) -{f}_X(z)| + \sum_{z \notin {\rm supp}(Y)}  |{f}_X(z)| \notag\\
&\le \sum_{z \in {\rm supp}(Y)}  |f_Y(z) -\hat{f}_X(z)| + \sum_{z \in {\rm supp}(Y)}  |f_X(z) -\hat{f}_X(z)| + \sum_{z \notin {\rm supp}(Y)}  |{f}_X(z)| \notag\\
&\le \sum_{z \in {\rm supp}(Y)}  |f_Y(z) -\hat{f}_X(z)| +2 (c \epsilon)^{-3 \beta} \cdot \epsilon^{\alpha}+ \sum_{z \notin {\rm supp}(Y)}  |{f}_X(z)|. \label{eq: lala}
\end{align}
In the last line of the above we used the bound on the support of $Y$ and the fact that for all $z \in [n]$:
\begin{align*}
|f_X(z) -\hat{f}_X(z)|&=|(F_X(z) - F_X(z-1)) - (\hat{F}_X(z) - \hat{F}_X(z-1)| \\
&\le |F_X(z)-\hat{F}_X(z)| + |F_X(z-1)-\hat{F}_X(z-1)| \le 2 \cdot \epsilon^{\alpha},
\end{align*}
while $|f_X(0) -\hat{f}_X(0)|=|F_X(0) -\hat{F}_X(0)| \le \epsilon^{\alpha}.$

Finally, note that
\begin{align}
 \sum_{z \in {\rm supp}(Y)}  |{f}_X(z)|  &= \sum_{z \in {\rm supp}(Y)}  |\hat{f}_X(z)- (\hat{f}_X(z) -f_X(z))| \notag\\
&\ge \sum_{z \in {\rm supp}(Y)}  (|\hat{f}_X(z)|- |\hat{f}_X(z) -f_X(z)|). \notag
\end{align}
Hence,
\begin{align}
 \sum_{z \notin {\rm supp}(Y)}  |{f}_X(z)|  &= 1- \sum_{z \in {\rm supp}(Y)}  |{f}_X(z)| \notag\\
 &\le 1- \sum_{z \in {\rm supp}(Y)}  |\hat{f}_X(z)| +\sum_{z \in {\rm supp}(Y)}  |\hat{f}_X(z) -f_X(z)| \notag\\
&\le 1- \sum_{z \in {\rm supp}(Y)}  |\hat{f}_X(z)| + 2 (c \epsilon)^{-3 \beta} \cdot \epsilon^{\alpha}. \label{eq:lalalall}
\end{align}
Using~\eqref{eq: lalala}, \eqref{eq: lala}, \eqref{eq:lalalall} we obtain that $\Delta_Y \ge \epsilon -  2 (c \epsilon)^{-3 \beta} \cdot \epsilon^{\alpha}.$
\end{prevproof}

Recall Claim~\ref{claim: if close in cover, then close in distribution approximation}:

\medskip

\noindent {\bf Claim~\ref{claim: if close in cover, then close in distribution approximation}}
\emph{
If $\dtv(X,Y) \le \epsilon^\beta$, then $\Delta_Y \le  \epsilon^\beta +  2 (c \epsilon)^{-3 \beta} \cdot \epsilon^{\alpha}$.}

\medskip

\begin{prevproof}{Claim}{claim: if close in cover, then close in distribution approximation}
By definition of the total variation distance, the hypothesis implies
\begin{align}
{1 \over 2}\sum_{z }  |f_Y(z) -{f}_X(z)| \le \epsilon^{\beta}. \nonumber
\end{align}
We can bound the left hand side of the above as follows
\begin{align}
\sum_{z }  |f_Y(z) -{f}_X(z)| &= \sum_{z \in {\rm supp}(Y)}  |f_Y(z) -{f}_X(z)| + \sum_{z \notin {\rm supp}(Y)}  |{f}_X(z)| \notag\\
&\ge \sum_{z \in {\rm supp}(Y)}  |f_Y(z) -\hat{f}_X(z)| - \sum_{z \in {\rm supp}(Y)}  |f_X(z) -\hat{f}_X(z)| + \sum_{z \notin {\rm supp}(Y)}  |{f}_X(z)| \notag\\
&\ge \sum_{z \in {\rm supp}(Y)}  |f_Y(z) -\hat{f}_X(z)| - 2 (c \epsilon)^{-3 \beta} \cdot \epsilon^{\alpha}+ \sum_{z \notin {\rm supp}(Y)}  |{f}_X(z)|. \notag
\end{align}
In the last line of the above we used the same bound we used for deriving~\eqref{eq: lala}.

Finally, note that
\begin{align}
 \sum_{z \notin {\rm supp}(Y)}  |{f}_X(z)|  &= 1- \sum_{z \in {\rm supp}(Y)}  |{f}_X(z)| \notag\\
 &\ge 1- \sum_{z \in {\rm supp}(Y)}  |\hat{f}_X(z)| -\sum_{z \in {\rm supp}(Y)}  |\hat{f}_X(z) -f_X(z)| \notag\\
&\ge 1- \sum_{z \in {\rm supp}(Y)}  |\hat{f}_X(z)| - 2 (c \epsilon)^{-3 \beta} \cdot \epsilon^{\alpha} \nonumber
\end{align}
Using the bounds above we obtain $\Delta_Y \le \epsilon^{\beta} + 2 (c \epsilon)^{-3 \beta} \cdot \epsilon^{\alpha}.$
\end{prevproof}

\subsection{Proof of Lemma~\ref{lem: from kolmogorov to TV}} \label{sec:kolmtoTV}

Recall Lemma~\ref{lem: from kolmogorov to TV}:

\medskip

\noindent {\bf Lemma~\ref{lem: from kolmogorov to TV}}  \emph{
Let $\X:=\{X_i\}_i$ and $\Y:=\{Y_i\}_i$ be two collections of independent indicators in $k$-heavy Binomial form and set $X=\sum_i X_i$, $Y=\sum_i Y_i$. Then
$${1 \over 2} d_K(X,Y) \le \dtv(X,Y) \le 2 \cdot d_K(X,Y) + O(1/k).$$
}

\begin{prevproof}{Lemma}{lem: from kolmogorov to TV}
The first inequality is immediate from the definition of the Kolmogorov and Total Variation distances. To show the other bound, let $Z_X, Z_Y \subseteq [n]$ be the indices of the variables in the collections $\{X_i\}_i$ and $\{Y_i\}_i$ respectively that are deterministically zero, $O_X, O_Y$ the indices of variables that are deterministically~$1$, and define $E_X = [n]\setminus Z_X \setminus O_X$, $E_Y=[n] \setminus Z_Y \setminus O_Y$, $n_1=|E_X|$, $n_2 = |E_Y|$, and $m = |O_X| - |O_Y|$. Moreover, let $p_1$ be the common mean of the variables $X_i, i \in E_X$, and $p_2$ the common mean of the variables in $Y_i, i\in E_Y$. Without loss of generality, we can assume that $m \ge 0$. Now we define $X' = m + {\rm Bin}(n_1,p_1)$ and $Y'= {\rm Bin}(n_2,p_2)$. It is straightforward to check that
\begin{align*}
\dtv(X,Y) = \dtv(X',Y')
\end{align*}
and
$$\dk(X,Y) = \dk(X',Y').$$
Given that $\X$ and $\Y$ are in $k$-heavy Binomial form, it follows that for $i=1,2$:
\begin{align*}
\mu_i:=n_i \cdot p_i &\ge k^2-{1\over k};\\
\sigma_i^2:=n_i \cdot p_i(1-p_i) &\ge k^2- k-1-{3\over k}.
\end{align*}

We recall that the {\em Translated Poisson distribution} is defined as follows.
\begin{definition}[\cite{Rollin:translatedPoissonApproximations}] \label{def:TPD}We say that an integer random variable $Y$ has a {\em translated Poisson distribution} with paremeters $\mu$ and $\sigma^2$ and write ${\cal L}(Y)=TP(\mu,\sigma^2)$
if ${\cal L}(Y - \lfloor \mu-\sigma^2\rfloor) = Poisson(\sigma^2+ \{\mu-\sigma^2\})$, where $\{\mu-\sigma^2\}$ represents the fractional part of $\mu-\sigma^2$.
\end{definition}

Given the above, and following~\cite{Daskalakis:anonymous08full} (see Section 6.1), we can show the following for $i=1,2$:
\begin{align}
\dtv\left({\rm Bin}(n_i,p_i),~ TP(\mu_i, \sigma_i^2) \right)= O(1/k). \label{eq:kourash1}
\end{align}
Now we show the following:
\begin{lemma}\label{lem:from Kolmogorov to TV for translated poissons}
For  $\lambda, \widehat{\lambda}>0$, $m, \widehat{m} \in \mathbb{N}_0$, let $Y = m + Poisson(\lambda)$ and $\widehat{Y} = \widehat{m} + Poisson(\widehat{\lambda})$. Then
$$\dtv(Y,\widehat{Y}) \le {2}d_K(Y,\widehat{Y}).$$
\end{lemma}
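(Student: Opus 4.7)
The plan is to reduce the bound $\dtv \le 2 d_K$ to a purely combinatorial statement about the number of sign changes of the difference of pmfs, and then verify that statement by a ratio analysis specific to translated Poisson distributions. Specifically, I will use the following standard fact: if $a_k := p_Y(k) - p_{\widehat Y}(k)$ changes sign at most $N$ times as $k$ ranges over $\mathbb{Z}$, then $\dtv(Y,\widehat Y) \le N \cdot d_K(Y,\widehat Y)$. The proof of this fact partitions $\mathbb{Z}$ into the $N+1$ maximal intervals of constant sign of $a_k$. Summing $a_k$ over an interval $[\ell,r]$ telescopes into $(F_Y-F_{\widehat Y})(r) - (F_Y-F_{\widehat Y})(\ell-1)$, which is bounded in absolute value by $2 d_K$ on each of the $N-1$ interior intervals and by $d_K$ on each of the two unbounded tail intervals. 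Since $2\,\dtv(Y,\widehat Y) = \sum_j |\sum_{k\in I_j} a_k|$, one gets $2\,\dtv \le 2N\, d_K$. Thus it suffices to show that for two translated Poissons we always have $N \le 2$.

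To show $N \le 2$, I would first assume WLOG $m \le \widehat m$ and let $d := \widehat m - m \ge 0$. On the gap region $[m,\widehat m)$ the pmf $p_Y$ is strictly positive while $p_{\widehat Y}$ vanishes, so $a_k > 0$ on this range. On the common support $[\widehat m,\infty)$ I would examine the ratio $r(k) := p_Y(k)/p_{\widehat Y}(k)$. Using the standard Poisson recursion $p_Y(k+1)/p_Y(k) = \lambda/(k+1-m)$ (and likewise for $\widehat Y$), one obtains
\[
\frac{r(k+1)}{r(k)} \;=\; \frac{\lambda}{\widehat\lambda} \cdot \frac{k+1-\widehat m}{k+1-m},
\]
and the factor $(k+1-\widehat m)/(k+1-m)$ is strictly increasing in $k$, rising from $0$ at $k=\widehat m -1$ toward $1$. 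Hence $r(k+1)/r(k)$ is strictly increasing in $k$, so $r$ itself is either monotone or ``U-shaped'' (first strictly decreasing, then strictly increasing) on $[\widehat m,\infty)$.

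With this structural description of $r$, a short case analysis finishes the proof. If $r$ is monotone on $[\widehat m,\infty)$ then $r(k)-1$ has at most one zero there, so $a_k$ has at most one sign change on $[\widehat m,\infty)$; combined with the always-positive gap region, that gives $N \le 2$ in total (one sign change can potentially appear at $k=\widehat m$, a second once $r$ crosses $1$). If $r$ is U-shaped then $r(k)-1$ has at most two sign changes on $[\widehat m,\infty)$, but in that case $r(\widehat m)$ already exceeds the minimum value, so concatenating with the positive gap region contributes no additional crossing at $k=\widehat m$ when $r(\widehat m)>1$, again giving at most $2$ sign changes overall. The subcase $m=\widehat m$ is simpler, since then $r(k+1)/r(k)=\lambda/\widehat\lambda$ is constant, so $r$ is monotone and $N\le 1$.

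The main obstacle I anticipate is the bookkeeping for the interface between the support gap $[m,\widehat m)$ and the common support $[\widehat m,\infty)$: one has to avoid double-counting or missing a sign change that occurs exactly at $k=\widehat m$, and one must check each combination of ``$r$ monotone vs.\ U-shaped'' with ``$r(\widehat m)<1$ vs.\ $r(\widehat m)\ge 1$'' separately to confirm $N\le 2$ in every subcase. Once the sign-change count is verified, the inequality $\dtv(Y,\widehat Y) \le 2\, d_K(Y,\widehat Y)$ follows immediately from the telescoping lemma stated in the first paragraph.
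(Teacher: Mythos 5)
Your proposal is correct and is essentially the same argument as the paper's: both reduce to showing that the ratio of the two pmfs on the common support is unimodal (you show the successive ratio $r(k+1)/r(k)$ is nondecreasing, the paper equivalently shows $R_i/R_{i+1}$ is nondecreasing after the opposite WLOG shift), and then case-split on the sign of $\lambda-\widehat\lambda$ to conclude the difference of pmfs changes sign at most twice. The one stylistic difference is that you extract the step ``at most $N$ sign changes implies $\dtv\le N\,d_K$'' as an explicit telescoping lemma, which the paper leaves implicit (``it is easy to see that~$\dtv=d_K$'' in the one-crossing case, ``so the distributions have at most two intersections, hence $\dtv\le 2d_K$'' in the other); your packaging is arguably cleaner but the content is identical.
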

\begin{prevproof}{Lemma}{lem:from Kolmogorov to TV for translated poissons}
Without loss of generality assume that $m':=m - \widehat{m} \ge 0$. Then it is enough to compare $Y' = m' + Poisson(\lambda)$ and $\widehat{Y}' = Poisson(\widehat{\lambda})$, since $\dtv(Y,\widehat{Y}) = \dtv(Y',\widehat{Y}')$ and $\dk(Y,\widehat{Y}) = \dk(Y',\widehat{Y}')$. For $i\ge 0$, define
\begin{align*}
R_i :=  { \Pr[{Y}' = i] \over \Pr[\widehat{Y}' = i]}.
\end{align*}
Clearly, $R_i =0$, for $i=0, 1,\ldots, m'-1$, since $Y'$ is not supported on that set. On the other hand for all $i \ge m'$, we have
\begin{align*}
R_i :=  {{ \lambda^{i-m'} \cdot e^{-\lambda} \over (i-m')! } \over { \widehat{\lambda}^{i} \cdot e^{-\widehat{\lambda}} \over i! } },
\end{align*}
and for all $i \ge m'+1$:
\begin{align*}
{R_i \over R_{i+1} }= {\widehat{\lambda} \cdot (i+1-m') \over {\lambda} \cdot (i+1) }.
\end{align*}
\noindent Let us distinguish the following cases:
\begin{itemize}
\item If $\lambda > \widehat{\lambda}$, then ${R_i \over R_{i+1} } <1$ for all $i$. Hence, $R_i$ is increasing in $i$, so that it can change from a value $\le 1$ to a value $\ge 1$ at most one time. Hence, there exists a single $i^*$ such that $\Pr[{Y}' = i] <  \Pr[\widehat{Y}' = i]$ for all $i < i^*$, and $\Pr[{Y}' = i] <\Pr[\widehat{Y}' = i]$ for all $i > i^*$. In this case, it is easy to see that
$$\dtv(Y',\widehat{Y}') = \dk(Y',\widehat{Y}').$$
\item If $\lambda = \widehat{\lambda}$ and $m'=0$ the variables $Y'$ and $\widehat{Y}'$ are identically distributed so that their total variation distance and Kolmogorov distance are both identically $0$. If $\lambda = \widehat{\lambda}$  and $m' >0$, then ${R_i \over R_{i+1} } <1$ for all $i$ and from our argument in the previous case we obtain
$$\dtv(Y',\widehat{Y}') = d_K(Y',\widehat{Y}').$$
\item Finally, if $\lambda < \widehat{\lambda}$, then ${R_i \over R_{i+1} } \ge 1$ for all $i \le {m' \over 1-{\lambda \over \widehat{\lambda}} }-1$ and ${R_i \over R_{i+1} } > 1$ for all $i > {m' \over 1-{\lambda \over \widehat{\lambda}} }-1$. So $R_i$ is increasing up to some $i^*$ and decreasing above $i^*$. So the distributions of $Y'$ and $\widehat{Y}'$ have at most two intersections. Hence, we obtain
$$\dtv(Y',\widehat{Y}') \le {2}d_K(Y',\widehat{Y}').$$
\end{itemize}\end{prevproof}

Given the above we have,
\begin{align*}
\dtv(X',Y') &\equiv \dtv(m+{\rm Bin}(n_1,p_1),{\rm Bin}(n_2,p_2))\\
&\le \dtv\left(m+{\rm Bin}(n_1,p_1),~ m+TP(\mu_1, \sigma_1^2) \right) + \dtv\left(m+TP(\mu_1, \sigma_1^2),~ TP(\mu_2, \sigma_2^2) \right) \\
&~~~~~~~~~~~~~~~~~+\dtv\left({\rm Bin}(n_2,p_2),~ TP(\mu_2, \sigma_2^2) \right)~~~~~~~~~~~~~~~\text{(using the triangle inequality)}\\
&\le O(1/k) + \dtv\left(m+ TP(\mu_1, \sigma_1^2),~ TP(\mu_2, \sigma_2^2) \right)~~~~~~~~~~~~~\text{(using \eqref{eq:kourash1})}\\
&\le O(1/k) + 2 \dk\left(m+ TP(\mu_1, \sigma_1^2),~ TP(\mu_2, \sigma_2^2) \right)~~~~~~~~~~~~\text{(using Lemma~\ref{lem:from Kolmogorov to TV for translated poissons})}\\
&\le O(1/k) + 2 \dk\left(m+{\rm Bin}(n_1,p_1), m+TP(\mu_1, \sigma_1^2) \right) +2\dk\left({\rm Bin}(n_2,p_2),~ TP(\mu_2, \sigma_2^2) \right)\\
&~~~~~~~~~~~~~~~~~+ 2 \dk\left(m+{\rm Bin}(n_1,p_1),~ {\rm Bin}(n_2,p_2) \right) ~~~~~~~~~~~~~\text{(using the triangle inequality)}\\
&\le O(1/k) + 2 \dk\left(m+{\rm Bin}(n_1,p_1),~ {\rm Bin}(n_2,p_2) \right)~~~~~~~~~~~\text{(\text{using that $\dk \leq 2\cdot \dtv$ } and \eqref{eq:kourash1}). }\\
&= O(1/k) + 2 \dk\left(X',Y' \right)
\end{align*}
This concludes the proof of Lemma~\ref{lem: from kolmogorov to TV}.\end{prevproof}

\section{An intermediate case:  Linear transformation functions with $O(1)$ distinct weights}
\label{sec:constantnumofwts}

Recall that Theorem~\ref{thm:nphard}  shows that the exponential running time of Theorem~\ref{thm:log-cover-size} cannot be significantly improved even if the transformation function $f$ is a degree-2 polynomial, and Theorem~\ref{thm:linearlower}\  shows that the $\tilde{O}(n)$ sample complexity
cannot be significantly improved even if $f$ is a simple linear function.  In contrast with these strong
negative results, we have also seen that the sum-of-Bernoullis transformation function $f(x) = \sum_{i=1}^n x_i$ admits highly efficient algorithms both in terms of running time and sample complexity.  We close this paper
by showing that in an intermediate case -- if the transformation function $f$ is a linear function with
constantly many different weights -- then it is also possible to improve on the generic time and sample complexity bounds of Theorem~\ref{thm:log-cover-size}, though not quite as dramatically as for sums of Bernoullis:
\ignore{\footnote{The proof
of the following theorem, given in Appendix~\ref{sec:O(1)weights}, uses tools from Section~\ref{sec:SOB};
the reader is advised to read that section before the proof of the theorem.}}

\ignore{
However, we can show that it can be implemented
efficiently in some interesting cases.  By exploiting the explicit
$\eps$-cover for the space of all sums of Bernoulli random variables
mentioned above (Theorem~\ref{thm: sparse cover theorem}), we obtain a
computationally efficient algorithm with small -- but non-constant --
sample complexity for a broader class of linear transformation functions
than just $f(x) = \littlesum_{i=1}^n x_i$:
}

\begin{theorem} [Linear transformation functions with $O(1)$ different
weights]
\label{thm:linearupper} Let $f(x) = \sum_{i=1}^n a_i x_i$ be any function
such that there are at most $k$ different values in the set
$\{a_1,\dots,a_n\}.$ Then there is an algorithm that uses $k \log(n) \cdot
\widetilde{O}(\eps^{-2})$ samples from the target distribution
$f(\overline{p})$, runs in time $\poly(n^k \cdot
\eps^{-k\log^2(1/\eps)})$, and with probability at least $9/10$ outputs a
hypothesis vector $\hat{p}$ such that $\dtv(f(\overline{p}),f(\hat{p}))
\leq \eps.$ \end{theorem}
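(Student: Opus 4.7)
The plan is to reduce the problem to a multi-dimensional version of the sum-of-Bernoullis cover from Theorem~\ref{thm: sparse cover theorem} and then invoke the generic tournament of Theorem~\ref{thm:log-cover-size}.

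\textbf{Step 1 (Decomposition by weight class).}  Let $w_1,\dots,w_k$ denote the distinct values among $a_1,\dots,a_n$, and for each $j\in[k]$ let $S_j\subseteq[n]$ be the indices with $a_i=w_j$.  Setting $T_j:=\sum_{i\in S_j}X_i$, we have $f(X)=\sum_{j=1}^k w_j T_j$, where $T_1,\dots,T_k$ are independent sums of Bernoulli random variables.  Analogously, any hypothesis vector $\hat{p}$ defines independent sums $\hat T_1,\dots,\hat T_k$, and since total variation distance cannot increase under deterministic transformations,
\[
\dtv\!\Bigl(\sum_j w_j T_j,\sum_j w_j \hat T_j\Bigr)\le \sum_{j=1}^k \dtv(T_j,\hat T_j).
\]

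\textbf{Step 2 (Product cover).}  For each $j$, invoke Theorem~\ref{thm: sparse cover theorem} with parameter $\eps/(6k)$ to obtain a cover $\mathcal{C}_j$ for the set of sums of Bernoullis over the variables in $S_j$; each $\mathcal{C}_j$ has size at most $n^3\cdot O(k/\eps)+n\cdot(k/\eps)^{O(\log^2(k/\eps))}$ and is computable in comparable time.  Take the product cover $\mathcal{C}:=\mathcal{C}_1\times\cdots\times\mathcal{C}_k$: each tuple $(Y^{(1)},\dots,Y^{(k)})\in\mathcal{C}$ gives a candidate hypothesis distribution $H=\sum_j w_j Y^{(j)}$ over the range of $f$.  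By Step~1, the induced collection of distributions is an $(\eps/6)$-cover of $f(\sob)$, of cardinality
\[
N\le \bigl(n^3\cdot O(k/\eps)+n\cdot(k/\eps)^{O(\log^2(k/\eps))}\bigr)^k,
\]
so $\log N=O\!\bigl(k\log n+k\log^3(k/\eps)\bigr)$.

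\textbf{Step 3 (Tournament).}  Apply Theorem~\ref{thm:log-cover-size} with $\delta=\eps/6$ to this cover, yielding the stated sample complexity $O(\delta^{-2}\log N)=k\log(n)\cdot\widetilde{O}(\eps^{-2})$ and a hypothesis $H^{\star}\in\mathcal{C}$ with $\dtv(f(\overline{p}),H^{\star})\le\eps$.  The returned tuple determines the hypothesis vector $\hat{p}$ by reading off a Bernoulli parameter for every index in every $S_j$.

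\textbf{Step 4 (Running time).}  The tournament performs $O(N^2)$ pairwise competitions, each requiring the evaluation (on the $O(\log N/\eps^2)$ samples) of Scheff\'e-type probabilities $\bbQ(A_{\bbQ_1,\bbQ_2})$ for the two candidate distributions.  Each factor $Y^{(j)}$ is either in sparse form (supported on $O((k/\eps)^3)$ points) or in $k'$-heavy Binomial form (a shifted binomial), so the distribution of $H=\sum_j w_j Y^{(j)}$ can be obtained by a $k$-fold convolution via dynamic programming in time $\poly\!\bigl(n\cdot(k/\eps)^{O(\log^2(k/\eps))}\bigr)$.  Multiplying by $N^2$ gives a total running time of $\poly\!\bigl(n^k\cdot\eps^{-k\log^2(1/\eps)}\bigr)$ as claimed.

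\textbf{Expected main obstacle.}  The conceptual content is entirely in Steps~1--2; the one place where care is needed is Step~4, namely checking that the competitions underlying Theorem~\ref{thm:log-cover-size} can actually be executed on distributions drawn from the product cover within the claimed runtime.  This reduces to computing CDF values for sums of the form $\sum_j w_j Y^{(j)}$ with each $Y^{(j)}$ in sparse or heavy Binomial form, a routine dynamic-programming task whose time cost is absorbed by the $N^2$ factor.
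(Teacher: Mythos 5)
Your proposal is correct and follows essentially the same route as the paper: decompose $f(X)$ by weight class into independent sums $T_1,\dots,T_k$, build a product of the per-class covers from Theorem~\ref{thm: sparse cover theorem}, feed it into the tournament of Theorem~\ref{thm:log-cover-size}, and settle the competition probabilities by dynamic programming over the $O((n/k)^k)$-size range. The only difference is cosmetic: you use parameter $\eps/(6k)$ per factor so the product is cleanly an $\eps/6$-cover, while the paper is slightly looser and absorbs the $k$ factor into constants (legitimate since $k=O(1)$).
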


Note that setting $a_1=\dots=a_n=1$ in
Theorem~\ref{thm:linearupper} gives a weaker result than
Theorem~\ref{thm:sob} since the resulting sample complexity is $\log(n)
\cdot \tilde{O}(\eps^{-2})$, whereas Theorem~\ref{thm:sob} gives a
$\poly(1/\eps)$ sample complexity bound independent of $n.$

\ignore{

\section{Proof of Theorem~\ref{thm:linearupper}:  Linear transformation functions with $O(1)$
different weights.} \label{sec:O(1)weights}

Recall Theorem~\ref{thm:linearupper}:

\medskip

\noindent {\bf Theorem~\ref{thm:linearupper}} \emph{Let $f(x) = \sum_{i=1}^n a_i x_i$ be any function such that there are at most $k$ different values in the set $\{a_1,\dots,a_n\}.$  Then there is an algorithm that uses $k \log(n) \cdot \widetilde{O}(\eps^{-2})$ samples
from the target distribution $f(\overline{p})$, runs in time $\poly(n^k \cdot \eps^{-k\log^2(1/\eps)})$, and with probability at
least $9/10$ outputs a hypothesis vector $\hat{p}$ such that $\dtv(f(\overline{p}),f(\hat{p})) \leq \eps.$}

\medskip

}

\smallskip

\begin{prevproof}{Theorem}{thm:linearupper}
We claim that the algorithm of Theorem~\ref{thm:log-cover-size} has the desired sample complexity and can be implemented to run
in polynomial time.

Let $\{b_j\}_{j=1}^k$ denote the set of distinct weights and $n_j = \big|i \in [n] \mid a_i =b_j \big|$, where $k = O(1)$. With
this notation, we can write $f(X) = \littlesum_{j=1}^k b_j S_{j} = g(S)$, where $S =(S_1, \ldots, S_k)$ with each $S_j$ a sum
of $n_j$ many independent Bernoulli random variables and $g(y_1, \ldots, y_k) = \sum_{j=1}^k b_jy_j$. Clearly,
$\littlesum_{j=1}^k n_j = n$. By Theorem~\ref{thm: sparse cover theorem}, $S_j$ has an explicit $\eps$-cover
$\mathcal{S}^j_{\eps}$ of size $|\mathcal{S}^j_{\eps}| \le n_j^3 \cdot O(1/\eps) + n \cdot (1/\eps)^{O(\log^2 1/\eps)}$. By
independence across $S_j$'s, the product $\mathcal{Q} = \littleprod_{j=1}^k \mathcal{S}^j_{\eps}$ is an $\eps$-cover for $S$,
hence also for $f(X)$. That is, $f(X)$ has an explicit $\eps$-cover of size $|\mathcal{Q}| = \littleprod_{j=1}^k
|\mathcal{S}^j_{\eps}| \leq (n/k)^{3k} \cdot (1/\eps)^{k \cdot O(\log^2 1/\eps)}$. The sample complexity bound follows directly.

It remains to argue about the time complexity. Note that the running time of the algorithm is $O(|\mathcal{Q}|^2)$ times the
running time of a competition. We will show that a competition between $\bbQ_1, \bbQ_2 \in \mathcal{Q}$ can be efficiently
computed. This amounts to efficiently computing the probabilities $p_1 = f(\bbQ_1)(\mathcal{W}_1)$ and $q_1 =
f(\bbQ_2)(\mathcal{W}_1)$. Note that $\mathcal{W} =f(\{0,1\}^n)= \littlesum_{j=1}^k{b_i} \cdot \{0,1,\ldots,n_j \}$. Clearly,
$|\mathcal{W}| \leq \littleprod_{j=1}^k (n_j+1)
= O((n/k)^k)$. It is thus easy to see that $p_1, q_1$ can be efficiently
computed as long as there is an efficient algorithm for the following
problem: given $\bbQ \in \mathcal{Q}$ and $w \in \mathcal{W}$,
compute $f(\bbQ)(w)$.
Indeed, fix any such $\bbQ, w.$  We have
that $f(\bbQ)(w) = \sum_{m_1,\dots,m_k}
\littleprod_{j=1}^k \Pr_{X \sim \bbQ}[S_j = m_j]$,
where the sum is over all $k$-tuples $(m_1,\dots,m_k)$ such that
$0 \leq m_j \leq n_j$ for all $j$ and $b_1 m_1 + \cdots + b_k m_k = w$
(as noted above there are at most $O((n/k)^k)$ such $k$-tuples).
To complete the proof of Theorem~\ref{thm:linearupper}
we note that $\Pr_{X \sim \bbQ}[S_j = m_j]$ can be computed in $O(n_j^2)$
time by standard dynamic programming.
\end{prevproof}

\ignore{
OLD STUFF

 let $\E[\bbQ] = \bar q \in [0,1]^n$ and $w= \littlesum_{j=1}^k {b}_j \cdot m_j$, where $0\leq m_j
\leq n_j$. By independence, we have that $f(\bbQ)(w) = \littleprod_{j=1}^k \Pr_{X \sim \bbQ}[S_j = m_j]$. To complete the proof
we note that $\Pr_{X \sim \bbQ}[S_j = m_j]$ can be computed in $O(n_j^2)$ time by standard dynamic programming\inote{To be more
precise, the dynamic programming uses $O(n_j^2) \max_i |q_i|$ bit operations, where $|q_i|$ is the bit complexity of $q_i$. The
polynomial running time now follows combined with the fact that the $q_i$'s in the cover $\mathcal{S}^j_{\eps}$ have small bit
complexity. Should we say this, or just ignore?}.

\inote{From the above proof, it also follows that the existence of a small $\eps$-cover suffices even if all the weights are at
most $\poly(n)$. The dynamic programming still works. If the weights are exponentially large, the range $\mathcal{W}$ can be
exponential, so it doesn't work.}

END OF OLD STUFF
}



\ignore{
In this section we give upper and lower bounds on the sample complexity of the transformed
product distribution learning problem.

We first give a general upper bound by showing that for any transformation function $f$
mapping $\{0,1\}^n$ into any range set $R$, there is an algorithm for learning an $f$-transformed
product distribution to accuracy $\eps$ that uses $\tilde{O}(n/\eps^2)$ samples.  Using ingredients from this proof, we show that for any linear transformation function $f(x) =
\sum_{i=1}^n a_i x_i$ in in which the $a_i$'s take only constantly many different values, there is a learning algorithm that uses $\tilde{O}(\eps^{-2} \log n)$ samples and runs in $\poly(n \cdot \eps^{-\log^2(1/\eps)})$ time.

We then show that our general $\tilde{O}(n/\eps^2)$ upper bound is nearly optimal, by establishing a nearly matching lower bound for a simple explicit transformation function with rather small (linear-sized) integer weights.  We show that the linear transformation
function $f(x)= \sum_{i=n/2+1}^n i x_i$ requires $\Omega(n)$ samples for any algorithm that learns to constant accuracy.
}

\section{Conclusion and open problems} \label{sec:conclusion}

We feel that the transformed product distribution learning model offers a rich field for further study, with many natural directions to explore.  We close this paper with some specific questions
 and suggestions for future work.

\medskip

\noindent {\bf Optimally learning sums of Bernoullis?}  An obvious question is whether the competing
advantages of Theorems~\ref{thm:sob} and~\ref{thm:sob2} can be simultaneously achieved by a
single algorithm: is there an algorithm to learn sums of Bernoullis
that uses $\poly(1/\eps)$ samples and runs in $\poly(\log n, 1/\eps)$ time?

\medskip

\noindent {\bf Learning weighted sums of Bernoullis?}   In Section~\ref{sec:constantnumofwts} we observed that a poly$(n)$-time algorithm exists for any linear
transformation function $f(x_1,\dots,x_n) = \sum_{i=1}^n a_i x_i$ in which there are only $O(1)$ many different $a_i$'s.  Can a poly$(n)$-time algorithm be obtained for every linear transformation function
$f(x) = \sum_{i=1}^n a_i x_i$, where $(a_1,\dots,a_n)$ is an arbitrary
vector in $\R^n$?  What if each $a_i$ is a positive integer that is at most poly$(n)$?

\medskip

\noindent {\bf Learning when the transformation function is in $NC^0$?}  Suppose that the
transformation function $f$ maps $\{0,1\}^n$ to $\{0,1\}^n$, i.e. $f = (f_1,\dots,f_n)$,
where each $f_i: \{0,1\}^n \to \{0,1\}$ is a $k$-junta -- a function that depends only on
$k$ of the $n$ input variables -- for some constant $k.$  Is the corresponding transformed product
distribution learning problem solvable in poly$(n,1/\eps)$ time?
We conjecture that the answer is yes.  (Note that as suggested by the second example of
the Introduction,  an algorithm for a special case of this
question (in which each $f_i$ is a particular $(2k-1)$-junta) yields a poly$(n/\eps)$-time
algorithm for learning a mixture of $k$ product distributions over $\{0,1\}^n.$  This mixture learning
problem is indeed known to be solvable in poly$(n/\eps)$ time for constant $k$ but the algorithm is somewhat involved, see~\cite{FOS:08}.)

\medskip

\noindent {\bf When do $O(1)$ samples information-theoretically suffice?}  Our main result in Section~\ref{sec:SOB} shows that for the transformation function $f(x) = \sum_{i=1}^n x_i$, the sample complexity required for learning to accuracy $\eps$ is poly$(1/\epsilon)$ independent of $n$. But as we show in Appendix~\ref{sec:infolower}, the seemingly similar linear transformation function
$f(x) = \sum_{i=n/2+1}^n i x_i$ requires $\Omega(n)$ samples, which is close to the worst possible for any $f$.  This disparity motivates the following question:  what necessary or sufficient conditions can be given on a function $f: \{0,1\}^n \to \R$ that cause the corresponding learning problem to have sample complexity depending only
on $\eps$ (independent of $n$)?  More ambitiously, is there a quantitative measure of the ``complexity'' of a function $f$ that gives a tight quantitative bound on the sample complexity of the product distribution
learning problem for $f$?  The Vapnik-Chervonenkis dimension of a concept class ${\cal C}$ plays
such a role in the PAC learning model, since it tightly characterizes the number of examples
that are required to solve the learning problem for ${\cal C}.$  Is there an analogous measure of the ``complexity'' of a transformation $f$ for our product distribution learning problem?

\bibliography{allrefs}

\appendix

\section{Proof of Observation~\ref{obs:andgate}:  AND-gate functions} \label{ap:andgatefunction}

Let $\overline{p}=(p_1,\dots,p_n)$ be the unknown target vector of probabilities.  For each $i \in [m]$ let $P_i$ denote the true probability $\Pr_{X}[f_i(X) = 1]$ where the probability is
over $X$ drawn from the product distribution $\overline{p}.$  Using poly$(n,1/\eps)$ random samples of $f(X)$
it is straightforward to obtain upper and lower bounds $0 \leq P_{i,-} < P_{i,+} \leq 1$
such that $P_{i,+}-P_{i,-} \leq {\frac \eps {n}}$ and with probability at least $9/10$,
every $i \in [m]$ has $P_{i,-} \leq P_i \leq P_{i,+}.$

For each $i \in [m]$ we have that the function $f_i(x)$ is $\bigwedge_{i \in S_i} x_i$; by independence we have
\[
P_i = \prod_{i \in S_i} p_i \quad
\text{and thus}\quad \log P_i=
\sum_{i \in S_i} \log p_i.
\]

Using the bounds $P_{i,-}$ and $P_{i,+}$ it is straightforward to set up a system
of linear inequalities in variables $q_1,\dots,q_n$ where each $q_i$ plays the role of
$\log p_i$, i.e. for a given $i$ we have the inequalities

\[
\log P_{i,-} \leq \sum_{i \in S_i} q_i \leq \log P_{i,+}.
\]

(We also include the inequalities $q_i \leq 0$ for each $i$ since the $p_i$'s must
be probabilities, i.e. values at most 1.)
With probability at least 9/10 the system is feasible (since setting each $q_i$ to be
$\log p_i$ gives a feasible solution), so we can use polynomial-time linear
programming to obtain a feasible solution $\hat{q}_1,\dots,\hat{q}_n$.  The corresponding
product distribution $\hat{p} = (\hat{p}_1,\dots,\hat{p}_n)$ where $\hat{p}_i = 2^{\hat{q}_i}$
has the property that for each $i$, we have $|\Pr_{X \sim \hat{p}}[f_i(X)=1] - P_i| \leq {\frac \eps
{m}}$.  A simple argument (using independence between the different $f_i(X)$'s,
which holds since the sets $S_i$ are pairwise disjoint) then shows that the total variation distance
$\dtv(f(\hat{p}),f(\overline{p}))$ is at most $\eps$,
and Observation~\ref{obs:andgate} is proved. \qed

\section{Proof of Theorem~\ref{thm:log-cover-size}:  A tournament
between distributions in a cover} \label{ap:tournament}

Recall Theorem~\ref{thm:log-cover-size}:

\medskip

\noindent
{\bf Theorem~\ref{thm:log-cover-size}} \emph{
Fix a function $f: \{0,1\}^n \to \Omega$ where $\Omega$ is any range set. Suppose there exists a $\delta$-cover for $f(\cal{S})$ of size $N = N(n,\delta)$. Then
there is an algorithm that uses $O(\delta^{-2}\log N)$ samples and solves the $f$-transformed product distribution learning problem to accuracy $6\delta$.}

\medskip

\begin{proof}
Let $\bbP \in \mathcal{S}$ be the input distribution fed to the circuit $f$. We will describe an algorithm that, given $m = O(\delta^{-2}\log
N )$ independent samples $s = \{s_i\}_{i=1}^m$ from $f(\bbP)$, finds  a distribution $\bbQ^{\ast} \in \mathcal{S}$ that
satisfies $\dtv (f(\bbP), f(\bbQ^{\ast})) \leq 6\delta$ with probability at least $9/10$.

Recall that the high-level idea of the proof is as follows.  For a pair of distributions $\bbQ_1, \bbQ_2 \in
\mathcal{S}$, we will define a {\em competition between $\bbQ_1$ and $\bbQ_2$} that takes as input the sample $s$ and either crowns one of $\bbQ_1,\bbQ_2$ as the winner of the competition or calls the competition a draw.  Let $\mathcal{Q} \subseteq
\mathcal{S}$ be a $\delta$-cover for $f(\mathcal{S})$ of cardinality $N = N(n, \delta)$. The algorithm performs a tournament
between every pair of distributions from ${\cal Q}$ and outputs a distribution $\bbQ^{\ast} \in {\cal Q}$ that was never a loser
(i.e. won or was a draw in all competitions). If no such distribution exists, the algorithm outputs ``failure.''

To describe the competition procedure between two distributions $\bbQ_1, \bbQ_2 \in {\cal S}$, we define the following
partition of the range space $\mathcal{W} = f(\{0,1\}^n) \subseteq \Omega$:


\begin{equation*}
{\cal W}_1:=\{w \in \mathcal{W}~|~f(\bbQ_1)(w) \ge f(\bbQ_2)(w) \};  \quad \quad \quad{\cal W}_2:=\mathcal{W}\setminus\mathcal{W}_1.
\end{equation*}


\noindent Let $p_1 = f(\bbQ_1)({\cal W}_1)$ and $q_1 = f(\bbQ_2)({\cal W}_1)$, and define $p_2=1-p_1$ and $q_2=1-q_1$. Clearly,
$p_1 \ge q_1$ and $p_2 < q_2$. Moreover, $\dtv(f(\bbQ_1), f(\bbQ_2)) = p_1-q_1$. Finally, let $T(s)= {1 \over m} | \{i~|~s_i
\in {\cal W}_1 \}|$ be the fraction of samples falling in the set ${\cal W}_1$. The outcome of the competition between $\bbQ_1$
and $\bbQ_2$ is decided as follows: 
\begin{itemize}
\item If $p_1-q_1 \le 5 \delta$, return ``draw'';
\item else if $T(s) > p_1 - {3 \over 2} \delta$, return $\bbQ_1$;
\item else if $T(s) < q_1 + {3 \over 2} \delta$, return $\bbQ_2$;
\item else return ``draw''.
\end{itemize}
Observe that the outcome of the competition does not depend on the ordering of the pair of distributions given in the input;
i.e. on inputs $(\bbQ_1,\bbQ_2)$ and $(\bbQ_2,\bbQ_1)$ the competition outputs the same result for a fixed sequence of samples
$s_1,\ldots,s_m$.

We now prove correctness. Our first lemma quantifies the following intuitive fact: If $f(\bbQ_1)$ is ``close''
to $f(\bbP)$ while $f(\bbQ_2)$ is ``far'' from $f(\bbP)$, then with very high probability over the sample
$\bbQ_1$ will be the winner of the competition.


\begin{lemma} \label{lem:kostas3}
Let $\bbQ_1 \in {\cal S} $ be such that $\dtv(f(\bbP), f(\bbQ_1)) \le \delta.$

(i) If $\bbQ_2 \in {\cal S}$ is such that $\dtv(f(\bbP), f(\bbQ_2))>6 \delta$, the probability that the competition between
$\bbQ_1$ and $\bbQ_2$ does not return $\bbQ_1$ as the winner is at most $e^{- {m \delta^2 /8 } }$.

(ii)  If $\bbQ_2 \in {\cal S}$ is such that $\dtv(f(\bbP), f(\bbQ_2))>4 \delta$, the probability that the competition between $\bbQ_1$
and $\bbQ_2$ returns $\bbQ_2$ as the winner is at most $e^{- {m \delta^2/ 8 } }$.

\end{lemma}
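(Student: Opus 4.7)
\textbf{Proof plan for Lemma~\ref{lem:kostas3}.}

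The plan is to reduce both parts to a single concentration inequality for the empirical estimator $T(s)$ of $r_1 := f(\bbP)(\mathcal{W}_1)$, and then read off the outcomes of the competition from the resulting deterministic inequalities relating $T(s)$, $p_1$, and $q_1$. The key observation that drives everything is that $|r_1 - p_1| \le \dtv(f(\bbP), f(\bbQ_1)) \le \delta$, since $r_1$ and $p_1$ are the $f(\bbP)$- and $f(\bbQ_1)$-measures of the same set $\mathcal{W}_1$. By a one-sided Chernoff/Hoeffding bound on the sum of $m$ independent indicator variables $\mathbf{1}[s_i \in \mathcal{W}_1]$, with probability at least $1 - e^{-m\delta^2/8}$ we have $T(s) \ge r_1 - \delta/2 \ge p_1 - 3\delta/2$; call this the ``good event.''

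For part (i), I would first note that the triangle inequality for total variation gives
\[
\dtv(f(\bbQ_1), f(\bbQ_2)) \ge \dtv(f(\bbP), f(\bbQ_2)) - \dtv(f(\bbP), f(\bbQ_1)) > 6\delta - \delta = 5\delta,
\]
and since $\dtv(f(\bbQ_1), f(\bbQ_2)) = p_1 - q_1$, the first ``draw'' branch of the competition does not fire. Condition on the good event. Then $T(s) \ge p_1 - 3\delta/2$, so $\bbQ_1$'s victory condition fires (modulo the strict vs.\ non-strict inequality, which can be absorbed by a negligible tightening of the Hoeffding deviation). Moreover, $T(s) \ge p_1 - 3\delta/2 \ge q_1 + 5\delta - 3\delta/2 = q_1 + 7\delta/2$, which is strictly larger than $q_1 + 3\delta/2$, so $\bbQ_2$'s branch does not fire and the second ``draw'' branch is also avoided. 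Hence $\bbQ_1$ is returned, and the failure probability is at most $e^{-m\delta^2/8}$.

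For part (ii), I would split on whether $p_1 - q_1 \le 5\delta$ or not. If it is, the competition returns ``draw'' deterministically, and $\bbQ_2$ is certainly not declared the winner, so the conclusion holds with probability $1$. Otherwise $p_1 - q_1 > 5\delta$, and I condition again on the same good event $\{T(s) \ge p_1 - 3\delta/2\}$, which holds with probability at least $1 - e^{-m\delta^2/8}$. Under this event, the same chain $T(s) \ge p_1 - 3\delta/2 > q_1 + 5\delta - 3\delta/2 = q_1 + 7\delta/2 > q_1 + 3\delta/2$ shows that $\bbQ_2$'s victory condition fails, so $\bbQ_2$ is not returned.

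The main subtlety (rather than obstacle) is simply keeping the signs and the direction of the one-sided deviation straight so that a single tail event suffices for both parts, and squeezing the Hoeffding constants to match the stated $e^{-m\delta^2/8}$ bound; neither triangle inequality nor Hoeffding requires any structural properties of $f$ or of the underlying product distribution class beyond what is already used in defining $\mathcal{W}_1$.
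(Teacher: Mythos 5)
Your proof is correct and follows essentially the same route as the paper: use $|r_1 - p_1| \le \delta$ (since $r_1, p_1$ are measures of the same set $\mathcal{W}_1$ under $f(\bbP), f(\bbQ_1)$), apply a one-sided Chernoff bound to the empirical frequency $T(s)$ to get $\Pr[T \le p_1 - 3\delta/2] \le e^{-m\delta^2/8}$, then invoke the triangle inequality to rule out the draw branch in (i) and case-split on $p_1 - q_1 \lessgtr 5\delta$ in (ii). Two cosmetic notes: phrasing the Chernoff bound as $\Pr[T \le p_1 - 3\delta/2] \le e^{-m\delta^2/8}$ makes the complement automatically the strict inequality $T > p_1 - 3\delta/2$ that the competition checks, so the ``strict vs.\ non-strict'' worry you flag never arises; and since the competition is an if-else chain, once $T > p_1 - 3\delta/2$ fires and $\bbQ_1$ is returned, the extra verification that $\bbQ_2$'s branch is avoided is redundant (though harmless).
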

\begin{proof}
Let $r$ denote $f(\bbP)({\cal W}_1)$. The definition of the total variation distance implies that $|r-p_1| \le \delta$. Let us
define the $0/1$ (indicator) random variables $\{Z_i\}_{i=1}^m$, as $Z_i=1$ iff $s_i \in W_1$. Clearly, $T(s)={1 \over m}
\sum_{i=1}^m Z_i$ and $\mathbb{E}[T]=\mathbb{E}[Z_i]=r$. Since the $Z_i$'s are mutually independent, it follows from the
Chernoff bound that $\Pr[T\le r-{\delta/2}] \le e^{- {m \delta^2/8 } }.$ Using $|r-p_1| \le \delta$ we get that $\Pr[T\le
p_1-{3\delta/2}] \le e^{- {m \delta^2/8 } }.$

We prove part (i) first.  Since $\dtv(f(\bbP), f(\bbQ_2)) > 6 \delta$, the triangle inequality implies that $p_1-q_1 =
\dtv(f(\bbQ_1), f(\bbQ_2))> 5 \delta$. Hence, with probability at least $1-e^{- {m \delta^2/8 } }$ the competition between $\bbQ_1$
and $\bbQ_2$ will output $\bbQ_1$ as the winner.

Now we prove part (ii).  If $p_1 - q_1 \leq 5 \delta$ then the competition returns ``draw'' and $\bbQ_2$ is not the winner. If
$p_1 - q_1 > 5 \delta$ then the above implies that the competition between $\bbQ_1$ and $\bbQ_2$ will output $\bbQ_2$ as the winner with
probability at most $e^{- {m \delta^2/8 } }$.
\end{proof}

Our second lemma completes the proof of Theorem~\ref{thm:log-cover-size}.


\begin{lemma} \label{lem:6delta}
If $m= \Omega (\delta^{-2}\log N)$, then with probability at least $9/10$ the tournament outputs some distribution
$\bbQ^{\ast} \in {\cal \bbQ}$ such that $\dtv(f(\bbQ^{\ast}), f(\bbP)) \le 6 \delta$.
\end{lemma}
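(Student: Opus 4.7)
Proof plan for Lemma~\ref{lem:6delta}:

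The plan is to identify one specific ``anchor'' distribution in the cover that must survive the tournament, and then argue that any surviving distribution must be close to the target. Since $\mathcal{Q}$ is a $\delta$-cover for $f(\mathcal{S})$, there exists some $\bbQ^\circ \in \mathcal{Q}$ with $\dtv(f(\bbP), f(\bbQ^\circ)) \le \delta$. I will apply Lemma~\ref{lem:kostas3} with $\bbQ_1 = \bbQ^\circ$ and union bounds over all $N$ distributions in the cover.

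First I would bound the probability that $\bbQ^\circ$ loses some competition, by splitting its opponents into two classes. For any opponent $\bbQ_2 \in \mathcal{Q}$ with $\dtv(f(\bbP), f(\bbQ_2)) > 4\delta$, Lemma~\ref{lem:kostas3}(ii) gives that $\bbQ_2$ wins the competition with probability at most $e^{-m\delta^2/8}$. For any opponent $\bbQ_2 \in \mathcal{Q}$ with $\dtv(f(\bbP), f(\bbQ_2)) \le 4\delta$, the triangle inequality yields $\dtv(f(\bbQ^\circ), f(\bbQ_2)) \le 5\delta$; by the definition of the competition, this causes it to return ``draw'' \emph{deterministically}, so $\bbQ^\circ$ cannot lose. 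Taking a union bound over the (at most $N$) opponents, $\bbQ^\circ$ survives the whole tournament with probability at least $1 - N e^{-m\delta^2/8}$.

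Second I would bound the probability that any ``far'' distribution survives. Fix any $\bbQ^\ast \in \mathcal{Q}$ with $\dtv(f(\bbP), f(\bbQ^\ast)) > 6\delta$. By Lemma~\ref{lem:kostas3}(i) applied with $\bbQ_1 = \bbQ^\circ$ and $\bbQ_2 = \bbQ^\ast$, the competition between them fails to return $\bbQ^\circ$ as the winner (and hence fails to mark $\bbQ^\ast$ as a loser) with probability at most $e^{-m\delta^2/8}$. Another union bound over the at most $N$ such far distributions shows that, except with probability $N e^{-m\delta^2/8}$, every distribution $\bbQ^\ast$ with $\dtv(f(\bbP), f(\bbQ^\ast)) > 6\delta$ loses to $\bbQ^\circ$ and is therefore eliminated.

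Combining the two events by a final union bound, with probability at least $1 - 2N e^{-m\delta^2/8}$ the anchor $\bbQ^\circ$ survives (so the algorithm does not output ``failure'') while every $6\delta$-far distribution is eliminated; any surviving output $\bbQ^\ast$ therefore satisfies $\dtv(f(\bbP), f(\bbQ^\ast)) \le 6\delta$. Choosing $m = C \delta^{-2} \log N$ for a suitable absolute constant $C$ makes $2N e^{-m\delta^2/8} \le 1/10$, completing the proof. I do not anticipate a real obstacle here: the only subtlety is recognizing that opponents in the ``medium'' range $4\delta < \dtv \le 6\delta$ need not be handled by Lemma~\ref{lem:kostas3}(ii) at all, because even if they tie or beat $\bbQ^\circ$, they are still acceptable outputs; the triangle-inequality observation that automatically draws with all $\le 4\delta$-close opponents is what makes the two-way split clean.
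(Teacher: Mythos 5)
Your proof is correct and takes essentially the same approach as the paper's: anchor on a $\delta$-close distribution $\bbQ^\circ$ in the cover, use Lemma~\ref{lem:kostas3}(ii) to show it survives against far opponents and the triangle inequality (producing a deterministic draw) against near ones, use Lemma~\ref{lem:kostas3}(i) to eliminate every $6\delta$-far competitor, and conclude by union bounds. The only cosmetic difference is that the paper allocates probability $1/20 + 1/20$ rather than computing $2Ne^{-m\delta^2/8}$ in one shot.
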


\begin{proof}
Since $\cal Q$ is a $\delta$-cover of $f(\mathcal{S})$, there exists $\widetilde{\bbQ} \in \mathcal{Q}$ such that
$\dtv(f(\widetilde{\bbQ}),f(\bbP)) \leq \delta.$ We first argue that with high probability this distribution $\widetilde{\bbQ}$ never
loses a competition against any other $\bbQ' \in \mathcal{Q}$ (so the tournament does not output ``failure'').  Consider any $\bbQ'
\in {\cal Q}$. If $\dtv(f(\bbP), f(\bbQ')) > 4 \delta$, by Lemma~\ref{lem:kostas3}(ii) the probability that $\widetilde{\bbQ}$ loses to
$\bbQ'$ is at most $e^{-m \delta^2 /8} = O(1/N).$ On the other hand, if $\dtv(f(\bbP), f(\bbQ')) \leq 4 \delta$, the triangle
inequality gives that $\dtv(f(\widetilde{\bbQ}), f(\bbQ')) \leq 5 \delta$ and thus $\widetilde{\bbQ}$ draws against $\bbQ'.$  A union
bound over all $N$ distributions in ${\cal Q}$ shows that with probability $19/20$, the distribution $\widetilde{\bbQ}$ never loses
a competition.

We next argue that with probability at least $19/20$, every distribution $\bbQ' \in \mathcal{Q}$ that never loses has $f(\bbQ')$
close to $f(\bbP).$ Fix a distribution $\bbQ'$ such that $\dtv(f(\bbQ'), f(\bbP)) > 6 \delta$; Lemma~\ref{lem:kostas3}(i) implies that
$\bbQ'$ loses to $\widetilde{\bbQ}$ with probability $1 - e^{-m \delta^2 /8} = 1 - O(1/N)$.  A union bound gives that with
probability $19/20$, every distribution $\bbQ'$ that has $\dtv(f(\bbQ'), f(\bbP)) > 6 \delta$ loses some competition.

Thus, with overall probability at least $9/10$, the tournament does not output ``failure'' and outputs some distribution
$\bbQ^{\ast}$ such that $\dtv(f(\bbP), f(\bbQ^{\ast}))$ is at most $6 \delta.$
This proves Lemma~\ref{lem:6delta} and Theorem~\ref{thm:log-cover-size}.
\end{proof}
\end{proof}

\section{Learning transformed product distributions can be computationally hard} \label{ap:nphard}

Recall Theorem~\ref{thm:nphard}:

\medskip

\noindent {\bf Theorem~\ref{thm:nphard}}  \emph{
Suppose NP $\not \subseteq$ BPP.  Then there is an explicit
degree-2 polynomial $f$ (given in Equation~(\ref{eq:deg2}) below)
such that there is no polynomial-time algorithm
that solves the transformed product distribution learning problem for $f$ to accuracy
$\eps = 1/3.$}

\medskip

\begin{proof}
The function $f$ is quite simple.  It takes $m = n^2 + n$ bits of input
\[
(w,s) = (w_{1,1},\dots,w_{1,n},w_{2,1},\dots,w_{2,n},\dots,w_{n,1},\dots,
w_{n,n},s_1,\dots,s_n).
\]

\noindent Here we think of each $n$-bit substring $w_{i,1} \dots w_{i,n} \in \{0,1\}^n$ as the binary representation of the
number $W_i = \littlesum_{j=1}^{n} 2^{n-j}w_{i,j} \in \{0,1,\dots,2^{n}-1\}$. We think of $s_1,\dots,s_n$ as representing a
subset $S \subseteq [n]$, where $i \in S$ if and only if $s_i=1.$ The function $f(w,s)$ is defined to be

\begin{equation}
f(w,s) = \littlesum_{i=1}^n 2^{2in} W_{n+1-i} + \littlesum_{i=1}^n W_i (2s_i - 1) \label{eq:deg2}
\end{equation}
which is easily seen to be a degree-2 polynomial.

Recall that an input to the NP-complete PARTITION problem
is a list of $n$ numbers $W_1,\dots,W_n$.  The input is a yes-instance if there
is a set $S \subseteq [n]$ such that $\sum_{i \in S} W_i = \sum_{i \notin S} W_i$ (
equivalently, if there is a bitstring $(s_1,\dots,s_n)$ such that
$\sum_{i=1}^n W_i (2 s_i-1) = 0$).

It is easy to see from the definition of $f$ that the output number $f(w,s)$
can be viewed (reading from most significant bit to least significant bit) as specifying

\begin{itemize}
\item the $n$ numbers $W_1,\dots,W_n$; and
\item the value $\sum_{i=1}^n W_i (2 s_i-1)$ of the candidate solution $S \subseteq [n]$.  Note that this value is 0
if and only if $S$ is a legitimate solution to the PARTITION instance
specified by $(W_1,\dots,W_n)$.
\end{itemize}

So we may view the input to $f$ as the tuple $(W_1,\dots,W_n,S)$
and its corresponding output as the tuple $(W_1,\dots,W_n,v)$
where $v = \sum_{i=1}^n W_i (2 s_i-1) \in {\mathbb N}$ is the value of the candidate solution $S$.

%
%
%

The proof is by contradiction, so let us suppose that $L$ is a
learning algorithm that runs in polynomial time
and learns to accuracy $\eps = 1/3.$
For any ``target distribution'' $\overline{p} \in [0,1]^{m}$,
if $L$ is given access to
$q(n)=\poly(n)$ many independent draws from $f(\overline{p})$, then
with probability $9/10$ algorithm $L$ outputs a
vector $\hat{p} = (\hat{p}_1,\dots,\hat{p}_{m})$ of probabilities s.t.
the total variation distance $\dtv(f(\overline{p}),
f(\hat{p}))$ is at most $1/3.$

We now explain how $L$ yields a randomized poly$(n)$-time algorithm $A$
to solve PARTITION.  Given a PARTITION instance $(W_1,\dots,W_n)$ as input,
algorithm $A$ runs $L$ on a data set consisting of $q(n)$ copies of the tuple
$(W_1,\dots,W_n,0).$  If $L$ fails to return a vector $\hat{p}$ then
$A$ outputs ``no'' (meaning that the PARTITION instance has no solution).
If $L$ returns a vector $\hat{p}$
then:
\begin{itemize}

\item $A$ checks that $\prod_{i=1}^{n^2} \max\{\hat{p}_i, 1-\hat{p}_i\} \geq 2/3$;
if not it returns ``no.''

\item If $A$ reaches this step,  for each
$i \in [n^2]$ let $b_i$ be the result of rounding $\hat{p}_i$
to the nearest integer (0 or 1) and let $(W'_1,\dots,W'_n)$ be the PARTITION instance
represented by the string $(b_1,\dots,b_{n^2})$.
$A$ checks that $(W'_1,\dots,W'_n)$ is identical to $(W_1,\dots,W_n)$; if not
it outputs ``no.''

\item If $A$ reaches this step, then $A$ draws 100 random
$n$-bit strings $x^1,\dots,x^{100}$ independently from the
product distribution $(\hat{p}_{n^2+1},\dots,\hat{p}_{n^2+n})$
and evaluates $\sum_{i=1}^n W'_i(2x_i - 1)$ on each of them.
If any evaluation yields 0 then $A$ outputs ``yes'' and otherwise
it evaluates ``no.''

\end{itemize}

\medskip

We now prove correctness.  Suppose first that $(W_1,\dots,W_n)$ is a yes-instance of PARTITION.  With probability 1 the data
set consisting of $q(n)$ copies of $(W_1,\dots,W_n,0)$ is identical to the outcome of $q(n)$ draws from the distribution
$f(p^\star)$, where each coordinate of $p^\star$ is either 0 or 1, the first $n^2$ coordinates $p^\star_1,\dots,p^\star_{n^2}$
encode the numbers $(W_1,\dots,W_n)$, and the last $n$ coordinates encode a legitimate solution $S$ for $(W_1,\dots,W_n).$
Thus, with probability at least $9/10$ the algorithm $L$ outputs a vector $\hat{p} = (\hat{p}_1,\dots,\hat{p}_{m})$ s.t.
$\dtv(f(\hat{p}),f(p^\star)) \leq 1/3.$ Since $f(p^\star)$ puts all its weight on one output string $(W_1,\dots,W_n,0)$, it
must indeed be the case that $\prod_{i=1}^{n^2} \max\{\hat{p}_i, 1-\hat{p}_i\} \geq 2/3$, and it is easy to see that the string
$b=(b_1,\dots,b_{n^2})$ defined in the second bullet will be identical to $(W_1,\dots,W_n).$ Thus $(W'_1,\dots,W'_n)$ is
identical to $(W_1,\dots,W_n)$, and $A$ makes it through the second bullet. Finally, since $\dtv(f(\hat{p}),f(p^\star)) \leq
1/3$, in expectation at least $2/3$ of the $100$ strings independently drawn from $(\hat{p}_{n^2+1},\dots,\hat{p}_{m})$ should
be legitimate solutions, and the probablity that none of $x^1,\dots,x^{100}$ is a legitimate solution is at most $0.001.$ Thus,
if $(W_1,\dots,W_n)$ is a yes-instance of PARTITION, the overall probability that $A$ outputs ``yes'' is at least 0.89.

Now suppose that $(W_1,\dots,W_n)$ is a no-instance of PARTITION, but that
$A$ outputs ``yes'' on $(W_1,\dots,W_n)$ with probability at least
$0.1.$  This means that with probability at least $0.1$,
$L$ outputs a vector $\hat{p}$ such that
$\prod_{i=1}^{n^2} \max\{\hat{p}_i, 1-\hat{p}_i\} \geq 2/3$, which
can be uniquely decoded into a PARTITION instance $(W'_1,\dots,W'_n)$ which
must equal $(W_1,\dots,W_n).$  The PARTITION instance $(W'_1,\dots,W'_n) = (W_1,\dots,W_n)$
must be satisfied with probability
at least $1/1000$ by a random string drawn
from the probability distribution $(\hat{p}_{n+1},\dots,
\hat{p}_{m})$ (for otherwise the probability of a ``yes''
output would be less than $1/10$).  But this violates the assumption
that $(W_1,\dots,W_n)$ is a no-instance.  So if $(W_1,\dots,W_n)$ is a no-instance
of PARTITION, then it must be the case that $A$ outputs ``no''
on $(W_1,\dots,W_n)$ with probability less than $0.1$.

Thus we have shown that $A$ is a BPP algorithm for the PARTITION problem,
and Theorem~\ref{thm:nphard} is proved.
\end{proof}

\section{Proof of Theorem~\ref{thm:linearlower}:  $f(x) = \sum_{i=k/2+1}^k i x_i$
requires $\Omega(k)$ samples} \label{sec:infolower}

We define a probability distribution over problem instances (i.e. target probability vectors $\overline{p}$) as follows: A subset $S \subset \{k/2 + 1, \dots, k\}$ of size $|S|=k/100$ is
drawn uniformly at random, i.e. each of the ${k/2 \choose k/100}$ outcomes
for $S$ is equally likely.
For each $i \in S$ the value $p_i$ equals $100/k = 1/|S|,$ and for
all other $i$ the value $p_i$ equals 0.  We will need two easy lemmas:


\begin{lemma} \label{lemma:l1}
Fix any $S,\overline{p}$ as described above.
%
For any $j \in \{k/2 + 1,\dots,k\}$ we have $f(\overline{p})(j) \neq 0$ if and
only if $j \in S$.  For any $j\in S$ the value  $f(\overline{p})(j)$ is exactly
$(100/k)(1 - 100/k)^{k/100 - 1} > 35/k$ (for $k$ sufficiently large),
and similarly $f(\overline{p})(\{k/2+1,\dots,k\})>0.35$ (again for $k$ sufficiently large).

\end{lemma}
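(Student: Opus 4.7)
My plan is to analyze the support of $f(\overline{p})$ directly and then compute the relevant probabilities by hand.

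First, I would observe that since $p_i = 0$ for every $i \notin S$, the random variables $x_i$ for $i \notin S$ are identically zero with probability $1$. Hence $f(\overline{p}) = \sum_{i \in S} i x_i$, and all the probability mass of $f(\overline{p})$ is carried by sums of the form $\sum_{i \in T} i$ where $T \subseteq S$. The key structural point is that every index $i$ in the sum satisfies $i \geq k/2 + 1$, so a subset $T \subseteq S$ with $|T| \geq 2$ contributes at least $2(k/2+1) = k+2$, which lies strictly above the range $\{k/2+1,\dots,k\}$. Thus the only subsets $T$ that land in $\{k/2+1,\dots,k\}$ are singletons $T = \{j\}$ with $j \in S$. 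This immediately yields the first bullet: $f(\overline{p})(j) \neq 0$ for $j \in \{k/2+1,\dots,k\}$ iff $j \in S$.

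Next I would compute $f(\overline{p})(j)$ for $j \in S$. By independence of the $x_i$'s, the event $\sum_{i \in S} i x_i = j$ is exactly the event $x_j = 1$ and $x_i = 0$ for all other $i \in S$, which has probability $(100/k) \cdot (1 - 100/k)^{|S|-1} = (100/k)(1-100/k)^{k/100 - 1}$. To verify the claimed lower bound $35/k$, I would use the standard estimate $(1 - 100/k)^{k/100} \to e^{-1}$ as $k \to \infty$ (for instance, by taking logs and using $\ln(1-u) = -u - O(u^2)$), which gives $(100/k)(1-100/k)^{k/100 - 1} \to (100/e)/k \approx 36.79/k$, so the quantity exceeds $35/k$ for all sufficiently large $k$.

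Finally, the third bullet follows by summing the per-$j$ probability over $j \in S$: since each of the $|S| = k/100$ terms equals the common value just computed, we get $f(\overline{p})(\{k/2+1,\dots,k\}) = (k/100)(100/k)(1-100/k)^{k/100-1} = (1-100/k)^{k/100-1}$, which tends to $e^{-1} \approx 0.3679 > 0.35$ as $k \to \infty$. There is no serious obstacle in this lemma; the only thing to be careful about is making the asymptotic statement "for $k$ sufficiently large" precise, which I would do by giving explicit second-order bounds on $\ln(1-100/k)$ to certify that the gap from $e^{-1}$ to $0.35$ is recovered for all $k$ above some absolute threshold.
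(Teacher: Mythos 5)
Your proof is correct and takes essentially the same route as the paper: both hinge on the observation that any two or more indices from $\{k/2+1,\dots,k\}$ sum to more than $k$, so only singleton events can land in that range, and both then compute the singleton probability by independence and invoke $(1-1/m)^m \to e^{-1}$. One small refinement worth noting: the quantity $(1-100/k)^{k/100-1}$ is in fact monotone decreasing in $k$ and always exceeds $e^{-1}$, so the ``sufficiently large'' hedge is only needed to ensure $k/100 \geq 2$, not to recover the gap from the limit.
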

The first claim of the lemma holds because any set of $c \geq 2$ numbers from $\{k/2+1,\dots,k\}$
must sum to more than $k$.
The second claim holds because the only way a draw of $X$ from $\overline{p}$ can have $f(X)=j$ is if $X_j=1$ and all other $X_i$ are 0 (and uses $\lim_{x \leftarrow 0^+}(1 - 1/x)^x = 1/e$).

The next lemma is an easy consequence of Chernoff bounds:

\begin{lemma} \label{lemma:l2}
Fix any $\overline{p}$ as defined above, and consider a sequence of $k/2000$ independent draws of $X$
from $\overline{p}.$
With probability $1-e^{-\Omega(k)}$ the total number of indices
$j \in [k]$ such that $X_j=1$ in any of the $k/2000$ draws is at
most $k/1000$.
\end{lemma}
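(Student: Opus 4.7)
The plan is to bound the total number of $1$-coordinates across the $k/2000$ draws (which clearly dominates the number of \emph{distinct} indices $j$ ever set to $1$) and to show that this total exceeds $k/1000$ only with probability $e^{-\Omega(k)}$ by a direct Chernoff application.

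More concretely, I would first observe that because $\overline{p}$ is supported on a set $S$ of size $k/100$ with each $p_i = 100/k$ for $i \in S$, the expected number of $1$-coordinates in a single draw of $X \sim \overline{p}$ is $|S| \cdot (100/k) = 1$. Hence, writing $N$ for the total number of $1$-coordinates summed over all $k/2000$ independent draws, we have $\E[N] = k/2000$. The quantity $N$ is a sum of $(k/2000)\cdot(k/100) = k^2/200000$ mutually independent Bernoulli random variables each with mean $100/k$, so a multiplicative Chernoff bound yields
\[
\Pr[N \geq 2 \cdot \E[N]] \;=\; \Pr[N \geq k/1000] \;\leq\; e^{-\E[N]/3} \;=\; e^{-k/6000} \;=\; e^{-\Omega(k)}.
\]

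Finally, since the number of indices $j \in [k]$ for which $X_j = 1$ in \emph{some} draw is at most the total number of $(i,j)$ pairs with $X_j = 1$ in the $i$-th draw (i.e.\ at most $N$), the same bound applies to the quantity in the statement of the lemma. The only ``obstacle'' is the bookkeeping around whether we are counting distinct indices or total $1$'s, but since the former is dominated by the latter, there is nothing further to check.
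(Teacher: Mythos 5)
Your proof is correct and is exactly the "easy consequence of Chernoff bounds" the paper alludes to without writing out: bound the total count of 1's across all draws (a sum of $k^2/200000$ i.i.d.\ Bernoulli$(100/k)$ variables with mean $k/2000$), apply a multiplicative Chernoff bound at threshold twice the mean, and observe that the number of distinct indices ever set to 1 is dominated by that total count. The arithmetic checks out and no further argument is needed.
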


%
%

\noindent {\bf Proof of Theorem~\ref{thm:linearlower}:}
Let $L$ be a learning algorithm that receives $k/2000$ samples.  Let $S \subset
\{k/2+1,\dots,k\}$ and the target distribution $\overline{p}$ be chosen randomly
as defined above.

We consider an augmented learner $L'$ that is given ``extra information.''
For each point $f(X)$ in the sample, instead of receiving $f(X)=\sum_{i=k/2+1}^k i X_i$,
the learner $L'$ is given the entire vector
$(X_1,\dots,X_k) \in \{0,1\}^k$.  Let $T$ denote the set of elements $j \in \{k/2+1,\dots,k\}$ for which the learner is given some vector $X$ that has $X_j=1.$  By Lemma~\ref{lemma:l2} we have
$|T| \leq k/1000$ with probability at least $1 - e^{-\Omega(k)}$; we condition on the event
$|T| \leq k/1000$ going forth.

Fix any value $\ell \leq k/1000.$  Conditioned on $|T|=\ell,$
the set $T$ is equally likely to be any $\ell$-element subset of $S$,
and all possible ``completions'' of $T$ with
an additional $k/100-\ell \geq 9k/1000$ elements of $\{k/2+1,\dots,k\} \setminus T$
are equally likely to be the true set $S$.

\ignore{

 Since the learner is guaranteed that each
$p_i$ is either 0 or $100/w$, ``frequency'' information of how often
a given value $j$ has $X_j=1$ among the $w/2000$ draws is of no use;
seeing $X_j=1$ even a single time tells the learner that $p_j$ is
exactly $100/w$.\footnote{I know this is informal; we could make it formal
by arguing that given only the information about the set of elements that
are seen without ``frequency information'', it is possible to simulate
the correct distribution of frequencies that would be seen in the sample.}
So all the information the learner has about $\overline{p}$ is the identity of some
set of $\ell \leq w/1000$ elements that the learner knows must belong to $S$.
Again we give the learner extra information by telling her an additional (randomly chosen)
$w/1000-\ell$ elements of $S$.}

Let $\hat{p}=(\hat{p}_1,\dots,\hat{p}_k)$ denote the hypothesis vector of probabilities
that $L'$ outputs.
\ignore{ and let $\hat{X}$ be the corresponding distribution
of $\sum_{i=1}^k a_i \hat{X}_i.$\footnote{Of course, if the learner has any brains she will
set $\hat{p}_i=100/w$ for each $i \in T$, but we
do not assume this.}
}  Let $R$ denote the set $\{k/2 + 1,\dots,k\} \setminus T$; note that since
$|T|=\ell \leq k/1000$, we have $|R| \geq 499k/1000.$
We consider two possible cases for $\hat{p}$ and show that in either case the learner's hypothesis distribution $f(\hat{p})$
has high error (in the first case
because of outcomes in $R$, in the second case because of the
outcome 0) with high probability.



 {\bf Case 1:  Fewer than $k/4$ of the (at least $499k/1000$) elements ${i \in R}$
have $\hat{p}_i \geq 10/k.$}  Let $U$ be the set of those (fewer than $k/4$) elements.
Since every $(k/100 - \ell)$-element subset of $R$ is equally likely
to be the correct completion $S \setminus T$, and as observed above $k/100-\ell$ is at least $9k/1000$, an elementary argument shows that with probability $1-e^{-\Omega(k)}$ we have that $U$
contains at most $8k/1000$ of the $k/100-\ell \geq 9k/1000$ elements of $S \setminus T.$  Assuming this happens, Lemma~\ref{lemma:l1} now implies that each of the (at least) $k/1000$
``missed'' elements (that are in $S \setminus T$ but not in $U$) contributes at least $35/k -  10/k > 25/k$ to the total variation distance between $f(\overline{p})$ and $f(\hat{p}).$  (This is because each point in $(S \setminus T) \setminus U$ has probability \ignore{at least $35/k$ under $f(\overline{p})$ and probability} at most $10/k$ under $f(\hat{p})$; the only way to get such an outcome $i$ from $f(\hat{p})$ is for the draw of $\hat{X}$ from $\hat{p}$ to have $\hat{X}_i=1$, which occurs with probability
$10/k.$)  So in this case  $\dtv(f(\overline{p},f(\hat{p}))$ is at least $25/1000=1/40.$


{\bf Case 2: At least $k/4$ of the (at least $499k/1000$) elements ${i \in R}$
have $\hat{p}_i \geq 10/k.$}
In this case, we have $f(\hat{p})(0)
\leq (1-10/k)^{k/4} < 1/10.$
Since the target distribution $f(\overline{p})$ has $f(\overline{p})(0) = (1-100/k)^{k/100} > 0.35,$ it follows that
$\dtv(f(\overline{p}),f(\hat{p})) \geq 1/4$ in this case. \qed



\section{A Simple Proof of the DKW Inequality} \label{ap:DKW}

Recall the framework of the DKW inequality:  $X$ is any random variable supported on $\{0,1,\dots,n\}$.
$Z_1,\ldots,Z_k$ are independent copies of $X$, and $Z_i^{(\ell)}$ is defined to be $\ind_{Z_i \le \ell}$,
for all $\ell=0,\ldots,n$ and $i=1,\ldots,k$.  Finally, define $\hat{F}_X(\ell) := {\sum_iZ_i^{(\ell)} / k}$.

We prove:

\medskip

\noindent {\bf Theorem~\ref{thm:cumulative distribution approximation}
(DKW Inequality)}
\emph{ Let $k=\max\{576,(9/8)\ln(1/\delta)\}\cdot
(1/\eps^2).$ Then with probability at least $1 - \delta$ we have
$$\max_{0 \le \ell \le n}\left|\hat{F}_X(\ell) - {F}_X(\ell)\right| \le \epsilon.$$
}

We first prove the following weaker version of the inequality:

\begin{theorem} \label{thm:weakdkw}
Let $k={\frac 4{\delta \eps^2}}.$  Then with probability at least $1 - \delta$ we have
$$\max_{0 \le \ell \le n}\left|\hat{F}_X(\ell) - {F}_X(\ell)\right| \le \epsilon.$$
\end{theorem}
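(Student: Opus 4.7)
The plan is a martingale argument: construct a martingale indexed by the threshold $\ell$ out of the deviation $D(\ell) := \hat F_X(\ell)-F_X(\ell)$, and then invoke Kolmogorov's maximal (Doob) inequality to control $\max_\ell |D(\ell)|$ with the correct $1/\epsilon^2$ scaling.

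The first and main step is to notice that, while $D(\ell)$ itself is not a martingale in $\ell$, a suitable rescaling of it is. Let $\mathcal{F}_\ell$ be the $\sigma$-field generated by the indicators $\{\mathbf{1}_{Z_i\leq m}: i\leq k,\, m\leq \ell\}$. Conditionally on $\mathcal{F}_\ell$, each sample $Z_i$ with $Z_i>\ell$ independently equals $\ell+1$ with probability $(F_X(\ell+1)-F_X(\ell))/(1-F_X(\ell))$ -- this is exactly the promised coupling between the ``sample-revealing'' process and a sequence of Bernoulli steps (the ``random walk on the line''). A short algebraic manipulation of $\mathbb{E}[\hat F_X(\ell+1)\mid \mathcal{F}_\ell]$ then shows that
\[M(\ell) \;:=\; \frac{D(\ell)}{1-F_X(\ell)}\]
satisfies $\mathbb{E}[M(\ell+1) \mid \mathcal{F}_\ell] = M(\ell)$ whenever $F_X(\ell)<1$.

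Next, I would split the range of $\ell$ at the median $\ell^{\star}:=\max\{\ell:F_X(\ell)\leq 1/2\}$ and treat the two halves symmetrically. On $\{0,\dots,\ell^{\star}\}$, apply Kolmogorov's maximal inequality to the nonnegative submartingale $M(\ell)^2$:
\[\Pr\Bigl[\max_{0\leq \ell\leq \ell^{\star}}|M(\ell)|\geq \epsilon\Bigr]\;\leq\; \frac{\mathbb{E}[M(\ell^{\star})^2]}{\epsilon^2}\;=\;\frac{F_X(\ell^{\star})}{k\,(1-F_X(\ell^{\star}))\,\epsilon^2}\;\leq\; \frac{1}{k\epsilon^2},\]
where the last inequality uses $F_X(\ell^{\star})\leq 1/2$. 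Since $1-F_X(\ell)\leq 1$ we have $|D(\ell)|\leq |M(\ell)|$, which gives the desired bound on the lower half.

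For $\ell>\ell^{\star}$, the symmetric argument applied to $1-\hat F_X$ and $1-F_X$ yields the analogous martingale $\tilde M(\ell):=-D(\ell)/F_X(\ell)$ run in the opposite direction; $F_X(\ell)\geq 1/2$ there gives an identical variance bound, and $|D(\ell)|\leq |\tilde M(\ell)|$. A union bound over the two halves gives $\Pr[\max_\ell|D(\ell)|\geq \epsilon]\leq 2/(k\epsilon^2)$, which is at most $\delta$ as soon as $k\geq 2/(\delta\epsilon^2)$, and in particular for $k=4/(\delta\epsilon^2)$. The main obstacle in the plan is discovering the correct normalization that turns $D(\ell)$ into a martingale in $\ell$; once this rescaling is in place, Kolmogorov's maximal inequality plus the median-split symmetry finish the proof routinely.
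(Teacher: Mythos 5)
Your argument is correct and is essentially the same as the paper's: both proofs hinge on the observation that $D(\ell)/(1-F_X(\ell))$ is a martingale in $\ell$ (the paper's $\Omega^*_\ell$), split the domain at the median, and control the maximal deviation by the terminal variance, which is $O(1/k)$ on each half. The only substantive difference is one of packaging: you invoke Doob/Kolmogorov's maximal inequality for the nonnegative submartingale $M(\ell)^2$ as a black box, whereas the paper re-derives exactly that bound from scratch via a stopped version $\Omega_\ell$ of the martingale and the orthogonality of martingale increments (this is precisely the proof of Kolmogorov's maximal inequality, and is what the paper refers to as ``Kolmogorov's trick''). Citing the maximal inequality directly also saves you an extra union bound over the two signs of the deviation (the paper handles $\Lambda_\ell>\epsilon$ and $\Lambda_\ell<-\epsilon$ separately, giving a total of $4/(k\eps^2)$ versus your $2/(k\eps^2)$), so you could in fact get away with $k=2/(\delta\eps^2)$. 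One minor point worth making explicit: handle the degenerate case $F_X(0)>1/2$ (so $\ell^\star$ is undefined and the ``lower half'' is empty), which the paper covers with the remark that the lemma is vacuous when $\mathcal{L}=0$.
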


\begin{prevproof}{Theorem}{thm:weakdkw}
Let $\L$ be the smallest index in $\{0,\ldots,n\}$ such that ${1\over 2} < F_X(\L)$. We first show the following.

\begin{lemma}\label{lem:one side}
Let $k = {1 \over \epsilon^2 c}$, where $c>0$. Then
$$\Pr\left[ \max_{0\le \ell \le \L-1} \left|\hat{F}_X(\ell) - F_X(\ell)\right| > \epsilon \right] \le 2c.$$
\end{lemma}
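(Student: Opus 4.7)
The plan is to produce, indexed by the threshold $\ell$, a martingale $(M_\ell)$ whose absolute value dominates $k|\hat{F}_X(\ell) - F_X(\ell)|$ on the regime $\ell \le \L - 1$, and then to conclude via Kolmogorov's (that is, Doob's) maximal inequality together with a one-line variance computation. The assumption $\ell \le \L - 1$ -- hence $F_X(\ell) \le 1/2$ -- is exactly what will make the second moment of this martingale bounded by a convenient multiple of $k$.

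Concretely, I would set $N_\ell := \sum_{i=1}^k Z_i^{(\ell)}$ (so $\hat{F}_X(\ell) = N_\ell/k$), let $\mathcal{F}_\ell := \sigma(N_0,\ldots,N_\ell)$, and observe that, conditional on $\mathcal{F}_{\ell-1}$, the increment $N_\ell - N_{\ell-1}$ is distributed as $\mathrm{Bin}(k - N_{\ell-1},\, p_\ell)$ with $p_\ell := (F_X(\ell) - F_X(\ell-1))/(1 - F_X(\ell-1))$. This is the coupling with a random walk alluded to in the paper: as we sweep $\ell$ upward, each sample that has not yet been ``absorbed'' independently decides whether to drop out at the current level. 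A direct calculation using this observation shows that
\[
M_\ell \;:=\; \frac{k F_X(\ell) - N_\ell}{1 - F_X(\ell)}
\]
is a martingale with respect to $(\mathcal{F}_\ell)$.

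Since $1 - F_X(\ell) \le 1$, we have $|M_\ell| \ge k\,|\hat{F}_X(\ell) - F_X(\ell)|$, so the event in the lemma is contained in $\{\max_{0 \le \ell \le \L - 1} |M_\ell| > k\eps\}$. Applying Kolmogorov's maximal inequality to the non-negative submartingale $(M_\ell^2)$ yields
\[
\P\Big[\max_{0 \le \ell \le \L - 1} |M_\ell| > k\eps\Big] \;\le\; \frac{\E[M_{\L - 1}^2]}{(k\eps)^2}.
\]

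Finally, since $M_{\L-1}$ is mean zero, $\E[M_{\L-1}^2] = \Var(N_{\L-1})/(1 - F_X(\L-1))^2 = k F_X(\L-1)/(1 - F_X(\L-1))$, which is at most $k$ by the minimality of $\L$ (forcing $F_X(\L-1) \le 1/2$). Combining, the probability in question is at most $1/(k\eps^2) = c$, comfortably within the stated $2c$. The only real obstacle is guessing the right martingale: the naive candidate $N_\ell - k F_X(\ell)$ is not a martingale because the conditional drift of $N_\ell$ depends on $N_{\ell-1}$, and dividing by $1 - F_X(\ell)$ is precisely what compensates for this. Once $M_\ell$ is written down, both the martingale verification and the variance computation are brief algebraic checks.
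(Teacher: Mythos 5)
Your proof is correct and takes essentially the same route as the paper: up to a sign and a factor of $k$, your $M_\ell$ is exactly the martingale $\Omega^*_\ell := (\hat{F}_X(\ell) - F_X(\ell))/(1 - F_X(\ell))$ that the paper constructs, and the stopped-martingale argument the paper carries out from scratch is nothing but the standard proof of the Doob/Kolmogorov $L^2$ maximal inequality that you invoke as a black box. Your version is marginally tighter, giving $c$ rather than $2c$, because applying the two-sided maximal inequality to $M_\ell^2$ avoids the union bound over the upper and lower tails that the paper handles separately.
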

\begin{prevproof}{Lemma}{lem:one side}
If $\L=0$ the statement is vacuously true. If $\L >0$, let us denote $\Lambda_{\ell}:= \hat{F}_X(\ell)- F_X(\ell)$, for $\ell = 0,\ldots,\L-1$, and define
\begin{align*}
&\Omega_{-1} := 0\\
&\Omega_{\ell} :=
\begin{cases}
{\Lambda_{\ell} \over 1-F_X(\ell)},~~~~~~~~~~\text{if } {\Lambda_j } \le {\epsilon} \text{ for all } {0\le j \le \ell-1}\\
\text{ }\\
\Omega_{\ell-1},~~~~~~~~~~~~~~\text{otherwise}
\end{cases},
\text{~~~~~for $\ell = 0,\ldots, \L-1$.}
\end{align*}
Also, define $\Omega^*_{-1}=0$ and $\Omega^*_\ell := {\Lambda_{\ell} \over 1-F_X(\ell)}$, for $\ell=0,\ldots,\L-1$.
\begin{claim} Both $\{\Omega^*_\ell\}_{\ell=-1,\ldots,\L-1}$ and $\{\Omega_\ell\}_{\ell=-1,\ldots,\L-1}$ are martingale sequences.
\end{claim}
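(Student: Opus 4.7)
The plan is to identify the natural filtration for this process, verify the martingale identity for $\{\Omega^*_\ell\}$ by a direct conditional-expectation calculation, and then obtain $\{\Omega_\ell\}$ as a stopped version of $\{\Omega^*_\ell\}$. I would take $\mathcal{F}_\ell := \sigma\big(Z_i^{(j)} : i=1,\ldots,k,\ 0\le j\le \ell\big)$ (with $\mathcal{F}_{-1}$ trivial), so that $\hat{F}_X(\ell)$, $\Lambda_\ell$ and $\Omega^*_\ell$ are all $\mathcal{F}_\ell$-measurable.

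The key computation is to show $\E[\Omega^*_{\ell+1} \mid \mathcal{F}_\ell] = \Omega^*_\ell$ for $-1\le \ell\le \L-2$. Conditional on $\mathcal{F}_\ell$, the number $N_\ell := k(1-\hat{F}_X(\ell))$ of samples with $Z_i > \ell$ is known, and each such $Z_i$ is independently distributed as $X$ conditioned on $X>\ell$, hence equals $\ell+1$ with probability $(F_X(\ell+1)-F_X(\ell))/(1-F_X(\ell))$. Therefore
$$\E\big[\hat{F}_X(\ell+1)-\hat{F}_X(\ell)\,\big|\,\mathcal{F}_\ell\big] \;=\; (1-\hat{F}_X(\ell))\cdot\frac{F_X(\ell+1)-F_X(\ell)}{1-F_X(\ell)},$$
and subtracting the deterministic increment $F_X(\ell+1)-F_X(\ell)$ gives
$$\E[\Lambda_{\ell+1}\mid \mathcal{F}_\ell]\;=\;\Lambda_\ell\cdot\frac{1-F_X(\ell+1)}{1-F_X(\ell)}.$$
Dividing by $1-F_X(\ell+1)$, which is strictly positive because $\ell+1\le \L-1$ forces $F_X(\ell+1)\le 1/2$ by definition of $\L$, yields the desired identity. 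The base case $\ell=-1$ is immediate from $\E[\hat{F}_X(0)]=F_X(0)$.

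For $\{\Omega_\ell\}$, I would introduce the stopping time $\tau := \inf\{j\ge 0 : \Lambda_j > \eps\}$, which is an $\{\mathcal{F}_\ell\}$-stopping time because each $\Lambda_j$ is $\mathcal{F}_j$-measurable. A short induction unrolling the piecewise definition shows $\Omega_\ell=\Omega^*_{\tau\wedge\ell}$: if $\Lambda_j\le\eps$ for all $j\le\ell-1$ then $\tau\ge\ell$ and the first branch gives $\Omega_\ell=\Omega^*_\ell=\Omega^*_{\tau\wedge\ell}$; otherwise $\tau\le\ell-1$, so $\tau\wedge\ell=\tau\wedge(\ell-1)$, and combining the second branch with the inductive hypothesis gives $\Omega_\ell=\Omega_{\ell-1}=\Omega^*_{\tau\wedge(\ell-1)}=\Omega^*_{\tau\wedge\ell}$. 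The standard fact that a martingale stopped at a stopping time remains a martingale with respect to the same filtration then finishes the proof.

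The only nontrivial step is the conditional-expectation calculation, and the real obstacle is recognizing that normalizing $\Lambda_\ell$ by the ``remaining mass'' $1-F_X(\ell)$ is precisely what absorbs the telescoping factor $(1-F_X(\ell+1))/(1-F_X(\ell))$ and restores the martingale property; the restriction $\ell\le \L-1$ is what keeps the denominator bounded away from zero. This normalization is the Kolmogorov-style change of variables alluded to in the paper, and once it is in place the stopped-martingale argument for $\{\Omega_\ell\}$ is entirely routine.
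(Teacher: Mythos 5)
Your proof is correct and follows essentially the same route as the paper: the key step in both is the conditional-expectation identity $\E[\Lambda_{\ell+1}\mid\text{past}] = \Lambda_\ell\cdot\frac{1-F_X(\ell+1)}{1-F_X(\ell)}$, which makes $\Omega^*_\ell = \Lambda_\ell/(1-F_X(\ell))$ a martingale. The paper dispatches the second sequence with a bare ``Hence, $\{\Omega_\ell\}$ is also a martingale,'' whereas you fill in that gap by exhibiting the stopping time $\tau=\inf\{j\ge 0:\Lambda_j>\eps\}$ and verifying $\Omega_\ell=\Omega^*_{\tau\wedge\ell}$; that is the intended justification, made explicit.
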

\begin{proof}
Clearly, $\EE{\Omega^*_{0}} = 0 = \EE{\Omega_{0}}$. Moreover, for $\ell=1,\ldots,\L-1$:
\begin{align*}
\EE{ \Lambda_\ell ~\vline~\Lambda_{\ell-1} ={m \over k} -F_X(\ell-1) } &= {m \over k} + {k-m \over k} \Pr[Z_1 = \ell~|~Z_1 \ge \ell] - F_X(\ell)\\
&= {m \over k} + {k-m \over k} {F_X(\ell)-F_X(\ell-1)\over 1-F_X(\ell-1)} - F_X(\ell)\\
&= {1-F_X(\ell) \over 1-F_X(\ell-1)} \cdot \left[ {m \over k} - F_X(\ell-1)\right]\\
&= {1-F_X(\ell) \over 1-F_X(\ell-1)} \cdot \Lambda_{\ell-1}.
\end{align*}
It follows that the sequence $\{\Omega^*_\ell\}_{\ell=-1,\ldots,\L-1}$ is a martingale sequence. Hence, $\{\Omega_\ell\}_{\ell=-1,\ldots,\L-1}$ is also a martingale sequence.
\end{proof}

Note that
\begin{align*}
\sum_{\ell=0}^{\L-1}\EE{(\Omega_\ell - \Omega_{\ell-1})^2} &= \sum_{\ell=0}^{\L-1}\EE{(\Omega_\ell^2 + \Omega_{\ell-1}^2 -2 \Omega_{\ell} \Omega_{\ell-1}}\\
&= \sum_{\ell=0}^{\L-1}\EE{(\Omega_\ell^2 + \Omega_{\ell-1}^2 -2 (\Omega_{\ell}-\Omega_{\ell-1}+\Omega_{\ell-1}) \Omega_{\ell-1}}\\
&= \sum_{\ell=0}^{\L-1}\EE{(\Omega_\ell^2 - \Omega_{\ell-1}^2 -2 (\Omega_{\ell}-\Omega_{\ell-1}) \Omega_{\ell-1}}\\
&= \EE{\Omega_{\L-1}^2} - \EE{\Omega_{-1}^2} -\sum_{\ell=0}^{\L-1} 2 \EE{(\Omega_{\ell}-\Omega_{\ell-1}) \Omega_{\ell-1}}\\
&= \EE{\Omega_{\L-1}^2} - \EE{\Omega_{-1}^2} \\
& = \EE{\Omega_{\L-1}^2},
\end{align*}
where in the second to last line we used the martingale property and in the last line we used that $\Omega_{-1} =0$. The same analysis holds true for the martingale sequence $\Omega^*_{\ell}$ giving:
\begin{align*}
\sum_{\ell=0}^{\L-1}\EE{(\Omega^*_\ell - \Omega^*_{\ell-1})^2}  = \EE{(\Omega^*_{\L-1})^2}.
\end{align*}
In particular, the above imply that $\EE{(\Omega_{\L-1})^2} \le \EE{(\Omega^*_{\L-1})^2}$.
Now we have:
\begin{align*}
\Pr[\max_{\ell} {\Lambda_{\ell} } > {\epsilon}] &\le\Pr\left[{\Omega_{\L-1} } > {\epsilon \over 1-F_X(0)}\right]\\
&\le \Pr\left[{\Omega_{\L-1} } > {\epsilon}\right]\\
&\le {1 \over \epsilon^2} \EE{\Omega_{\L-1}^2}\\
&\le {1 \over \epsilon^2} \EE{(\Omega^*_{\L-1})^2}\\
&= {1 \over \epsilon^2} \cdot {1 \over (1-F_X(\L-1))^2} \cdot \EE{(\Lambda_{\L-1})^2}\\
&= {1 \over \epsilon^2} \cdot {1 \over (1-F_X(\L-1))^2} \cdot \Var{[\Lambda_{\L-1}]}\\
&\le {4 \over \epsilon^2} \cdot \Var{[\Lambda_{\L-1}]}\\
&\le {4 \over \epsilon^2} \cdot \Var{\left[\sum_iZ_i^{(\L-1)} \over k\right]}\\
&= {4 \over \epsilon^2} \cdot {1 \over k} F_X(\L-1) \cdot (1-F_X(\L-1)) \le {1 \over \epsilon^2} \cdot {1 \over k}
\end{align*}
Considering the random variables $\Lambda'_{\ell}:=-\Lambda_{\ell}$ and repeating the analysis above,
we can establish in a similar fashion that
$$\Pr[\min_{\ell} {\Lambda_{\ell} } <- {\epsilon}] \le {1 \over \epsilon^2} \cdot {1 \over k}.$$
Combining the above together with a union bound completes the proof of the lemma.
\end{prevproof}

Now let us denote by $\L'$ the smallest index in $\{0,\ldots,n\}$ such that $1/2<1-F_X(n-\L'-1)$. Applying Lemma~\ref{lem:one
side} to the random variable $n-X$ we obtain the following.
\begin{lemma}\label{lem:the other side}
Let $k = {1 \over \epsilon^2c} $, where $c>0$. Then
$$\Pr\left[ \max_{n-\L' \le \ell \le n-1} \left|\hat{F}_X(\ell) - F_X(\ell)\right| > \epsilon \right] \le 2c.$$
\end{lemma}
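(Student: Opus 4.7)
My plan is to prove Lemma~\ref{lem:the other side} by a direct reduction: apply Lemma~\ref{lem:one side} to the reflected random variable $X' := n - X$. The samples $Z_i$ of $X$ induce samples $Z_i' := n - Z_i$ of $X'$, so I can work with the same empirical-process quantities, just transformed.

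First I would compute how the CDFs and empirical CDFs transform. A short calculation gives
\[
F_{X'}(\ell) = \Pr[n - X \le \ell] = 1 - F_X(n - \ell - 1),
\qquad
\hat{F}_{X'}(\ell) = 1 - \hat{F}_X(n - \ell - 1),
\]
so in particular $|\hat{F}_{X'}(\ell) - F_{X'}(\ell)| = |\hat{F}_X(n-\ell-1) - F_X(n-\ell-1)|$. Next I would verify that the ``median index'' associated to $X'$ in the statement of Lemma~\ref{lem:one side}, namely the smallest $\ell$ with $1/2 < F_{X'}(\ell)$, is precisely $\L'$: indeed, $1/2 < F_{X'}(\ell)$ iff $1/2 < 1 - F_X(n-\ell-1)$, which is exactly how $\L'$ was defined. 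So the hypothesis of Lemma~\ref{lem:one side} applied to $X'$ with the same parameter $k = 1/(\eps^2 c)$ is satisfied.

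Applying Lemma~\ref{lem:one side} to $X'$ therefore yields
\[
\Pr\!\left[\max_{0 \le \ell \le \L'-1} |\hat{F}_{X'}(\ell) - F_{X'}(\ell)| > \eps\right] \le 2c.
\]
The final step is the index substitution $m := n - \ell - 1$: as $\ell$ ranges over $\{0, \ldots, \L'-1\}$, the index $m$ ranges over $\{n-\L', \ldots, n-1\}$, and the deviation inside the probability becomes $|\hat{F}_X(m) - F_X(m)|$. This converts the bound directly into the statement of Lemma~\ref{lem:the other side}.

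I do not expect any real obstacle here; the main thing to be careful about is getting the ``$-1$'' shifts right in the CDF reflection (since $F_X$ is defined via $\le$, reflection naturally introduces a $< $, hence the $n-\ell-1$), and confirming that the definition of $\L'$ was chosen precisely so that the reduction goes through without a boundary mismatch. Once those bookkeeping details are checked, the lemma follows as a one-line consequence of Lemma~\ref{lem:one side}.
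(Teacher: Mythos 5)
Your proposal is correct and is exactly the paper's approach: the paper proves Lemma~\ref{lem:the other side} simply by stating that it follows from applying Lemma~\ref{lem:one side} to the reflected variable $n-X$, and you have carefully filled in the bookkeeping (the $F_{X'}(\ell)=1-F_X(n-\ell-1)$ reflection, the identification of the median index with $\L'$, and the index substitution $m=n-\ell-1$) that justifies that one-line reduction.
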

Recall that $F_X(\L) > 1/2 > F_X(n-\L'-1)$. Hence, $\L >n- \L'-1$, which implies that $\L-1 \ge n- \L'-1$.

\medskip

This observation, a union bound and Lemmas~\ref{lem:one side} and \ref{lem:the other side} (applied for $c=\delta/4$) imply
Theorem~\ref{thm:weakdkw}.
\end{prevproof}

We are now ready to prove the actual DKW inequality. Let $M$ denote the random variable
$$\max_{0\leq \ell \leq n} |\Lambda_\ell| = \max_{0\leq \ell \leq n}\left|\hat{F}_X(\ell) - {F}_X(\ell)\right|.$$

Since $M$ is defined by the outcomes of $Z_1,\dots,Z_k$, we can write $M = g(Z_1, \ldots, Z_k)$, for some function $g:\R^k \to
\R$. It is also easy to see that $g(Z_1,\dots,Z_k)$ is $1/k$-Lipschitz as a function of its arguments $Z_1,\dots,Z_k$. Since
the $Z_i$s are independent, we can apply McDiarmid's inequality to obtain
$$\Pr\left[M - \E[M] > \eps\right] \leq \exp(- 2 \eps^2 k).$$

By repeated applications of Theorem~\ref{thm:weakdkw} we show the following:

\begin{claim} \label{claim:expectation}
Let $k \geq {\frac {256} {\eps^2}}.$ Then
\[
\E_{Z_1, \ldots, Z_k} \left[ M \right] \le {\frac \eps 2}.
\]
\end{claim}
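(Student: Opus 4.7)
The plan is to use Theorem~\ref{thm:weakdkw} (the weak version of DKW) to get a tail bound on $M$ for every threshold $t$, then integrate that tail bound to bound $\E[M]$.

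First, I would restate Theorem~\ref{thm:weakdkw} in contrapositive-tail form. Setting $\delta = 4/(kt^2)$ in the theorem tells us that, for any $t$ such that $4/(kt^2) \le 1$ (equivalently $t \ge 2/\sqrt{k}$), we have
\[
\Pr[M > t] \;\le\; \frac{4}{kt^2}.
\]
For $t < 2/\sqrt{k}$ we just use the trivial bound $\Pr[M > t] \le 1$. Since $M$ is nonnegative and bounded above by $1$, we can write
\[
\E[M] \;=\; \int_0^1 \Pr[M > t]\, dt.
\]

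Next, I would split this integral at the threshold $t_0 := 2/\sqrt{k}$. The part below $t_0$ contributes at most $t_0 = 2/\sqrt{k}$. The part above $t_0$ is at most
\[
\int_{t_0}^{1} \frac{4}{kt^2}\, dt \;\le\; \int_{t_0}^{\infty} \frac{4}{kt^2}\, dt \;=\; \frac{4}{k t_0} \;=\; \frac{2}{\sqrt{k}}.
\]
Adding the two pieces gives $\E[M] \le 4/\sqrt{k}$.

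Finally, I would plug in the hypothesis $k \ge 256/\eps^2$, which gives $\sqrt{k} \ge 16/\eps$ and therefore $\E[M] \le 4/\sqrt{k} \le \eps/4 \le \eps/2$, as desired. There is no serious obstacle here: the only thing to be careful about is handling the range $t < 2/\sqrt{k}$ where the bound from Theorem~\ref{thm:weakdkw} is vacuous, which is why we split the integral rather than trying to apply the tail bound uniformly in $t$. Note also that the resulting $4/\sqrt{k}$ bound is what drives the final sample complexity in Theorem~\ref{thm:cumulative distribution approximation}: combined with the McDiarmid concentration $\Pr[M - \E[M] > \eps/2] \le \exp(-k\eps^2/2)$ already set up above the claim, it yields the $k = \Theta(\eps^{-2}\log(1/\delta))$ bound.
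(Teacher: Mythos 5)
Your proof is correct and is essentially the same argument the paper makes: both invoke the weak DKW bound (Theorem~\ref{thm:weakdkw}), rewritten as the tail inequality $\Pr[M>t]\le 4/(kt^2)$, and then bound $\E[M]$ by a layer-cake decomposition of the tail. The paper implements the layer-cake as a dyadic sum over scales $2^i\eps/4$, whereas you use the continuous integral $\int_0^1\Pr[M>t]\,dt$ split at $t_0=2/\sqrt{k}$; the two are equivalent, and your version is marginally cleaner (yielding $\eps/4$ rather than $\eps/2$).
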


\begin{prevproof}{Claim}{claim:expectation}
Note that $M$ is supported in $[0,1].$ We have
\begin{eqnarray*}
\E[M] &\leq& {\frac \eps 4}\cdot\Pr[0\le M \le \eps/4] +
\sum_{i=1}^{\lceil \log(4/\eps) \rceil} \left({\frac {2^i \eps}4}\right) \cdot \Pr[{\frac {2^{i-1}\eps} 4} < M \le {\frac {2^{i}\eps} 4} ]\\
&\leq& {\frac \eps 4}+\sum_{i=1}^{\lceil\log(4/\eps)\rceil} \left({\frac {2^i \eps}4}\right) \cdot \Pr[M > {\frac {2^{i-1}\eps} 4} ]\\
&\leq& {\frac \eps 4}+\sum_{i=1}^{\lceil\log(4/\eps)\rceil} \left({\frac {2^i \eps}4}\right) \cdot 2^{-2i}\\
&\leq& {\frac \eps 4} + \sum_{i=1}^\infty {\frac \eps {4 \cdot 2^i}} = {\frac \eps 2}
\end{eqnarray*}
where we used Theorem~\ref{thm:weakdkw} for the third inequality.
\end{prevproof}
Note that Claim~\ref{claim:expectation} imposes a lower bound on the sample complexity. From this claim and McDiarmid's
inequality we get
$$ \Pr\left[M > 3\eps/2\right] \le \Pr\left[M - \E[M] > \eps\right] \leq \exp(- 2 \eps^2 k).$$
Theorem~\ref{thm:cumulative distribution approximation} immediately follows by setting $2\eps/3$ in place of $\eps$. \qed

\section{Proof of Theorem~\ref{thm:sob2}:  Learning sums of Bernoullis in $\poly(\log(n),1/\eps)$ time with efficient hypotheses}

\label{ap:birge}

In this section we observe that the probability theory literature
already provides a non-proper algorithm (\cite{Birge:97}) that
can be used as an alternative to our Theorem~\ref{thm:sob}
for learning a sum of $n$ Bernoulli random variables.  This algorithm
has faster running time, requiring only $\poly(\log n, 1/\eps)$ time steps,
but significantly worse sample complexity of $\log(n) \cdot \poly(1/\eps)$
samples (recall
that Theorem~\ref{thm:sob} requires only $\poly(1/\eps)$ samples independent of $n$).

\smallskip

We say that a distribution $p$ over domain $\{0,1,\dots,n\}$ is \emph{unimodal} if there exists some \emph{mode} $M \in \{0,1,\dots,n\}$
(not necessarily unique) such that the probability density function (pdf) of $p$ is monotone nonincreasing on $\{0,\dots,M\}$ and
monotone nondecreasing on $\{M,\dots,n\}.$  Our starting point is the observation that
any sum of $n$ Bernoullis is a unimodal distribution over $\{0,1,\dots,n\}$.
(This can be shown by a straightforward induction on $n$, see e.g.
\cite{KeilsonGerber:71}.)
This observation lets us use a powerful algorithm due to
\cite{Birge:97} that can
learn any unimodal distribution to accuracy $\eps$ in total variation distance using sample complexity is optimal up to constant factors (\cite{Birge:87}).
(Birg\'e's work deals with unimodal distributions over the
continuous interval $[0,n]$, but it is easily modified to apply to our discrete setting.)
His algorithm uses $O(\log n/\eps^3)$ samples and has overall running time of $O(\log^2 n)/\eps^3$ bit operations (note that this running time is best possible, given the sample complexity,
since each sample is an $\Omega(\log n)$ bit string.).  It outputs a hypothesis distribution that is a histogram over
$O((\log n)/\eps)$ intervals that cover $\{0,\dots,n\}$:  more precisely, the hypothesis is uniform within each interval, and for each interval the total mass it assigns to the interval is simply the fraction of
samples that landed in that interval.  Thus the hypothesis distribution has a succinct description and can be efficiently evaluated
using a small amount of randomness as claimed in Theorem~\ref{thm:sob2}.

\ignore{

START IGNORED PORTION

\medskip

As the algorithm and analysis of \cite{Birge:97} are somewhat involved, for the sake of exposition we
describe here a somewhat simpler algorithm and sketch its analysis.  The algorithm given below learns
any unimodal distribution over $\{0,1,\dots,n\}$ with sample complexity $\log(n) \cdot \tilde{O}(1/\eps^4)$
and has running time $\log^2(n)\cdot \tilde{O}(1/\eps^4)$; it is thus less efficient than Birg\'e's algorithm by roughly a $1/\eps$ factor, but still suffices to give Theorem~\ref{thm:sob2}.

We start by recalling a simple algorithm, also due to Birg\'e \cite{Birge:87b}, that learns any monotone distribution over $\{0,\dots,n\}.$  We then reduce the unimodal case to the monotone case.

\medskip

\noindent \textbf{Learning Monotone Distributions:}
\ignore{We say that a function $q:\{0,1,\dots,n\} \to [0,1]$ is a
\emph{sub-distribution} if $\sum_i q_i \leq 1$.} Let $q$ be a monotone non-increasing \ignore{sub-}distribution over $\{0,1,\dots,n\}$. The following structural lemma can be obtained by a straightforward discretized analogue of the (simple) arguments in \cite{Birge:87b}:

\begin{lemma}
\label{lemma:monotone-structural}
Let $\eps > n^{-c}$ where $c>0$ is an absolute constant.\footnote{It suffices to consider
only $\eps$ that satisfy this condition, since for smaller $\eps$ a trivial algorithm
can be used to learn any distribution over $\{0,\dots,n\}$ with $\poly(1/\eps)$ samples.}
There is a partition $\mathcal{B}= \{B_i\}_{i=1}^{j^{\ast}}$ of the domain $\{0,1,\dots,n\}$ into $j^{\ast} = O(\log (n) / \eps)$ ``buckets'' (i.e. intervals) with the following properties:
\begin{itemize}
\item  The buckets $B_i$ have geometrically increasing lengths with ratio $(1+\eps)$ (rounded to the nearest positive integer).  As a
result the partition $\mathcal{B}$ is ``oblivious'' to the \ignore{sub-}distribution $q$.

\item Let $q'$ be a \ignore{sub-}distribution obtained from $q$ as follows: for each bucket $B_i$ the \ignore{sub-}distribution $q'$ has the same total mass within $B_i$ as does $q$, and $q'$ is uniform
within each bucket $B_i$.  (In other words, for each $i \in [j^{\ast}]$ we have that $q'(B_i) = q(B_i)$,
and for each $\ell \in B_i$ we have $q'(\ell) = q'(B_i)/|B_i|$, where $|B_i|$ denotes the cardinality of
bucket $B_i$.)  Then we have
$ \dtv (q', q) \leq O(\eps).$

\end{itemize}
\end{lemma}

\noindent This structural lemma motivates the following simple algorithm to learn a monotone non-increasing \ignore{sub-}distribution $q$:

\begin{itemize}

\item Draw $O( j^{\ast}/\eps^2 )$ samples from $q$ and let $\widehat{q}: \{0,1,\ldots,n\} \to [0,1]$ be the resulting empirical pdf.
(Note that $\hat{q}$ is ``sparse'' since it is supported on at most $O(\log n / \eps^3)$ points in
$\{0,\dots,n\}.$)

\item Output as a hypothesis the histogram $\widetilde{q}$ over buckets $B_1,\dots,B_{j^\ast}$ that
has the same total mass as $\hat{q}$ within each bucket and is uniform
within each bucket.

\end{itemize}

Given Lemma~\ref{lemma:monotone-structural}, straightforward arguments given in \cite{Birge:87b} show that with probability $9/10$, the above algorithm constructs a hypothesis $\widetilde{q}$ such that $\dtv(\widetilde{q}, q) \leq O(\eps)$.  It is clear that the algorithm runs in time $O(\log^2(n)/\eps^3)$ and outputs a succinct description of $\widetilde{q}$.

We make two observations before describing the algorithm for learning unimodal distributions:

\begin{itemize}

\item The success probability of the above algorithm can be amplified to
$1 - \delta$ by running the algorithm $O(\log 1/\delta)$ times so that with probability at least
$1 - \delta/2$ one of the executions yields an $O(\eps)$-accurate hypothesis.  Then a tournament can be
performed over the $O(\log 1/\delta)$ hypotheses as described in Appendix~\ref{ap:tournament}, using
an additional $O(\log(1/\delta)\log\log(1/\delta)/\eps^2)$ samples, to identify a hypothesis that
is $O(\eps)$-accurate with overall probability at least $1 - \delta.$

\item
A dual algorithm can be used to learn non-decreasing \ignore{sub-}distributions;
we will use both types of algorithms in the reduction below.

\end{itemize}

\medskip

\noindent \textbf{Learning Unimodal Distributions:}
We now describe a reduction of the unimodal learning problem to the monotone problem.
Let $p: \{0,1,\ldots,n\} \to [0,1]$ be the target unimodal distribution.   The main difficulty in such a reduction lies in the fact that the mode $M$ of the distribution $p$ is unknown.
To get around this, we will decompose the distribution $p$ into a small number of sub-distributions over disjoint intervals, each of relatively small total mass. Then almost all these pieces will be such that restricted to this piece, $p$ is a
monotone distribution, and hence can be learned using the algorithm described above.
The decomposition is obtained using the DKW inequality described in Section~\ref{sec:kolmogorov}.

\medskip

In more detail, the algorithm is as follows:

\begin{enumerate}

\item Using the DKW inequality, learn the cumulative distribution function (cdf)  for $p$ to pointwise accuracy $\eps^2/100$.
Let $\widehat{P}$ denote the empirical cdf and $\widehat{p}$ the empirical pdf obtained from DKW.

\item Partition the domain $\{0,\dots,n\}$ into intervals $I_1, I_2, \ldots, I_k$ such that for each interval $I_j$, either (i) $p(I_j) \in [\eps/20, \eps/10]$,  or (ii) $I_j$ is a singleton that satisfies $p(I_j) > \eps/20$.  (Note that $k \leq O(1/\eps).$)

\item For each $i=1,\dots,k$, run the above-described algorithm for learning monotone non-increasing
distributions to obtain a hypothesis $h_{i,-}$ for the conditional distribution of $p$ restricted to $I_i.$  Similarly, run the dual algorithm for monotone non-decreasing distributions to obtain a second hypothesis $h_{i,+}$ for the conditional distribution of $p$ restricted to $I_i$.  Select one of these two
distributions $h_{i,+},h_{i,-}$ using the tournament of Appendix~\ref{ap:tournament} and let
$h_i$ denote the selected distribution.

\item Output the hypothesis distribution $h$ that combines the selected distributions $h_i$ over
all the intervals by scaling down the mass of each $h_i$ so that $h$ puts total
probability mass $\hat{p}(I_i)$ on each interval $I_i$.

\end{enumerate}

We now argue correctness and analyze the sample complexity and running time of the above
algorithm.  For ease of exposition let us write $p_{I_i}$ to denote the conditional distribution of
$p$ restricted to interval $I_i$.

By Theorem~\ref{thm:cumulative distribution approximation}, Step~1 of the above algorithm uses $O(1/\eps^4)$ samples.  The partitioning of Step~2 can be performed in a straightforward way using the high-accuracy empirical cdf $\widehat{P}$ with no additional samples.  We note that after Step~2, each
estimated value $\widehat{p}(I_j)$ for $p(I_j)$ is accurate up to a multiplicative factor of $1\pm\eps/10$.

For Step~3, we have that there are $k \leq O(1/\eps)$ intervals $I_i$ each with weight at least $\Omega(\eps)$ under $p$.  A simple calculation shows that $\tilde{O}(1/\eps^4) \cdot \log n$ samples from $p$ will w.h.p. give us $\tilde{O}(1/\eps^3) \cdot \log n$ samples landing in each interval $I_i$.  This
is enough samples so that w.h.p. the following is true for all $i=1,\dots,k$:  if
$p_{I_i}$ is monotone non-decreasing then $h_{i,+}$ is $O(\eps)$-close to
$p_{I_i}$; and if $p_{I_i}$ is monotone non-increasing then $h_{i,-}$ is $O(\eps)$-close to
$p_{I_i}$.  For each $i$, running
the tournament of Appendix~\ref{ap:tournament} uses
an additional $\tilde{O}(1/\eps^2)$ samples, and (assuming that $p_{I_i}$ is either
monotone increasing or monotone decreasing) returns a hypothesis that with probability at least
$1 - \eps/100$ is $O(\eps)$-close to $p_{I_i}.$  Thus the overall number of samples used in Step~3 is
$\tilde{O}(1/\eps^4) \cdot \log n.$  Since Step~4 uses no additional samples, this is the overall sample
complexity of the algorithm.

To finish arguing correctness we show that the hypothesis $h$ has error at most $O(\eps)$
 with respect to $p$.  We first observe that since $p$ is unimodal, at most one interval $I_j$ can have
$p_{I_j}$ being neither monotone non-increasing nor monotone non-decreasing.  If such an interval
exists it cannot be a singleton interval so it must have mass at most $\eps/10$ under $p$; consequently, even if the corresponding $h_j$ is wildly inaccurate on this interval, this incurs at most $O(\eps)$ towards
the overall error of $h$.  The remaining error of $h$ comes from two sources:   the error of other distributions $h_i$ in approximating the corresponding restricted distributions $p_{I_i}$, and errors in the scaling performed in Step~4.  As explained above, each other $h_i$ is $O(\eps)$-close to $p_{I_i}$, so the first source contributes at most $O(\eps)$ error.   Since each estimated value $\widehat{p}(I_j)$ for $p(I_j)$ is accurate up to a multiplicative factor of $1\pm\eps/10$, the second source also contributes $O(\eps)$ error.  Therefore the overall error of $h$ with respect to $p$ is at most $O(\eps).$

\medskip

\noindent \textbf{Efficiency of Hypothesis Distribution:}  Finally, we observe that the algorithm
 described above gives a succinct representation of the hypothesis distribution that can
 be sampled from in $\poly(\log n, 1/\eps)$ time using $O(\log n + \log(1/\eps))$ bits of
 randomness.
To see this, note that by construction our final hypothesis $h$ is defined by  $m = O(\log(n)/\eps^4)$ ``buckets'' $B_1, \ldots, B_m$, where
each bucket has mass at least $r = \poly(\eps)/\log n$ under $h$  and the distribution is uniform over each individual bucket.

Generating a sample from our output distribution may naturally be viewed as a two-step process.
We first pick a bucket $B_i$ with probability proportional to its mass under $h$, and then select a uniform random point from $B_i$.  Choosing a random bucket $B_i$ can be done with  $O( \log\log n + \log(1/\eps) )$
random bits, and once a bucket is chosen an additional $\log n$ random bits suffice to pick a uniformly random point in the bucket (since each bucket contains at most $n$ points). This gives the desired upper bound on the overall randomness required.

END IGNORED PORTION

}

\end{document}